\documentclass{article}



\usepackage[final]{neurips_2025}


\usepackage[utf8]{inputenc} 
\usepackage[T1]{fontenc}    
\usepackage{hyperref}       
\usepackage{url}            
\usepackage{booktabs}       
\usepackage{amsfonts}       
\usepackage{nicefrac}       
\usepackage{microtype}      
\usepackage{xcolor}         

\usepackage{hyperref}
\usepackage{url} 
\usepackage{amsfonts} 
\usepackage{amssymb,amsmath,amsthm} 
\usepackage{mathabx}
\usepackage{mathrsfs}
\usepackage{xcolor}  
\usepackage{enumerate}
\usepackage{graphicx}
\usepackage{float}  
\usepackage{booktabs} 
\usepackage[nameinlink]{cleveref}
\usepackage[ruled,linesnumbered]{algorithm2e}
\crefname{algocf}{alg.}{algs.}
\Crefname{algocf}{Algorithm}{Algorithms}
\Crefname{equation}{Eq.}{Eqs.} 
\usepackage{attrib} 
\usepackage{subcaption}
 

\usepackage{amsmath,amsfonts,bm}









\def\eqref#1{equation~\ref{#1}}









\def\1{\bm{1}}










\DeclareMathAlphabet{\mathsfit}{\encodingdefault}{\sfdefault}{m}{sl}
\SetMathAlphabet{\mathsfit}{bold}{\encodingdefault}{\sfdefault}{bx}{n}











\newcommand{\Var}{\mathrm{Var}}



\newcommand{\EE}{\mathbb{E}}

\newcommand{\calO}{\mathcal{O}}





\newcommand{\PP}{\mathbb{P}}

\newcommand{\RR}{\mathbb{R}}
\newcommand{\NN}{\mathbb{N}}

\newcommand{\Unif}{\operatorname{Uniform}} 

\makeatletter
\newcommand*\dotp{\mathpalette\dotp@{.5}}
\newcommand*\dotp@[2]{\mathbin{\vcenter{\hbox{\scalebox{#2}{$\m@th#1\bullet$}}}}}
\makeatother

\newcommand\SimIID{\stackrel{iid}{\sim}}








\usepackage[toc,page,header]{appendix}
\usepackage{minitoc}

\hypersetup{
	colorlinks,
	linkcolor={blue!75!black},
	citecolor={blue!75!black},
	urlcolor={blue!75!black}
}
 
\theoremstyle{definition}
\newtheorem{definition}{Definition}[section]
\newtheorem*{definition*}{Definition} 
\newtheorem{example}[definition]{Example}  
\theoremstyle{plain}
\newtheorem{theorem}[definition]{Theorem}

\newtheorem{lemma}[definition]{Lemma}
\newtheorem{proposition}[definition]{Proposition}
\newtheorem{corollary}[definition]{Corollary}

\newtheorem*{theorem*}{Theorem}
\newtheorem*{lemma*}{Lemma}
\newtheorem*{proposition*}{Proposition}
\newtheorem{corollary*}{Corollary}   
\newtheorem{assumption*}{Assumption}
\newtheorem{condition*}{Condition}
\newtheorem{exercise*}{Exercise}
\newtheorem*{example*}{Example} 
\theoremstyle{remark}

\usepackage{todonotes}
\usepackage{multirow}
\usepackage{makecell}
\usepackage{enumitem}
\usepackage[most]{tcolorbox}
\newtcolorbox{bluebox}{
  colback=blue!3!white,
  colframe=blue!60!black,
}

\newcommand{\centerbox}[1]{\begin{center}\begin{minipage}{0.86\linewidth}
\begin{bluebox} 
\textit{{#1}}
\end{bluebox} \end{minipage}\end{center}}
\PassOptionsToPackage{numbers}{natbib}
\newcommand{\sfP}{\mathsf{P}} 

\title{On the Optimal Construction of Unbiased Gradient Estimators for Zeroth-Order Optimization}

%

\author{%
  Shaocong Ma \\
  Department of Computer Science\\
  University of Maryland\\
  College Park, MD 20742, USA \\
  \texttt{scma0908@umd.edu} \\
  \And
  Heng Huang\thanks{This work was partially supported by NSF IIS 2347592, 2348169, DBI 2405416, CCF 2348306, CNS 2347617, RISE 2536663.}\\
  Department of Computer Science\\
  University of Maryland\\
  College Park, MD 20742, USA \\
  \texttt{heng@umd.edu} \\
}

\begin{document}

\doparttoc 
\faketableofcontents 

\maketitle

\begin{abstract}
  Zeroth-order optimization (ZOO) is an important framework for stochastic optimization when gradients are unavailable or expensive to compute. A potential limitation of existing ZOO methods is the bias inherent in most gradient estimators unless the perturbation stepsize vanishes. In this paper, we overcome this biasedness issue by proposing a novel family of \textit{unbiased} gradient estimators based solely on function evaluations. By reformulating directional derivatives as a telescoping series and sampling from carefully designed distributions, we construct estimators that eliminate bias while maintaining favorable variance. We analyze their theoretical properties, derive optimal scaling distributions and perturbation stepsizes of four specific constructions, and prove that SGD using the proposed estimators achieves optimal complexity for smooth non-convex objectives. Experiments on synthetic tasks and language model fine-tuning confirm the superior accuracy and convergence of our approach compared to standard methods.
\end{abstract}

\section{Introduction}
In this paper, we consider the problem of \textit{zeroth-order optimization (ZOO)}, where our goal is to solve the following  stochastic optimization problem:
\begin{align}\label{eq:optimization-problem}
\min_{x \in \RR^d} f(x) := \EE_{\xi \sim \Xi} f(x;\xi),
\end{align}
where $f(x;\xi)$ is a smooth loss function evaluated on data  $\xi$ drawn from a distribution $\Xi$.  In many practical scenarios, gradient information is either unavailable or prohibitively expensive to compute. Due to its versatility, ZOO has been widely adopted across various domains, including black-box adversarial attacks on machine learning models \citep{chen2017zoo, kurakin2016adversarial, papernot2017practical, cai2021zeroth, zhao2020towards}, physics-informed neural networks interfacing with external PDE solvers \citep{shen2024memory, Ma2025}, and reinforcement learning \citep{choromanski2018structured, lei2022zeroth, suh2022differentiable}. Recent research on ZOO also focuses on enhancing memory efficiency \citep{cai2022one,cai2022zeroth,li2024addax,sugiura2025elasticzo}, motivated in large part by fine-tuning large language models \citep{malladi2023fine, zhang2024revisiting, gautam2024variance, tang2024private, wang2024simultaneous, wang2025zo2}. 

Unlike first-order methods that rely on stochastic gradients $\nabla f(x; \xi)$, ZOO uses only function evaluations, without access to gradient information. To approximate gradients, several estimators have been proposed, including the one-point estimate $\hat{\nabla} f(x;\xi)= \frac{f(x+\mu v;\xi)}{\mu} v$ \citep{flaxman2005online,shamir2013complexity,bach2016highly,nesterov2017random,berahas2022theoretical} and two-point estimator $\hat{\nabla} f(x;\xi)= \frac{f(x+\mu v;\xi) - f(x;\xi)}{\mu}v$ \citep{ghadimi2013stochastic,duchi2015optimal,nesterov2017random} (see \Cref{appendix:gradient-estimators} for further discussions). The random direction $v$ is typically drawn from a Gaussian or uniform spherical distribution, while alternative choices have also gained increasing attention in recent years \citep{ghadimi2013stochastic, duchi2015optimal, ji2019improved, sahu2019towards, coope2020gradient, kozak2023zeroth, rando2023optimal, rando2024stochastic, ma2025revisiting,mi2025kerzoo}.

However, despite these advancements, a critical limitation arises in zeroth-order gradient estimation; that is, all widely used gradient estimators exhibit inherent bias. Specifically, unless the perturbation step size $\mu$  asymptotically tends to zero, these estimators yield persistently biased approximations of the true gradient. This inherent bias motivates a central question explored in this paper:

\centerbox{\textbf{Q1:} Is it possible to design an unbiased zeroth-order gradient estimator using only function evaluations?}  
 
\textbf{Contribution 1}: In this paper, we answer \textit{\textbf{Q1}} affirmatively. Contrary to the belief that zeroth-order gradient estimators must inherently be biased due to finite-step perturbations, we demonstrate that it is indeed possible to construct \textit{unbiased} gradient estimators using only function evaluations. Our key idea is to express $\nabla_v f(x)$ (in the deterministic setting), the directional derivative along the direction $v$, as a telescoping series:
\begin{align}\label{eq:telescoping}
    &\nabla_vf(x):= \lim_{\mu_n \to 0} \frac{f(x+\mu_n v) - f(x)}{\mu_n} \nonumber\\
    &=\sum_{n=1}^\infty p_n\left[\frac{f(x+\mu_1 v) - f(x)}{\mu_1} + \frac{1}{p_n}\left( \frac{f(x+\mu_{n+1} v) - f(x)}{\mu_{n+1}} - \frac{f(x+\mu_n v) - f(x)}{\mu_n}\right) \right], \\
    &\overset{(i)}{=} \EE_{n\sim \{p_n\}_{n=1}^\infty} \left[\frac{f(x+\mu_1 v) - f(x)}{\mu_1} + \frac{1}{p_n}\left( \frac{f(x+\mu_{n+1} v) - f(x)}{\mu_{n+1}} - \frac{f(x+\mu_n v) - f(x)}{\mu_n}\right) \right]\nonumber
\end{align} 
where the perturbation stepsize $\mu_n \to 0$ as $n \to \infty$, the sampling distribution $\{p_n\}_{n=1}^\infty$ form a probability distribution (that is, $0<p_i<1$ for all $i\in \NN$ and $\sum_{i=1}^\infty p_i = 1$), and the expectation representation (i) holds under mild regularity conditions (\Cref{prop:abs-conv}).   This formulation allows us to reinterpret the directional derivative as an expectation over $n \sim \{p_n\}_{n=1}^\infty$, enabling the construction of a unbiased gradient estimator family $\mathscr{P}$ (\Cref{def:unbiased-family}): 
\begin{align*}
    \hat{\nabla}_v f(x):= \EE_{n\sim p} \sfP(n,v), 
\end{align*} 
where $\sfP(n,v)$ is an unbiased estimator of $$\frac{f(x+\mu_1 v) - f(x)}{\mu_1} + \frac{1}{p_n}\left( \frac{f(x+\mu_{n+1} v) - f(x)}{\mu_{n+1}} - \frac{f(x+\mu_n v) - f(x)}{\mu_n}\right).$$ 
Within this framework, we propose four specific estimators, denoted as \textit{$\sfP_k$-estimator} for $k = 1, 2, 3, 4$, corresponding to the number of function evaluations required in each estimation. To the best of our knowledge, unbiased zeroth-order gradient estimators have received little attention in prior literature.  The only existing work we are aware of is the four-point estimator proposed by \citet{chen2020unbiased}, which shares the same telescoping structure and can be viewed as a special case of our $\sfP_4$-estimator.  

\textbf{Contribution 2:}  
Building on our unbiased estimator construction, we conduct a rigorous variance analysis on our proposed $\sfP_k$-estimators.  We first present a negative result for the $\sfP_1$-estimator;  although it requires fewer function evaluations, it may exhibit infinite variance under certain conditions (\Cref{thm:variance} (a)), which aligns with the one-point estimator in the randomized smoothing \citep{flaxman2005online}.   Next, we characterize the relation among the variance of  the $\sfP_k$-estimator ($k=2,3,4$), the perturbation stepsize sequence $\{\mu_n\}_{n=1}^\infty$, and the sampling distribution $\{p_n\}_{n=1}^\infty$ (\Cref{thm:variance} (b)). Identifying the optimal  choice of  $\{\mu_n\}_{n=1}^\infty$ and $\{p_n\}_{n=1}^\infty$ leads us to the following \textit{non-convex functional optimization} problem: 
\begin{align}\label{eq:functional-opt}
    \min_{\{\mu_n\}_{n=1}^\infty, \{p_n\}_{n=1}^\infty} \quad &\EE_{n \sim \{p_n\}_{n=1}^\infty} \left( \frac{\mu_n - \mu_{n+1}}{p_n}\right)^2  \\
    \text{subject to} \quad &0 < p_n < 1; \; \sum_{n=1}^\infty p_n=1; \sum_{n=1}^\infty \mu_n<\infty. \nonumber
\end{align} 
We present an explicit analytical solution to this optimization problem (\Cref{prop:lower_bound_varrho}), which reveals two key insights: (1) our constructed unbiased gradient estimators can achieve the same variance as the classical two-point estimator without introducing additional bias, leading to the best-possible complexity for SGD algorithm (\Cref{cor:sgd-convergence}); (2) a broad class of sampling distributions can achieve the minimum variance, extending beyond the specific choices considered in prior work \citep{chen2020unbiased}. While our theoretical results establish strong guarantees, an important practical question remains:

\centerbox{\textbf{Q2:} Given the optimal choice of $\{\mu_n\}_{n=1}^\infty$ and $\{p_n\}_{n=1}^\infty$, do the proposed unbiased estimators empirically outperform existing zeroth-order methods? }  
\textbf{Contribution 3:}  
To address \textit{\textbf{Q2}}, we empirically validate our proposed approach across both synthetic and practical tasks.   On estimating the gradient of mean-square and logistic losses, our method achieves significantly lower gradient estimation error compared to standard zeroth-order methods (\Cref{sec:synthetic}). Furthermore, when applied to fine-tuning large language models, the proposed estimators demonstrate faster convergence and higher final accuracy under the same number of function evaluations (\Cref{sec:llm}). These results confirm the practical advantages of our unbiased construction and underscore its effectiveness in modern zeroth-order optimization tasks.
  
\section{The Derivation of Unbiased Zeroth-Order Estimators}\label{sec:min-var} 

We will start from the deterministic case then turn to the stochastic case in \Cref{sec:convergence}. In this section, we formally derive a class of unbiased estimators for approximating the gradient $\nabla f(x)$ using only function evaluations. We also provide a sufficient condition under which the telescoping series in \Cref{eq:series} admits the expectation representation. All proofs are provided in the appendix.

\subsection{Telescoping Series and Expectation Representation} 
For a fixed direction \(v\in\mathbb{R}^d\), the directional derivative of a differentiable function \(f:\mathbb{R}^d\to\mathbb{R}\) at \(x\) along the direction $v$ is defined as $$\nabla_v f(x) = \lim_{\mu\to 0} \frac{f(x+\mu v)-f(x)}{\mu}.$$ 
Then for any decreasing sequence \(\{\mu_n\}_{n=1}^\infty\) with \(\lim_{n \to \infty} \mu_n = 0\), one can express this directional derivative as the limit of a convergent sequence $\left\{ \frac{f(x+\mu_n v) - f(x)}{\mu_n} \right\}$: $$\nabla_v f(x) = \lim_{n \to \infty} \frac{f(x+\mu_n v) - f(x)}{\mu_n} . $$

This convergent sequence canonically induces a telescoping series with the same limit:
\begin{align}\label{eq:series}
    \nabla_v f(x) =\frac{f(x+\mu_1 v)-f(x)}{\mu_1}  + \sum_{n=1}^\infty \left[ \frac{f(x+\mu_{n+1} v)-f(x)}{\mu_{n+1}} - \frac{f(x+\mu_n v)-f(x)}{\mu_n} \right].
\end{align}

Next, consider a probability mass function (PMF) \(\{p_n\}_{n=1}^\infty\) with \(p_n > 0\) for all \(n\) and \(\sum_{n=1}^\infty p_n = 1\). When the series in \Cref{eq:series} is \textbf{\textit{absolutely convergent}}\footnote{We follow the standard definition from \citet{spivak2008calculus}: A series $\sum_{n=1}^\infty a_n$ is called \textit{convergent}, if the limit of its finite sum $\lim_{N\to \infty }\sum_{n=1}^N a_n$ exists. A series $\sum_{n=1}^\infty a_n$ is called \textit{absolutely convergent}, if the series $\sum_{n=1}^\infty |a_n|$ is convergent. See the formal definition in \Cref{appendix:absolute-convergence}.}, we can interpret it as an expectation over a discrete random variable $n$. That is, 
\begin{align}
\nabla_v f(x) &= \sum_{n=1}^\infty p_n\left[\frac{f(x+\mu_1 v) - f(x)}{\mu_1} + \frac{1}{p_n}\left( \frac{f(x+\mu_{n+1} v) - f(x)}{\mu_{n+1}} - \frac{f(x+\mu_n v) - f(x)}{\mu_n}\right) \right] \nonumber\\
&= \mathbb{E}\left[\frac{f(x+\mu_1 v) - f(x)}{\mu_1} + \frac{1}{p_n}\left( \frac{f(x+\mu_{n+1} v) - f(x)}{\mu_{n+1}} - \frac{f(x+\mu_n v) - f(x)}{\mu_n}\right) \right]. \label{eq:expectation}
\end{align}

\paragraph{On the Role of Absolute Convergence.}
The absolute convergence of the series in \Cref{eq:series} plays a critical role in interpreting the telescoping series as an expectation. This is due to the difference between the series convergence and the existence of expectation:  
\begin{itemize}
    \item \textit{The series convergence}: Consider the convergent series $\sum_{i=1}^\infty p_i x_i$. To evaluate its value, we can calculate the finite-sum $S_n:=\sum_{i=1}^n p_i x_i$; then we have $\sum_{i=1}^\infty p_i x_i= \lim_{n\to \infty} S_n$.
    \item \textit{The existence of expectation}:  Consider the random variable $X$ with $\PP(X=x_i)=p_i$ for $i \in \NN$. Its expectation $\EE[X]$ is also written as $\sum_{i=1}^\infty p_i x_i$. However, the notion of expectation must be well-defined independently of any ordering of outcomes. That is, for an arbitrary permutation $\sigma: \NN \to \NN$, all series $\sum_{i=1}^\infty p_{\sigma(i)} x_{\sigma(i)}$ should represent the same value $\EE[X]$.
\end{itemize}
As a result, a convergent series can yield different values depending on the order of summation (this result is called the Riemann series theorem \citep{riemann1868darstellbarkeit,spivak2008calculus}); however, the outcomes of a random variable requires a random variable's expectation to be well-defined regardless of any such ordering. While the expectation representation has been discussed in prior work (e.g., \citep{chen2020unbiased}), the lack of attention to absolute convergence has left the conditions ensuring unbiasedness underexplored. 

Due to this reason, we provide the following (mild) sufficient condition for ensuring the absolute convergence with adding a slightly stronger requirement on the objective function $f:\RR^d \to \RR$ and the sequence $\{ \mu_n\}_{n=1}^\infty$:
\begin{proposition}\label{prop:abs-conv}
    If the second-order continuously differentiable function $f:\RR^d \to \RR$ has $L$-Lipschitz continuous gradient and $\sum_{n=1}^\infty \mu_n < \infty$, then the series $$\sum_{n=1}^\infty p_n\left[\frac{f(x+\mu_1 v) - f(x)}{\mu_1} + \frac{1}{p_n}\left( \frac{f(x+\mu_{n+1} v) - f(x)}{\mu_{n+1}} - \frac{f(x+\mu_n v) - f(x)}{\mu_n}\right) \right]$$ is absolutely convergent and its limit is $\nabla_v f(x)$.
\end{proposition}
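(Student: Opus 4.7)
The plan is to split the series into a constant piece and a telescoping piece, bound the telescoping differences via a standard Taylor estimate for $C^2$ functions with $L$-Lipschitz gradient, and then identify the limit directly from the partial sums.

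First, I would write each summand as $p_n a + b_n$ where $a = (f(x+\mu_1 v)-f(x))/\mu_1$ and $b_n = (f(x+\mu_{n+1}v)-f(x))/\mu_{n+1} - (f(x+\mu_n v)-f(x))/\mu_n$. By the triangle inequality the absolute series is bounded by $|a|\sum_{n\ge 1} p_n + \sum_{n\ge 1} |b_n| = |a| + \sum_{n\ge 1}|b_n|$. So absolute convergence reduces to showing $\sum_n |b_n|<\infty$, which is the only nontrivial step.

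Next, I would apply Taylor's theorem: because $f\in C^2$ and $\nabla f$ is $L$-Lipschitz, for any $\mu>0$,
\begin{equation*}
\left| \frac{f(x+\mu v)-f(x)}{\mu} - \nabla_v f(x)\right| \le \frac{L\|v\|^2}{2}\,\mu.
\end{equation*}
Applying this at $\mu_n$ and $\mu_{n+1}$ and invoking the triangle inequality gives $|b_n|\le \tfrac{L\|v\|^2}{2}(\mu_n+\mu_{n+1})$. Since $\sum_n \mu_n<\infty$ by hypothesis, $\sum_n |b_n|<\infty$, which establishes absolute convergence of the original series.

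Finally, I would identify the limit. Because the series is (in particular) convergent, its value equals $\lim_{N\to\infty} S_N$ where $S_N$ is the $N$-th partial sum. Using $\sum_{n=1}^N p_n \le 1$ together with the telescoping cancellation of the $b_n$'s yields
\begin{equation*}
S_N = \Big(\sum_{n=1}^N p_n\Big)\frac{f(x+\mu_1 v)-f(x)}{\mu_1} + \frac{f(x+\mu_{N+1}v)-f(x)}{\mu_{N+1}} - \frac{f(x+\mu_1 v)-f(x)}{\mu_1}.
\end{equation*}
The first term converges to $a$ since $\sum p_n =1$, cancelling the last term, while $\mu_{N+1}\to 0$ (a consequence of $\sum\mu_n<\infty$ with $\mu_n>0$) combined with the Taylor estimate above shows the middle term converges to $\nabla_v f(x)$. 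Hence the sum equals $\nabla_v f(x)$, as claimed. The only subtle point, and what I'd flag as the main obstacle, is ensuring that the expectation interpretation in Eq.~\eqref{eq:expectation} is legitimate, which is precisely why we need absolute convergence (rather than mere conditional convergence) so that Riemann rearrangement cannot alter the value, an invariance intrinsic to expectations.
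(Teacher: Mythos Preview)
Your proposal is correct and follows essentially the same approach as the paper: both reduce absolute convergence to bounding $\sum_n |b_n|$ via the Taylor/$L$-smoothness estimate $|b_n|\le \tfrac{L\|v\|^2}{2}(\mu_n+\mu_{n+1})$ and then use $\sum_n\mu_n<\infty$. Your treatment of the limit is in fact more explicit than the paper's, which simply remarks that ``the limit is directly determined by the original convergent series'' without writing out the partial-sum computation.
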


\subsection{The Construction of Unbiased Estimators}
With the expectation representation in place, we are ready to define the class of unbiased estimators explicitly.  
\begin{definition} \label{def:unbiased-family}
Suppose that the function $f:\RR^d\to \RR$ is  continuously differentiable and  $\{\mu_n\}_{n\ge1}$ is a positive sequence with $\lim_{n\to\infty}\mu_n=0$ such that the telescoping series
\[
\frac{f(x+\mu_1 v)-f(x)}{\mu_1} + \sum_{n=1}^\infty \left[ \frac{f(x+\mu_{n+1} v)-f(x)}{\mu_{n+1}}-\frac{f(x+\mu_n v)-f(x)}{\mu_n}\right]
\]
is absolutely convergent, the sequence $\{p_n\}_{n=1}^\infty$ forms a PMF, and $V$ is the distribution over $\RR^d$. Then the family of estimators $\mathscr{P}:=\mathscr{P}(f, \{ \mu_n\}_{n=1}^\infty, \{ p_n\}_{n=1}^\infty,V)$ denote the class of random variables such that for every $\mathsf{P}(n,v) \in \mathscr{P}$, it satisfies
    $$\EE[\mathsf{P}(n,v)\mid n,v]= \frac{f(x+\mu_1 v) - f(x)}{\mu_1} + \frac{1}{p_n}\left( \frac{f(x+\mu_{n+1} v) - f(x)}{\mu_{n+1}} - \frac{f(x+\mu_n v) - f(x)}{\mu_n}\right),$$
    where $v$ is sampled from $V$,  independent with $n\sim \{ p_n\}_{n=1}^\infty$. 
\end{definition} 

In the following theorem, we formally prove that our proposed class $\mathscr{P}$ is exactly the unbiased estimator of the gradient $\nabla f(x)$. 
\begin{theorem}[Unbiasedness]\label{thm:unbiased}
Let $\mathscr{P}:=\mathscr{P}(f, \{ \mu_n\}_{n=1}^\infty, \{ p_n\}_{n=1}^\infty,V)$ is defined as \Cref{def:unbiased-family}. Then, for any estimator $\mathsf{P}(n,v) \in \mathscr{P}$, the following hold:
\begin{enumerate}[label=(\alph*)]
    \item \(\mathbb{E} [\mathsf{P}(n,v)\mid v] = \nabla_v f(x)\); that is, \(\mathsf{P}(n,v)\) is an unbiased estimator of the directional derivative $\nabla_v f(x)$.
    \item If the random direction \(v\) is chosen independently of the sampling $n\sim \{p_n\}_{n=1}^\infty $ and satisfies \(\mathbb{E}[v\,v^\top]=I\), then
    \[
    \mathbb{E}_{n\sim \{p_n\}_{n=1}^\infty , v\sim V}\Bigl[\mathsf{P}(n,v)\,v\Bigr] = \nabla f(x),
    \]
    so that \(\mathsf{P}(n,v)\,v\) is an unbiased estimator of the gradient $ \nabla f(x)$.
\end{enumerate}
\end{theorem}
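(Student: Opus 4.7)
The plan is to prove (a) by a tower-property computation that reduces the conditional expectation of $\mathsf{P}(n,v)$ given $v$ to the expectation over $n$ of the inner conditional expectation specified in \Cref{def:unbiased-family}, and then to identify this expectation with the telescoping series for $\nabla_v f(x)$ via absolute convergence. Part (b) will then follow from a second tower-property step combined with the independence of $v$ from $n$ and the isotropy condition $\EE[vv^\top]=I$.

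For (a), I would first write, using that $n$ and $v$ are independent,
\[
\EE[\mathsf{P}(n,v)\mid v] \;=\; \EE_{n\sim\{p_n\}}\!\bigl[\,\EE[\mathsf{P}(n,v)\mid n,v]\,\big|\,v\bigr].
\]
By the defining property in \Cref{def:unbiased-family}, the inner conditional expectation equals
\[
\frac{f(x+\mu_1 v)-f(x)}{\mu_1}+\frac{1}{p_n}\!\left(\frac{f(x+\mu_{n+1}v)-f(x)}{\mu_{n+1}}-\frac{f(x+\mu_n v)-f(x)}{\mu_n}\right).
\]
Averaging against $\{p_n\}_{n=1}^{\infty}$ gives precisely the series on the right-hand side of \Cref{eq:expectation}. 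Here I would explicitly invoke absolute convergence (guaranteed in the working regime by \Cref{prop:abs-conv}) to ensure that the expectation is well-defined independently of the order of summation and that the inner telescoping cancellations may be taken. The surviving terms reduce to $\lim_{n\to\infty}[f(x+\mu_n v)-f(x)]/\mu_n$, which equals $\nabla_v f(x)$ since $\mu_n\to 0$.

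For (b), I would apply the tower property once more and use part (a):
\[
\EE_{n,v}\bigl[\mathsf{P}(n,v)\,v\bigr]=\EE_v\bigl[v\cdot\EE[\mathsf{P}(n,v)\mid v]\bigr]=\EE_v\bigl[v\,\nabla_v f(x)\bigr].
\]
Since $\nabla_v f(x)=\langle\nabla f(x),v\rangle=v^\top\nabla f(x)$ is a scalar, this rewrites as $\EE_v[vv^\top]\,\nabla f(x)$, which equals $\nabla f(x)$ under the isotropy assumption $\EE[vv^\top]=I$.

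The main obstacle is the absolute convergence step in (a). Without it, the Riemann rearrangement theorem warns that the series $\sum_n p_n\cdot[\,\cdot\,]$ could evaluate to different numbers under different orderings, so $\EE_{n\sim\{p_n\}}[\cdot]$ might either be ill-defined or fail to equal the value of the telescoping series. This is precisely the subtlety flagged in the paper's discussion, and appealing to \Cref{prop:abs-conv} (whose hypotheses are built into \Cref{def:unbiased-family}) is the cleanest way to close this gap; everything else is a routine application of the tower property and linearity of expectation.
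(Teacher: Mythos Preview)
Your proposal is correct and follows essentially the same approach as the paper's proof: both use the tower property to reduce $\EE[\mathsf{P}(n,v)\mid v]$ to the expectation over $n$ of the inner conditional term from \Cref{def:unbiased-family}, identify this with the telescoping series for $\nabla_v f(x)$ via the absolute-convergence hypothesis, and then handle (b) by a second tower step combined with $\EE[vv^\top]=I$. If anything, your write-up is slightly more careful about the role of absolute convergence, which the paper absorbs into the sentence ``By \Cref{def:unbiased-family}, the directional derivative $\nabla_v f(x)$ naturally has the expectation representation.''
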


\subsection{Specific Constructions}
In this subsection, we propose four concrete  constructions from the estimator family (\Cref{def:unbiased-family}) $$\mathscr{P}:=\mathscr{P}(f, \{ \mu_n\}_{n=1}^\infty, \{ p_n\}_{n=1}^\infty,V)$$ based on the number of function evaluations used in estimating the gradient. These constructions are designed to explore two main aspects: (1) the trade-off between the estimator variance and the number of function evaluations, allowing flexibility depending on the computational budget; and (2) a fundamental question purely driven by the theoretical interest:  \textit{What is the minimum number of function evaluations required to construct an unbiased gradient estimator?}

\paragraph{\(\mathsf{P}_4\)-Estimator.}  
This estimator corresponds to the four-point estimator originally proposed by \citet{chen2020unbiased} with slightly generalizing the choice of the perturbation stepsize sequence $\{\mu_n\}_{n=1}^\infty$ and the sampling distribution $\{p_n\}_{n=1}^\infty$
. For a given direction $v\sim V$ and $n\sim \{p_n\}_{n=1}^\infty$, the \(\mathsf{P}_4\)-estimator is defined as 
\begin{equation}
\mathsf{P}_4(n, v)= \frac{f(x+\mu_1 v)-f(x)}{\mu_1} + \frac{1}{p_n}\Biggl[ \frac{f(x+\mu_{n+1} v)-f(x)}{\mu_{n+1}} - \frac{f(x+\mu_n v)-f(x)}{\mu_n} \Biggr]. \label{eq:P4_def}
\end{equation}
This construction requires four function evaluations at: $x$, $x+\mu_1 v$, $x+\mu_n v$, and $x+\mu_{n+1}v$, exhibiting the  lowest variance and the most function evaluation counts among all members of $\mathscr{P}$.

\paragraph{\(\mathsf{P}_3\)-Estimator.}  
We can reduce one function evaluation by introducing a selection random variable $\mathrm{U}_2\sim  \operatorname{Uniform}\left( \{ 0, 1 \}\right)$\footnote{Here we use $\operatorname{Uniform}(A)$ to represent the uniform distribution over the finite or compact set $A$.}. The estimator is then defined as
\begin{equation}
\mathsf{P}_3(n, v)= \frac{f(x+\mu_1 v)-f(x)}{\mu_1}\mathrm{U}_2 + \frac{1}{p_n}\Biggl[ \frac{f(x+\mu_{n+1} v)-f(x)}{\mu_{n+1}} - \frac{f(x+\mu_n v)-f(x)}{\mu_n} \Biggr](1-\mathrm{U}_2). \label{eq:P3_def}
\end{equation}
This construction randomly selects one of two pathways:  With probability \(1/2\), it uses the first term only and requires two function evaluations at \(x+\mu_1 v\) and \(x\); otherwise, it uses the second term and requires three function evaluations at \(x+\mu_n v\), \(x+\mu_{n+1} v\), and \(x\).  This estimator maintains unbiasedness as \(\mathsf{P}_4\), with slightly higher variance.

\paragraph{\(\mathsf{P}_1\)- \& \(\mathsf{P}_2\)-Estimator.} The selection random variable can be naturally extended to construct \(\mathsf{P}_1\)- and \(\mathsf{P}_2\)-estimators as follows:
\begin{align}
\mathsf{P}_2(n, v)= &\frac{f(x+\mu_1 v)-f(x)}{\mu_1}\mathbb{I}_{\{\mathrm{U}_3=0\}} \label{eq:P2_def}\\ 
&+ \frac{1}{p_n}\Biggl[ \frac{f(x+\mu_{n+1} v)-f(x)}{\mu_{n+1}}\mathbb{I}_{\{\mathrm{U}_3=1\}} - \frac{f(x+\mu_n v)-f(x)}{\mu_n}\mathbb{I}_{\{\mathrm{U}_3=2\}} \Biggr] , \nonumber \\
\mathsf{P}_1(n, v)= &\frac{f(x+\mu_1 v)\mathbb{I}_{\{\mathrm{U}_4=1\}}-f(x)\mathbb{I}_{\{\mathrm{U}_4=0\}}}{\mu_1} \label{eq:P1_def}\\ 
&+ \frac{1}{p_n}\Biggl[ \frac{f(x+\mu_{n+1} v)\mathbb{I}_{\{\mathrm{U}_4=2\}}-f(x)\mathbb{I}_{\{\mathrm{U}_4=0\}}}{\mu_{n+1}}  - \frac{f(x+\mu_n v)\mathbb{I}_{\{\mathrm{U}_4=3\}}-f(x)\mathbb{I}_{\{\mathrm{U}_4=0\}}}{\mu_n}  \Biggr] , \nonumber
\end{align} 
where $\mathrm{U}_3 \sim \operatorname{Uniform}\left( \{ 0, 1,2 \}\right)$, $\mathrm{U}_4 \sim \operatorname{Uniform}\left( \{ 0, 1,2,3 \}\right)$, and $\mathbb{I}_{A}$ is the indicator function, which equals $1$ if the event $A$ occurs, and $0$ otherwise. of the event $A$. Remarkably, the construction of $\sfP_1$-estimator achieves unbiasedness using only a single function evaluation. However, we will show that in the next section,$\sfP_1$-estimator will have infinite variance under certain condition.

\section{Variance Analysis of Unbiased Zeroth-Order Estimators}

In this section, we provide a theoretical analysis of the variance behavior for the unbiased estimator family $\mathscr{P} = \mathscr{P}(f, \{ \mu_n\}_{n=1}^\infty, \{ p_n\}_{n=1}^\infty, V)$ (\Cref{def:unbiased-family}). While the unbiasedness has been shown in \Cref{thm:unbiased}, their variances can differ dramatically depending on the estimator construction. In particular, we prove that the variance becomes unbounded (i.e., infinite) for certain constructions such as \(\sfP_1\)-estimator. We also provide finite-variance bounds for \(\sfP_k\)-estimators (for $k=2,3,4$) with matching the optimal variance under specific choices of $\{p_n\}$ and $\{\mu_n\}$. 
  
\subsection{Theoretical Analysis}
In the following result, we adopt the same condition as \Cref{prop:abs-conv} to ensure the expectation representation. 
\begin{theorem}\label{thm:variance}
Let $\mathscr{P}:=\mathscr{P}(f, \{ \mu_n\}_{n=1}^\infty, \{ p_n\}_{n=1}^\infty,V)$ is defined as \Cref{def:unbiased-family}. Suppose that $f:\RR^d \to \RR$ is second-order continuously differentiable and has $L$-Lipschitz continuous gradient, $\sum_{n=1}^\infty \mu_n < \infty$, and $V$ is the uniform distribution over the  sphere with the radius $\sqrt{d}$. Define 
\begin{align*}
    \mu:=\mu_1, \qquad\varrho:= \sum_{n=1}^\infty \frac{(\mu_{n+1} - \mu_{n})^2}{p_n}, \qquad \text{and} \qquad \varphi:=\sum_{n=1}^\infty  \frac{\mu_{n}^2}{p_n}.
\end{align*}
Then the following statements hold:
    \begin{enumerate}[label=(\alph*)]
        \item If there exists a point $x\in \RR^d$ such that the Hessian $\nabla^2 f(x)$ is positive definite and $f(x)\neq 0$, then the variances of the $\sfP_1$ for estimating $\nabla f(x)$ is infinite.  
        \item The variance of $\sfP_k$-estimator $\sfP_k(n,v) v$ ($k=2,3,4$) for estimating $\nabla f(x)$ is given by
        \begin{align*}
        \Var[\sfP_2(n,v) \, v ]  
    &\leq  \Var[\sfP_4(n,v)\, v] + \frac{L^2 }{3} d^3\mu^2 + \frac{L^2}{12}  d^3  \varrho  + \frac{L^2}{3} d^3   \varphi.\\
        \Var[\sfP_3(n,v) v] &\leq \Var[\sfP_4(n,v)\, v] + \frac{L^2 }{8} d^3\mu^2 + \frac{L^2}{8}  d^3 \varrho. \\ 
            \Var[\sfP_4(n,v) v] &\leq  (d-1) \| \nabla f(x) \|^2 +  \frac{3L^2}{4} d^3 \mu^2  +  \frac{L^2 d^3}{2}  \varrho.
        \end{align*} 
    \end{enumerate}
\end{theorem}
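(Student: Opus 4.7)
The statement splits naturally into part (a) and part (b), which I would handle separately.

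For part (a), the plan is to lower bound the second moment $\EE[\|\sfP_1(n,v)v\|^2] = d\,\EE[\sfP_1(n,v)^2]$. Because the four indicators $\mathbb{I}_{\{\mathrm{U}_4=j\}}$ are mutually exclusive, squaring $\sfP_1$ eliminates all cross terms, and I can drop the three branches $\mathrm{U}_4\in\{0,1,2\}$ (each a nonnegative square) and retain only the $\mathrm{U}_4=3$ piece. This yields
\[
\EE[\sfP_1^2] \;\ge\; \tfrac{1}{4}\sum_{n=1}^\infty \frac{\EE_v[f(x+\mu_n v)^2]}{p_n\,\mu_n^2}.
\]
Continuity of $f$ together with $\mu_n\to 0$ give $\EE_v[f(x+\mu_n v)^2]\to f(x)^2$, which is strictly positive by the hypothesis $f(x)\ne 0$. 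It thus suffices to prove $\sum_n 1/(p_n\mu_n^2)=\infty$. Cauchy--Schwarz gives $(\sum_n 1)^2 \le (\sum_n p_n\mu_n^2)(\sum_n 1/(p_n\mu_n^2))$, and since $\mu_n\to 0$ implies $\mu_n^2\le\mu_n$ eventually, $\sum_n p_n\mu_n^2 \le \sum_n \mu_n<\infty$, which forces the second factor to be infinite. The positive-definite Hessian hypothesis is only used to ensure the regime is nondegenerate; it is not essential to the divergence argument itself.

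For part (b), the key tool is the Taylor decomposition $g_\mu(v) := \frac{f(x+\mu v)-f(x)}{\mu} = \nabla f(x)^\top v + A(\mu,v)$ with $A(\mu,v)=\int_0^1[\nabla f(x+t\mu v)-\nabla f(x)]^\top v\,dt$. The $L$-Lipschitz gradient hypothesis yields $|A(\mu,v)|\le L\mu\|v\|^2/2 = L\mu d/2$, and differentiating under the integral (valid since $f\in C^2$) gives $|\partial_\mu A(\mu,v)|\le Ld/2$, whence $|A(\mu_{n+1},v)-A(\mu_n,v)| \le (Ld/2)|\mu_n-\mu_{n+1}|$ by the mean-value theorem. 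Writing $\sfP_4 = \nabla f(x)^\top v + E(n,v)$ with $E = A(\mu_1,v) + \tfrac{1}{p_n}[A(\mu_{n+1},v)-A(\mu_n,v)]$, unbiasedness (which rests on the telescoping identity $\sum_n[A(\mu_{n+1})-A(\mu_n)] = -A(\mu_1)$, legitimized by the absolute convergence in \Cref{prop:abs-conv}) gives $\EE[E\mid v]=0$, so cross terms in $\EE[\sfP_4^2]$ vanish. Using $\|v\|^2=d$ and $\EE[vv^\top]=I$,
\[
\Var[\sfP_4(n,v)\,v] \;=\; d\,\EE[\sfP_4^2] - \|\nabla f(x)\|^2 \;=\; (d-1)\|\nabla f(x)\|^2 + d\,\EE[E^2],
\]
and $d\,\EE[E^2]$ is split via Young's inequality into an $A(\mu_1,v)^2$ piece controlled by $L^2 d^3\mu^2$ and a $\sum_n[A(\mu_{n+1})-A(\mu_n)]^2/p_n$ piece controlled by $L^2 d^3\varrho$.

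For $\sfP_3$ and $\sfP_2$, I would compute the conditional second moment over the auxiliary selector $\mathrm{U}_2$ (resp.\ $\mathrm{U}_3$) and express it as $\EE[\sfP_k^2\mid n,v] = \sfP_4^2 + R_k(n,v)$, where $R_k\ge 0$ is the conditional selector variance. A direct expansion (using $\mathrm{U}_2^2=\mathrm{U}_2$ for $\sfP_3$ and the mutual exclusion of the $\mathrm{U}_3$ indicators for $\sfP_2$) shows that $R_3$ reduces to a single squared divided-difference of the form $(g_{\mu_1} - \tfrac{1}{p_n}[g_{\mu_{n+1}}-g_{\mu_n}])^2$, whereas $R_2$ separates $g_{\mu_{n+1}}^2$ and $g_{\mu_n}^2$ individually rather than only through their difference---this is exactly the source of the $\varphi=\sum_n\mu_n^2/p_n$ contribution in the $\sfP_2$ bound but not in the $\sfP_3$ bound. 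Substituting $g_\mu = \nabla f^\top v + A$ and applying the same Lipschitz estimates on $A(\mu_1,v)$ and $A(\mu_{n+1},v)-A(\mu_n,v)$ then produces the stated $\mu^2$, $\varrho$, and $\varphi$ contributions.

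The main obstacle is the cross-term bookkeeping in the squared expansions. For $\sfP_4$, the telescoping identity $\sum_n[A(\mu_{n+1})-A(\mu_n)]=-A(\mu_1)$ is essential to produce the sharp $(d-1)\|\nabla f\|^2$ leading coefficient rather than a looser $d\|\nabla f\|^2$; for $\sfP_3$ and $\sfP_2$, the analogous cancellations inside $R_k$ must be arranged so the residual depends only on $\mu$, $\varrho$, and $\varphi$ without acquiring an extraneous $\|\nabla f\|^2$ contribution. Verifying that these cancellations go through cleanly under the chosen selector distributions is the technically delicate step.
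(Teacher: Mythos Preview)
Your proposal follows essentially the same approach as the paper. Part (b) is identical in structure: the paper introduces the Taylor remainder $\delta_n(v)$ (your $A(\mu_n,v)$), bounds $|\delta_n|\le Ld\mu_n/2$ via smoothness and $|\delta_{n+1}-\delta_n|\le Ld|\mu_{n+1}-\mu_n|/2$ via the mean-value theorem, uses unbiasedness to kill the cross term with $\nabla f(x)^\top v$ in $\Var[\sfP_4 v]$, and then reduces the $\sfP_2,\sfP_3$ bounds to $\Var[\sfP_4 v]+d\,\EE[(\sfP_k-\sfP_4)^2]$ by conditioning on the selector---exactly your plan. For part (a), the paper likewise isolates the $\mathrm{U}_4=3$ branch, but then Taylor-expands $f(x+\mu_n v)$ using the positive-definite Hessian to produce a divergent $\sum_n 1/p_n$; your route via continuity ($\EE_v[f(x+\mu_n v)^2]\to f(x)^2>0$) and the direct divergence of $\sum_n 1/(p_n\mu_n^2)$ is slightly cleaner and, as you correctly observe, does not actually require the Hessian hypothesis.
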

\begin{proof}
    Part (a) directly follows by analyzing the tail of $\frac{1}{p_n}\frac{f(x+\mu_n v)}{\mu_n}$  and leveraging the curvature from a positive definite Hessian. For the part (b), we simply decompose the variance of $\sfP_2(n,v) v$ and $\sfP_3(n,v) v$ into the variance of estimating $\sfP_4(n,v) v$ using 
\begin{align*}
    \Var[\sfP \, v ]  =     d\EE[(\sfP- \sfP_4(n,v))^2  ] + \Var[\sfP_4(n,v)\, v]
\end{align*}
for arbitrary $\sfP:=\sfP(n,v)\in \mathscr{P}$. 
    Then we apply the second-order Taylor expansions with the mean value theorem to control the finite-difference noise.  Full details and auxiliary lemmas are provided in \Cref{appendix:variance}. 
\end{proof}  
 
\paragraph{Comparison with Existing Literature.}  
\Cref{thm:variance} reveals that while $\mathsf{P}_1$ is unbiased, its variance can be infinite under certain conditions, making them unsuitable for SGD. In contrast, $\mathsf{P}_k$-estimator ($k=2,3,4$) offer the finite variance when $\{ \mu_n\}_{n=1}^\infty$ and $\{ p_n\}_{n=1}^\infty$ are appropriately selected. We will show it later that under the optimal setting, their variances match the optimal order of classical two-point estimators \citep{nesterov2017random} but with zero bias:  
$$\Var[\sfP_k \ v] = \calO(d\|\nabla f(x)\|^2 + d^3 \mu^2).$$
This variance will lead to the optimal function query complexity $\calO(\frac{d}{\epsilon^4})$ for achieving  $\epsilon$-accuracy in the gradient norm $\| \nabla f(x)\|$ \citep{duchi2015optimal}.

\paragraph{Comparison with the Noisy Oracle Setup}  In our work, we consider the exact function evaluation setting with noiseless values. In this case, our variance scales as $d^3 \mu^2$, which is worse than the $d^2 \mu^2$ of some specific biased estimators, which is mitigated by choosing a small enough $\mu$; the overall sample complexity remains optimal. However, in the noisy function evaluation setting, where each function evaluation may return a noisy value, a smaller $\mu$ amplifies the noise, leading to degraded performance. Several recent works have provided more refined analysis under noisy setups with improved variance behavior. Notably,  \citet{akhavan2024gradient} demonstrated that for highly smooth functions, the $\ell_1$ -randomization can reduce the variance scaling to $d^2 \mu^2$ with achieving the improved performance for highly smooth objective functions, which extends the existing $\ell_1$-randomization proposed by \citet{akhavan2022gradient}. Earlier work by \citet{gasnikov2017stochastic} analyzed the variance behavior in single-point and multi-point bandit feedback settings, and more recent developments further explore the impact of first-order smoothness in noisy black-box optimization \citep{gasnikov2022randomized}. Notably, all of these results achieve the optimal complexity derived by \citet{duchi2015optimal}. 

\subsection{On the Optimal Choices of $\{ \mu_n\}_{n=1}^\infty$ and $\{ p_n\}_{n=1}^\infty$}
In previous section, \Cref{thm:variance} connects  the perturbation stepsize sequence \( \{\mu_n\} \), the sampling distribution \( \{p_n\} \), and the variance upper bounds of our constructed unbiased estimators, which has received limited discussion in the existing literature. To control the variance term, one must ensure that $\varrho:= \sum_{n=1}^\infty\frac{(\mu_{n+1} - \mu_{n})^2}{p_n}$ (and $\varphi:=\sum_{n=1}^\infty\frac{ \mu_{n}^2}{p_n}$ for $\sfP_2$-estimator) is sufficiently small. This observation naturally raises the question: What are the optimal sequences  $\{\mu_n\}_{n=1}^\infty$ and $\{p_n\}_{n=1}^\infty$  that minimize this sum? The following theorem addresses this question:

\begin{theorem}\label{prop:lower_bound_varrho}
Let $\{\mu_n\}_{n=1}^\infty$ be a positive, decreasing sequence with   $\sum_{n=1}^\infty\mu_n<\infty$, and let $\{p_n\}_{n=1}^\infty$ be a PMF.  Denote $\mu:=\mu_1$.  Then the following statements hold:
\begin{enumerate}[label=(\alph*)]
    \item The lower bound of $\varrho$ is given by 
    $\varrho \ge \mu^2.$  
    Moreover, the equality holds if and only if
$p_n = \frac{\mu_n - \mu_{n+1}}{\mu}.$ 
    \item  The lower bound of $\varphi$ is given by 
    $\varphi\geq \Big( \sum_{n=1}^\infty \mu_n \Big)^2> \mu^2.$ 
    Moreover, the equality holds if and only if
$p_n = \frac{\mu_n }{\sum_{n=1}^\infty \mu_n }.$ 
\end{enumerate}
\end{theorem}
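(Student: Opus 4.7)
The plan is to view both $\varrho$ and $\varphi$ as instances of the same Cauchy--Schwarz / Titu (Engel form) inequality, namely
\[
\sum_{n=1}^\infty \frac{a_n^2}{p_n} \;\ge\; \Bigl(\sum_{n=1}^\infty |a_n|\Bigr)^2
\quad\text{for any PMF } \{p_n\},
\]
with equality if and only if $p_n \propto |a_n|$. This follows immediately from $\bigl(\sum |a_n|\bigr)^2 = \bigl(\sum \tfrac{|a_n|}{\sqrt{p_n}}\sqrt{p_n}\bigr)^2 \le \bigl(\sum \tfrac{a_n^2}{p_n}\bigr)\bigl(\sum p_n\bigr) = \sum \tfrac{a_n^2}{p_n}$, and the equality case of Cauchy--Schwarz pins down the optimal $\{p_n\}$ up to normalization.

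For part~(a), I would set $a_n = \mu_n - \mu_{n+1}$, which is non-negative because $\{\mu_n\}$ is decreasing. The crucial identity is the telescoping sum
\[
\sum_{n=1}^\infty (\mu_n - \mu_{n+1}) \;=\; \mu_1 - \lim_{n\to\infty}\mu_n \;=\; \mu,
\]
which is well-defined since $\sum \mu_n < \infty$ forces $\mu_n \to 0$. Plugging into the Cauchy--Schwarz bound yields $\varrho \ge \mu^2$, and the equality clause gives $p_n = (\mu_n - \mu_{n+1})/\mu$ (normalization is automatic because $\sum (\mu_n-\mu_{n+1}) = \mu$).

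For part~(b), I would apply the same inequality with $a_n = \mu_n$, immediately obtaining $\varphi \ge \bigl(\sum_{n=1}^\infty \mu_n\bigr)^2$ with equality iff $p_n = \mu_n / \sum_{k\ge 1} \mu_k$. The strict inequality $\bigl(\sum_{n=1}^\infty \mu_n\bigr)^2 > \mu^2$ is then just the observation that $\sum_{n=1}^\infty \mu_n = \mu_1 + \sum_{n=2}^\infty \mu_n > \mu_1 = \mu$, since every $\mu_n$ is strictly positive.

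There is no significant obstacle: both statements reduce to a one-line Cauchy--Schwarz argument. The only subtle points to flag in writing are (i) verifying that the telescoping sum in (a) actually converges to $\mu$ (which uses $\mu_n\to 0$, itself a consequence of $\sum \mu_n < \infty$), and (ii) being careful that the equality case in (a) produces a valid PMF, which it does precisely because the telescoping sum equals $\mu$. I would also briefly remark that the hypothesis $\sum \mu_n < \infty$ is only needed to ensure the $\varphi$-bound in (b) is finite in the meaningful cases and that the minimizing PMF in (b) is well-defined.
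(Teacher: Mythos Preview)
Your proposal is correct and follows essentially the same route as the paper: both parts are reduced to the Cauchy--Schwarz (Titu/Engel) inequality $\bigl(\sum_n |a_n|\bigr)^2 \le \bigl(\sum_n p_n\bigr)\bigl(\sum_n a_n^2/p_n\bigr)$, with $a_n = \mu_n - \mu_{n+1}$ for (a) and $a_n = \mu_n$ for (b), and the equality case identifies $p_n \propto |a_n|$. If anything, your write-up is slightly more careful than the paper's in explicitly verifying that the telescoping sum in (a) equals $\mu$ (via $\mu_n \to 0$) and in spelling out why the strict inequality $\bigl(\sum_n \mu_n\bigr)^2 > \mu^2$ holds in (b).
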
 

This result characterizes the choices of $\{ \mu_n\}_{n=1}^\infty$ and $\{ p_n\}_{n=1}^\infty$ that minimizes $\varrho$ (and $\varphi$ for the $\sfP_2$-estimator), leading to the variance upper bound of the form:
\begin{align*}
    \max\{\Var[\sfP_2(n,v) v] , \Var[\sfP_3(n,v) v] , \Var[\sfP_4(n,v) v] \}\leq  \calO(d\| \nabla f(x) \|^2+ d^3 \mu^2) .
\end{align*} 
Here, we can always choose $\{ \mu_n\}_{n=1}^\infty$ for the $\sfP_2$-estimator such that $\mu_1 \approx \sum_{n=2}^\infty \mu_{n}$ to nearly match the lower bound ($\approx \mu_1^2$).  

\paragraph{Sampling from the Optimal Sampling Distribution $\{ p_n\}_{n=1}^\infty$.} When the perturbation stepsize sequence $\{\mu_n\}_{n=1}^\infty$ is given, sampling the corresponding optimal distribution $p_n = \frac{\mu_n - \mu_{n+1}}{\mu_1}$ could be difficult; in most of cases, $ \{ p_n\}_{n=1}^\infty$ cannot be a ready-to-use distribution naively supported by existing software. Fortunately, we can do it conversely: given an arbitrary PMF $\{ p_n\}_{n=1}^\infty$, the perturbation stepsize takes the form
\begin{align*}
    \mu_n = \mu_1 \PP(N \geq n), \quad \text{ where } N \sim \{ p_n\}_{n=1}^\infty,
\end{align*}
providing a practical way to implement the unbiased zeroth-order gradient estimator.  
To illustrate this point, we provide two concrete examples.

\begin{example}[Geometric $\sfP_k$-Estimators] \label{example:geometric}
We consider the geometric distribution $n \sim \operatorname{Geom}(c)$ ($c\in (0,1)$). Then    $p_n = (1-c)\,c^{\,n-1}$ for all $n\in \mathbb{N}$.  We define \(\mu_n\) by the recursion $\mu_n - \mu_{n+1} = \mu_1\,p_n = \mu_1(1-c)\,c^{\,n-1}.$ 
Summing this relation leads to the closed-form solution $$\mu_n = \mu_1\,c^{\,n-1}$$ and the optimal value $\varrho = \mu_1^2$.
This construction recovers the geometric sampling scheme used by \citet{chen2020unbiased}. We call the $\sfP_k$-estimator constructed on the geometric distribution as the \textit{geometric $\sfP_k$-estimator}. It is easy to verify that the corresponding  $\varphi$ is given as  
$\varphi = \sum_{n=1}^\infty \frac{\mu_n^2}{p_n}=\frac{\mu_1^2}{(1-c)^2}.$
\end{example}

\begin{example}[Zipf's $\sfP_k$-Estimators]\label{example:zipf}
We consider the Zipf's  distribution $n \sim \operatorname{Zipf}(s)$ ($s > 1$). Then    $p_n =  \frac{1}{\zeta(s)} \frac{1}{n^s}$ for all $n\in \mathbb{N}$, where $\zeta$ is the Riemannian zeta function defined as $\zeta(s) = \sum_{n=1}^\infty \frac{1}{n^s}.$  We define \(\mu_n\) by the recursion $\mu_n - \mu_{n+1} = \mu_1\,p_n = \mu_1 \frac{1}{\zeta(s)} \frac{1}{n^s}.$
Summing this relation leads to the closed-form solution $$\mu_n = \mu_1 \left[ 1- \frac{\sum_{j=1}^{n-1} \frac{1}{j^s}}{\zeta(s)}\right].$$
This construction also leads to the optimal value $\varrho = \mu_1^2$. When estimating the upper bound of $\varphi$, we  additionally assume $s>3$. In this case, we have
$\varphi = \sum_{n=1}^\infty \frac{\mu_n^2}{p_n}  \le\;\frac{\zeta(s-2)}{(s-1)^2\,\zeta(s)}\;\mu_1^2.$ 
The detailed calculation is put in \Cref{example:zipf-appendix}.
\end{example}   
 
In both examples, we start with a well-known easy-to-sample distribution $\{p_n\}_{n=1}^\infty$, and calculate the associated perturbation stepsize sequence $\{\mu_n\}_{n=1}^\infty$ either analytically (Geometric $\sfP_k$-estimators) or iteratively  (Zipf's $\sfP_k$-estimators). While  all estimators (i.e. $\sfP_k$-estimator with $k=2,3,4$) achieve the optimal variance in the order with $d$ and $\mu$,  these examples indicate a  key difference between the $\sfP_2$-estimator and the $\sfP_k$-estimator (for $k=3,4$): the variance bound of $\sfP_3$- and $\sfP_4$-estimator 
 is \textit{\textbf{parameter-agnostic}}; that is, once $\{p_n\}$ is specified, no additional tuning of distribution parameters is required to attain the optimal bound $\mu^2$.  This distinction highlight the practical advantages of \(\sfP_3\)- and \(\sfP_4\)-estimators.

\subsection{Convergence of SGD with Unbiased Gradient Estimators}\label{sec:convergence}
In this subsection, we consider the stochastic optimization setting described in \Cref{eq:optimization-problem}, where the goal is to estimate the stochastic gradient \(\nabla f(x; \xi)\) rather than the full gradient. Under the optimal sampling distribution \(\{p_n\}_{n=1}^\infty\) and the corresponding perturbation stepsize sequence \(\{\mu_n\}_{n=1}^\infty\), the convergence upper bound of SGD follows directly from standard results for general unbiased stochastic gradient methods.   

\begin{corollary}[\citet{khaled2022better}]\label{cor:sgd-convergence}
Consider the stochastic optimization problem in \Cref{eq:optimization-problem}, and suppose that the individual loss \( f(x;\xi) \) is second-order differentiable with \( L \)-Lipschitz continuous gradient in \( x \), uniformly over \(\xi\sim\Xi\). Assume the stochastic gradient is approximated using the $\sfP_k$-estimator \( \sfP_k(n,v)\,v \) for \(k =2,3,4\). Let the SGD iteration be defined as $x_{t+1} = x_t - \eta \sfP_k(n_t, v_t)\,v_t$ where $\eta \in (0, \frac{1}{L^2d}]$ is the stepsize. Then the iterates satisfy the following convergence guarantee:
    $$\min_{0\leq t \leq T-1} \EE \|\nabla f(x_t)\|^2 \leq \calO(d^3 \mu^2  \eta + d \eta  +\frac{2}{\eta T}).$$
Consequently, choosing \(\eta = \Theta(1/\sqrt{dT})\) and $\mu = \mathcal{O}(\frac{1}{d})$ yields the optimal complexity $T = \Theta(\frac{d}{\epsilon^4})$ of having $\min_{0\leq t \leq T-1} \EE \|\nabla f(x_t)\| \leq \epsilon$. 
\end{corollary}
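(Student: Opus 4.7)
The plan is to reduce the statement to the general SGD analysis of \citet{khaled2022better}, which requires two properties of the stochastic update direction $g_t := \sfP_k(n_t,v_t)\,v_t$: unbiasedness $\EE[g_t \mid x_t] = \nabla f(x_t)$, and an ABC-type second-moment bound $\EE[\|g_t\|^2 \mid x_t] \leq A\,\|\nabla f(x_t)\|^2 + B$. Once both are in place, the convergence rate follows from a standard descent-lemma telescoping argument, so the bulk of the work lies in verifying these two conditions with $A = \calO(d)$ and $B = \calO(d^3 \mu^2)$.

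For unbiasedness, I would apply \Cref{thm:unbiased} conditionally on each realization of $\xi_t$---permissible since $f(\cdot;\xi)$ has $L$-Lipschitz continuous gradient uniformly in $\xi$, so \Cref{prop:abs-conv} applies pointwise in $\xi$---and then take an outer expectation via the tower property. For the second-moment bound, I would apply \Cref{thm:variance}(b) conditionally on $\xi_t$ with the optimal sampling choice from \Cref{prop:lower_bound_varrho} that attains $\varrho = \mu^2$, obtaining
\[
\EE\bigl[\|g_t\|^2 \,\big|\, x_t,\xi_t\bigr] \;\leq\; d\,\|\nabla f(x_t;\xi_t)\|^2 + \calO(d^3 \mu^2);
\]
the outer expectation over $\xi_t$, combined with a standard bounded-variance assumption on the stochastic gradient, then yields the required ABC form.

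With these in hand, the remainder is routine. Using $L$-smoothness of $f$, conditioning on $x_t$, and inserting the ABC bound produces
\[
\EE f(x_{t+1}) \;\leq\; \EE f(x_t) - \eta\,\EE\|\nabla f(x_t)\|^2 + \tfrac{L\eta^2}{2}\bigl(\calO(d)\,\EE\|\nabla f(x_t)\|^2 + \calO(d^3\mu^2)\bigr).
\]
The stepsize restriction $\eta \leq 1/(L^2 d)$ ensures $\tfrac{L\eta^2 d}{2} \leq \tfrac{\eta}{2}$, so the second-moment term absorbs at most half of the linear descent. Telescoping over $t = 0,\dots,T-1$, dividing by $\eta T/2$, and taking the minimum over $t$ produces the displayed rate; balancing the three terms with $\eta = \Theta(1/\sqrt{dT})$ and $\mu = \calO(1/d)$ then yields the complexity $T = \Theta(d/\epsilon^4)$.

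The main obstacle is the $\calO(d)$ coefficient on $\|\nabla f(x_t;\xi_t)\|^2$ inherited from \Cref{thm:variance}(b): it forces the stepsize to scale as $1/d$ rather than the classical $1/L$, and its propagation through $\EE_\xi\|\nabla f(x;\xi)\|^2$ demands a carefully stated stochastic assumption (bounded variance or expected smoothness). Making that assumption explicit and verifying that the resulting ABC constants $(A,B)$ slot cleanly into the \citet{khaled2022better} template---so that no hidden higher-moment dependence appears---is the delicate step that turns the cited black-box theorem into the concrete rate stated here.
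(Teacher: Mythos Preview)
Your proposal is correct and follows essentially the same route as the paper: verify unbiasedness via \Cref{thm:unbiased}, plug the variance bound from \Cref{thm:variance}(b) (with the optimal $\{p_n\},\{\mu_n\}$ from \Cref{prop:lower_bound_varrho}) into an ABC-type second-moment condition, and then invoke the black-box SGD theorem of \citet{khaled2022better}.

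One clarification is worth making explicit. When you pass from $\EE[\|g_t\|^2 \mid x_t,\xi_t] \leq d\|\nabla f(x_t;\xi_t)\|^2 + \calO(d^3\mu^2)$ to a bound in terms of $\|\nabla f(x_t)\|^2$, you first reach for a ``standard bounded-variance assumption'' and only later mention expected smoothness as an alternative. The Corollary as stated does \emph{not} assume bounded variance of $\nabla f(x;\xi)$, so that route would require adding a hypothesis. The paper instead commits to expected smoothness: by Proposition~2 of \citet{khaled2022better}, the uniform $L$-Lipschitz gradient of $f(\cdot;\xi)$ already yields $\EE_\xi\|\nabla f(x;\xi)\|^2 \leq L\|\nabla f(x)\|^2 + 2L(f^* - \EE_\xi f^*_\xi)$, which gives the ABC constants $A = dL$ and $B = C_k d^3\mu^2 + 2dL(f^* - \EE_\xi f^*_\xi)$ without any new assumption. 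This also explains the extra factor of $L$ in the stepsize restriction $\eta \leq 1/(L^2 d)$: the effective multiplier on $\|\nabla f(x_t)\|^2$ is $dL$, not $d$, so $\tfrac{L\eta^2}{2}\cdot dL \leq \tfrac{\eta}{2}$ is the correct absorption inequality. With that choice pinned down, your sketch and the paper's proof are the same argument.
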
 
 This complexity has matched the lower bound of solving a smooth non-convex optimization problem using zeroth-order gradient-based method \citep{duchi2015optimal} and cannot be further improved without adding additional assumptions.  Though we directly apply the result from \citet{khaled2022better} (which is applicable for all unbiased estimators), the zeroth-order estimation can result in an additional dependence on the dimension $d$; this dependence has been reflected in our upper bound.

\section{Experiments}\label{sec:exp} 
To validate our theoretical results and demonstrate the effectiveness of the proposed unbiased zeroth-order gradient estimators, we conduct experiments on two settings: synthetic objectives and language model optimization. Details and hyperparameter configurations are provided in \Cref{appendix:experiments}.

\begin{figure}[!t]
    \centering
    \includegraphics[width=0.8\linewidth]{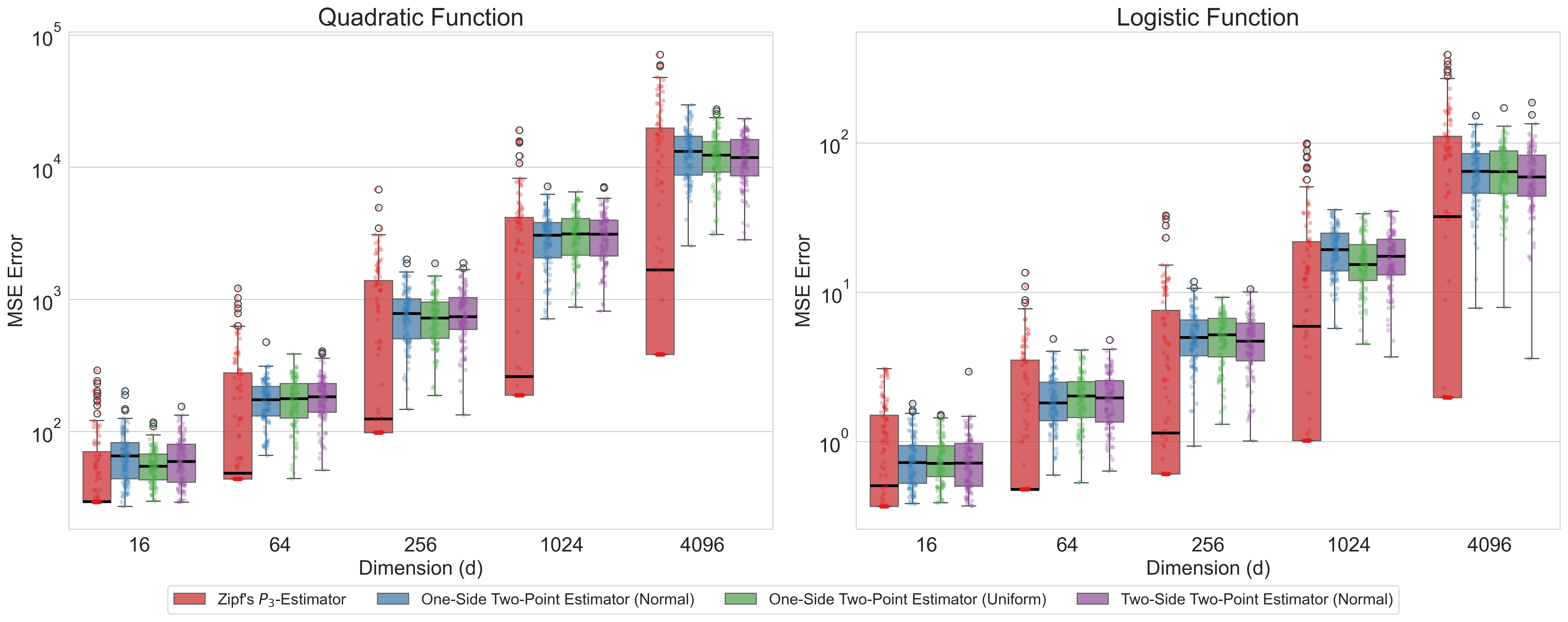} 
    \caption{This figure presents the MSE error of four different estimators across various dimensions $d$ ranging from $16$ to $4096$. The left panel corresponds to the quadratic loss $f_{\text{reg}}$, while the right panel illustrates results for the logistic loss $f_{\text{cls}}$. Each box plot describes the distribution of the MSE error across $100$ random trials. }
    \label{fig:comparison-vs-dimension}
\end{figure}

\subsection{Synthetic Examples}\label{sec:synthetic}
We first evaluate our estimators on two classic loss functions \citep{james2013introduction}: the quadratic loss $f_{\text{reg}}: \RR^d \to \RR$ for linear regression and the logistic loss $f_{\text{cls}}: \RR^d \to \RR$ for binary classification. 
\begin{align*}
    f_{\text{reg}}(x)= x^\top A^\top A x, \quad f_{\text{cls}}(x) = \frac{1}{n} \sum_{i=1}^{n} \log(1 + \exp(-b_i \cdot (a_i^\top \cdot x))),
\end{align*}  
where each entry of $A \in \mathbb{R}^{d\times d}$ is independently sampled from the uniform distribution $U[-1,1]$, each feature vector $a_i \in \mathbb{R}^d$ is sampled from the standard normal distribution $\operatorname{Normal}(0, I_d)$, and $b_i \in \{-1,1\}$ are binary labels generated based on a Bernoulli distribution with the fixed sample size $n$. The gradient of each objective function can be explicitly evaluated; we compare the performance of different zeroth-order gradient estimator using the  Mean-Square-Error (MSE), which is defined as 
\begin{align}
    \text{MSE}( \hat{\nabla}f(x)):= [\hat{\nabla}f(x) - {\nabla}f(x)]^\top  [\hat{\nabla}f(x) - {\nabla}f(x)]. \label{eq:dmse}
\end{align}   

We compare the accuracy of estimating the gradient of two loss functions among four different gradient estimators including Zipf's $\sfP_3$-estimator (\Cref{example:zipf}), two-point estimator with Gaussian or uniform random perturbations, and centralized two-point estimator with uniform perturbation
(the batch size of two-point estimators is adjusted to exactly $3$ function evaluations). For detailed hyper-parameter setting, we put in \Cref{appendix:experiments}.  Several observations can be made from the results shown in \Cref{fig:comparison-vs-dimension}. First,  comparing the same estimator across different dimensions, the MSE error for both objective functions generally increases with the dimension $d$, which is expected as higher-dimensional settings pose greater estimation challenges. Second, comparing different estimators,  the Zipf's $\sfP_3$-estimator consistently achieves lower MSE compared to others.   These results collectively demonstrate the effectiveness of our proposed estimator when estimating the gradient, especially in high-dimensional settings, which will be further validate in the next experiment.



\subsection{Language Model Optimization}\label{sec:llm}
In this section, we demonstrate the practical applicability of the unbiased gradient estimators in optimizing the deep neural network. Particularly, we apply it to the task of fine-tuning a pre-trained language model.  Using zeroth-order optimization to fine-tune the LLMs has been an active research field in recent years due to its effectiveness in saving memory \citep{malladi2023fine, zhang2024revisiting, gautam2024variance, guo2024zeroth}; it allows for fine-tuning model parameters without requiring access to the full computational graph, which can be prohibitively large for modern language models. 

\begin{figure}[H]
    \centering
    \includegraphics[width=0.85\linewidth]{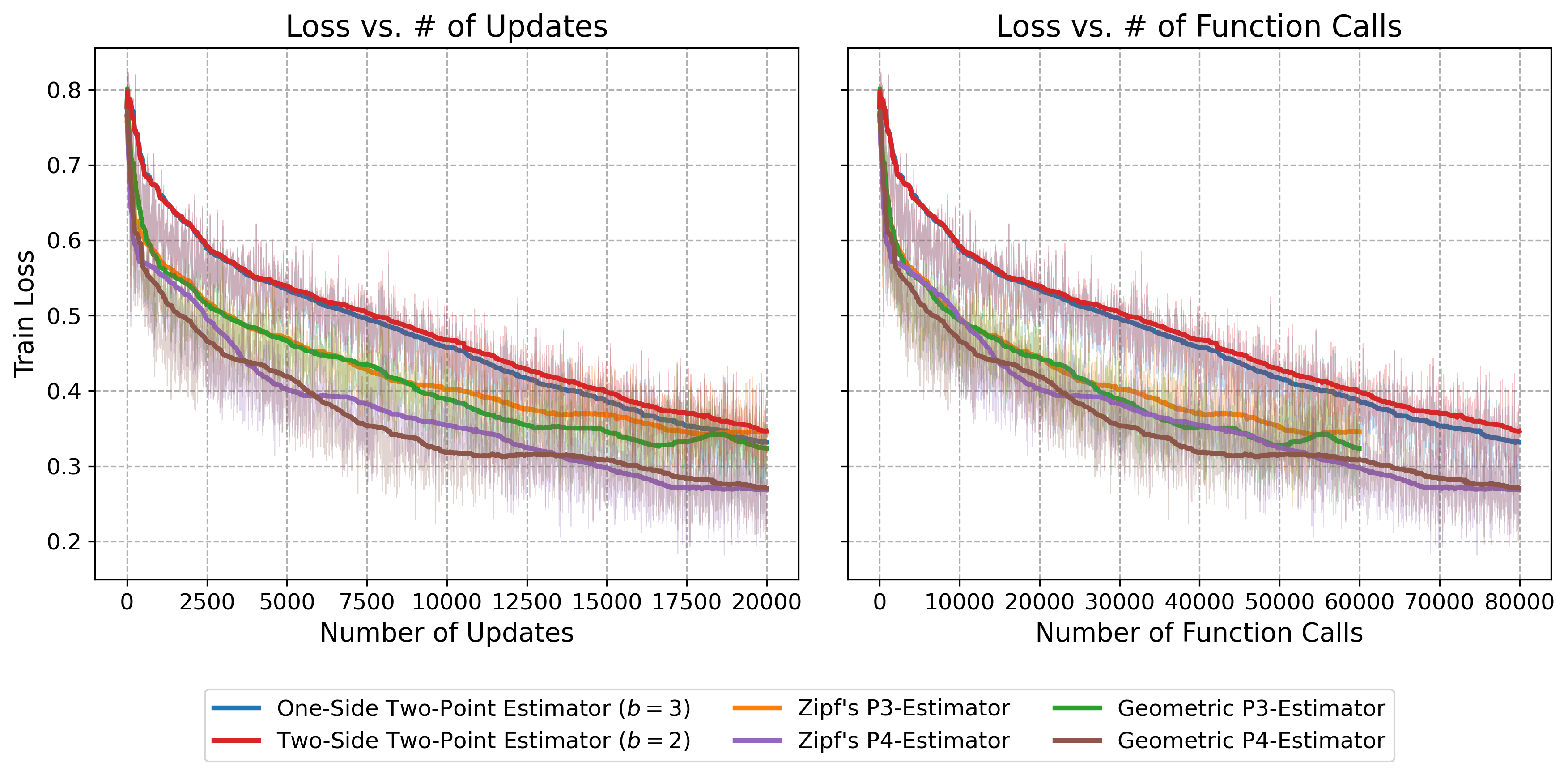}  
    \caption{Comparison of training loss during fine-tuning of OPT-1.3B on SST-2 using different zeroth-order gradient estimators. The right panel rescales iterations by the number of function evaluations. The unbiased Zipf’s $\sfP_3$-, Zipf’s $\sfP_4$-, Geometric $\sfP_3$-, and Geometric $\sfP_4$-estimators achieve faster convergence under the same number of function evaluations.}
    \label{fig:lm-optimization}
\end{figure}
 
We conducted experiments using the OPT-1.3b model \citep{zhang2022opt} for sentiment classification on the Stanford Sentiment Treebank (SST-2) dataset \citep{socher-etal-2013-recursive}. To ensure fair comparison, we maintained consistent parameters across experiments: the learning rate $\eta=10^{-4}$ and the perturbation stepsize $\mu=10^{-3}$ (corresponding to $\mu_1$ in the proposed unbiased estimators), which is taken from  \citet{malladi2023fine}'s Table 7 without additional tuning. For two-point estimators, we have adjusted the batch size to align $4$ function evaluations. Detailed experimental settings are provided in \Cref{appendix:experiments}. As shown in \Cref{fig:lm-optimization}, zeroth-order optimization using the proposed unbiased zeroth-order estimators achieved superior performance compared to other baseline methods. 

\paragraph{Direct Comparison to the Two-Point Estimator ($b=1$)} Previously, we compare our proposed methods against two-point estimators under the constraint of four function evaluations. It is also interesting to consider a direct comparison with classical two-point estimators using a batch size of $b=1$, which corresponding to two function evaluations. \Cref{fig:lm-optimization-2} presents this comparison.

\begin{figure}[H]
    \centering
    \includegraphics[width=0.85\linewidth]{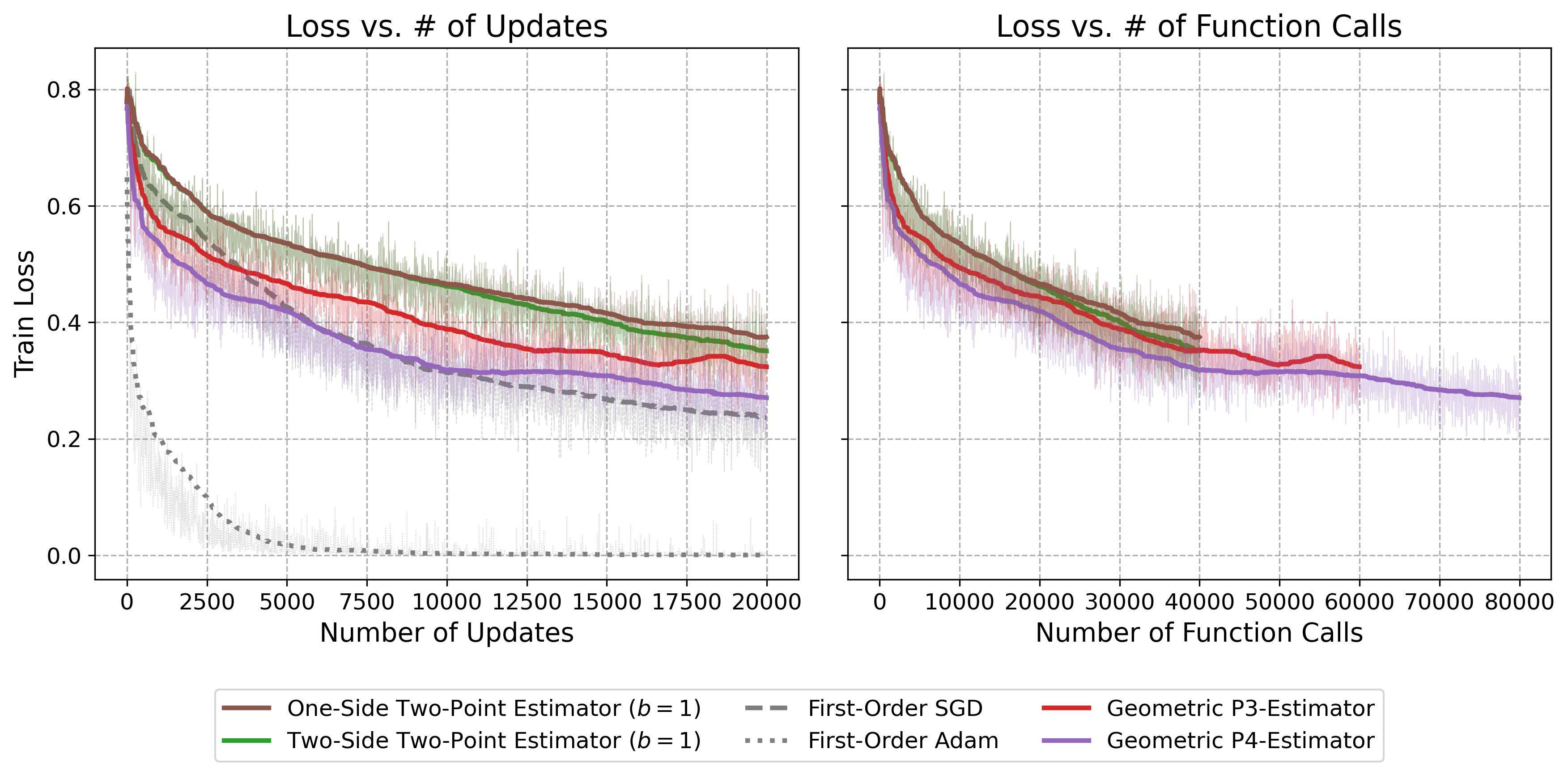}  
    \caption{Comparison to the two-point estimator with $b=1$ under the same setting as \Cref{fig:lm-optimization}. We also include the performance of the first-order Adam and SGD in the left panel.}
    \label{fig:lm-optimization-2}
\end{figure}

 Choosing larger batch sizes gives more accurate gradient estimates, leading to lower training loss when measured by the number of updates. However, we also observe that selecting the batch size as  $b=1$  may also present its own advantage. Therefore, choosing the batch size can be non-trivial and it requires to balance the variance of gradient estimation against the per-step cost.

\section{Conclusion}
In this work, we proposed a novel class of unbiased zeroth-order gradient estimators based on a telescoping series expansion of directional derivatives. We established new theoretical results, including a sufficient condition for the expectation representation (\Cref{prop:abs-conv}), the unbiasedness of the proposed estimators (\Cref{thm:unbiased}), a variance analysis for four specific constructions (\Cref{thm:variance}), and the characterization of the optimal sampling distribution and perturbation stepsize sequence (\Cref{prop:lower_bound_varrho}). We further demonstrated that SGD equipped with our estimators achieves optimal sample complexity and empirically outperforms existing mini-batch two-point estimators. These results provide a principled foundation for a new class of estimators in zeroth-order optimization, offering both theoretical insights and practical improvements.




{
\small
\bibliography{ref} 
\bibliographystyle{plainnat} 
}

\newpage
\newpage
\appendix

\addcontentsline{toc}{section}{Appendix} 
\part{Appendix} 
\parttoc 
\allowdisplaybreaks
\section{Additional Backgrounds}
\subsection{Gradient Estimators in Zeroth-Order Optimization}\label{appendix:gradient-estimators}
\paragraph{One-Point Zeroth-Order Estimator} One-point estimators represent the simplest class, needing only a single function query per estimate. This construction makes them suitable when queries are costly or limited, like in online settings \citep{flaxman2005online}. A common form, motivated by Gaussian smoothing \citep{nesterov2017random}, is $$(\text{Single-Point}) \qquad \hat{\nabla}_{\text{sgl}} f(x) = \frac{1}{\mu} f(x + \mu v) v,$$ 
where $v$ is often drawn from $\operatorname{Normal}(0, I_d)$. While the expectation $\mathbb{E}[\frac{1}{\mu} f(x + \mu v) v]$ approximates the gradient of the smoothed function $\nabla_x \left[\EE_{v \sim \operatorname{Normal}(0, I_d)} 
 f(x+\mu v)\right]$, the estimator is biased regarding the true gradient $\nabla f(x)$. This bias diminishes as the smoothing parameter $\mu\to 0$ \citep{berahas2022theoretical}. However, these estimators suffer from high variance, potentially scaling with $d^2$ and exploding as $\mu \to 0$ \citep{flaxman2005online}.   

\paragraph{Two-Point Zeroth-Order Estimator} Two-point estimators improve on one-point methods by using two function evaluations, often via a finite difference along a random direction \citep{shamir2017optimal}. The standard difference form is 
\begin{align*}
    (\text{Two-Side})\qquad &\hat{\nabla}_{\text{2-side}} f(x) = \frac{f(x + \mu v)  - f(x-\mu v)}{2\mu} v, \\
    (\text{One-Side})\qquad &\hat{\nabla}_{\text{1-side}} f(x) =  \frac{f(x + \mu v)  - f(x )}{\mu} v,
\end{align*} 
requiring two queries \citep{shamir2017optimal,nesterov2017random}. This construction approximates the directional derivative \citep{chen2020unbiased}. Their expectation exactly matches the gradient of a smoothed function $\nabla_x f_\mu(x)$ \citep{nesterov2017random} and maintains a $\calO(\mu)$-level error \citep{ma2025revisiting}. Variance is significantly lower than one-point methods, often scaling linearly with dimension $d$ \citep{duchi2015optimal, berahas2022theoretical}. 

\paragraph{Multiple-Point Zeroth-Order Estimator} Multiple-point estimators use more than two function evaluations to further enhance gradient estimate quality. Common strategies include Finite-Difference method \citep{dai2015convergence} or mini-batch averaging \citep{duchi2015optimal}. The finite-difference method approximates the gradient using finite differences along each standard basis direction:
$$(\text{Finite-Difference})\qquad \hat{\nabla}_{\text{fin-diff}} f(x) = \sum_{i=1}^d \frac{f(x + \mu e_i)  - f(x-\mu e_i)}{2\mu} e_i, $$
requiring $2d$ queries, where $e_i$ is the $i$-th coordinate vector. Mini-batch averaging reduces variance by averaging $b$ independent two-point estimates: 
$$(\text{Mini-Batch})\qquad \hat{\nabla}_{\text{batch}} f(x) = \sum_{i=1}^b \frac{f(x + \mu v_i)  - f(x-\mu v_i)}{2\mu} v_i, $$
The finite-difference method offers low intrinsic variance but high query cost. Mini-batching reduces base estimator variance by $1/b$ at a cost of $b$ or $2b$ queries. These multi-point approaches can be combined arbitrary directional derivative estimators.

\subsection{Zeroth-Order SGD and Its Variants} 

\paragraph{Vanilla Zeroth-Order SGD} The convergence of SGD has been extensively studied under various settings. \citet{ghadimi2013stochastic} established complexity results for computing approximate solutions using first-order and zeroth-order (gradient-free) information with Gaussian smoothing. For smooth convex objective functions, \citet{duchi2015optimal} obtained the optimal convergence upper bound for SGD under the zeroth-order optimization (ZOO) setting. \citet{nesterov2017random} provided the optimal convergence upper bound for Gaussian smoothing. In the realm of nonconvex optimization, \citet{ji2019improved} proposed two new zeroth-order variance-reduced optimization algorithms, ZO-SVRG-Coord-Rand and ZO-SPIDER-Coord, and provided improved analysis for the existing ZO-SVRG-Coord algorithm. These methods achieved better convergence rates and function query complexities than previous approaches. \citet{berahas2022theoretical} derived convergence analyses for finite differences, linear interpolation, Gaussian smoothing, and uniform sphere smoothing methods. Recent studies have focused on non-smooth settings. \citet{davis2022gradient} and \citet{zhang2020complexity} established the sample complexity for Lipschitz functions without assuming smoothness. \citet{lin2022gradient} derived the complexity upper bound of SGD while noting a $\sqrt{d}$ scale compared to the smooth setting. Notably, \citet{rando2023optimal} and \citet{kornowski2024algorithm} revealed that by applying certain techniques, the non-smooth case is not inherently more challenging than the smooth case. A potential direction for extending this line of research is to explore the intersection between zeroth-order SGD and random reshuffling \citep{ma2020understanding,mishchenko2020random}, minimax optimization \citep{chen2022accelerated}, or dependent data \citep{ma2022data}.

\paragraph{Variance-Reduced Zeroth-Order SGD}  
A key bottleneck in vanilla zeroth-order SGD is the high variance of gradient estimators, which arises from both stochastic data sampling and the inherent randomness in the gradient estimation process. This high variance necessitates small stepsizes, leading to slow convergence \citep{liu2020primer}. To address the variance from stochastic data sampling, variance reduction techniques—originally developed for first-order methods \citep{fang2018spider,defazio2014saga,johnson2013accelerating,nguyen2017sarah}, have been adapted to the zeroth-order setting. Algorithms such as ZO-SVRG \citep{liu2018zeroth,huang2019faster,gu2021black}, ZO-SVRG/SPIDER-Coord \citep{ji2019improved}, and ZO-SPIDER/SARAH \citep{fang2018spider,ji2019improved,chen2023faster} leverage epoch-based updates with variance-reducing correction terms or recursive estimator refinements. These methods significantly improve convergence by reducing the iteration complexity.

\paragraph{Memory-Efficient Zeroth-Order SGD}  
Standard SGD typically requires storing all intermediate gradient across layers to enable chain-rule-based backpropagation, which incurs substantial memory overhead, especially when training large models. To alleviate this, MeZO \citep{malladi2023fine} introduces a memory-efficient approach wherein it suffices to store the random seed used to generate the perturbation vector for each layer, dramatically reducing memory consumption. This principle motivates algorithms such as Addax \citep{li2024addax}, ElasticZO \citep{sugiura2025elasticzo}, and ZO2 \citep{wang2025zo2}, along with related benchmarking efforts \citep{zhang2024revisiting,gautam2024variance,wang2024simultaneous,guo2024zeroth}. Additional strategies exploit sparsity to further reduce memory usage; notable examples include ZORO \citep{cai2022zeroth}, the Extreme Sparsity framework \citep{guo2024zeroth}, and the One-Bit method \citep{cai2022one}.
 
\subsection{Discussions on the Forward Auto-Differentiation (AD) Approach}
The forward gradient \citep{griewank2008evaluating} provides the exact directional derivative (with exactly zero bias), while the zeroth-order approach offers only an approximation of the derivative gradient. As a result, the zeroth-order approximation inherently introduces additional variance (even if it can be unbiased). As pointed out by \citet{zhang2024revisiting}, this makes the Forward AD method theoretically better in terms of estimator quality. However, there still multiple scenarios where  the zeroth-order method is preferable.  
\begin{itemize}
    \item \textit{Implementation Difficulty}: The practical implementation of Forward AD heavily relies on the availability of JVP (a.k.a. the Jacobian-Vector Product). A naive implementation will not reduce the memory usage and potentially increase the computation cost.  

    \item \textit{Memory usage}:  Forward AD can be memory-efficient when implemented properly.  However, it still presents a higher memory usage  than the zeroth-order optimization.  Therefore, for the edge device or other extreme cases where the memory cost is sensitive, we may still prefer the zeroth-order approach.
\end{itemize}
We also note that  zeroth-order optimization is clearly advantageous in black-box settings where the forward gradient is not available. Therefore, the forward auto-differentiation and zeroth-order approaches are not mutually exclusive, but complementary, depending on the feasibility and the device memory.

\subsection{Discussions on the Difference Between Our Results and \citet{chen2020unbiased}}
Although the telescoping structure is the same as the one used in \citet{chen2020unbiased} as we have commented in the introduction section, we have developed more results to the unbiased zeroth-order gradient estimator beyond this telescoping structure:
\begin{enumerate}
    \item \textit{Identify when we can have an unbiased gradient estimator}: We identify the condition under which the telescoping structure admits a valid expectation representation (\Cref{prop:abs-conv}). This condition is critical for constructing unbiased estimators, but has not been established in \citet{chen2020unbiased}. 
    
    \item \textit{More general unbiased gradient estimator \& Reduce the number of function evaluations from $4$ to $1$:}  The estimator in \citet{chen2020unbiased} is a special case of our 
$\mathsf{P}_4$-estimator. Our framework extends beyond this, answering a fundamental theoretical question: \textit{What is the minimal number of function evaluations needed to construct an unbiased gradient estimator?} We improve the known answer from $4$ give by \citet{chen2020unbiased} to $1$. 
    \item \textit{Identify the necessary and sufficient condition of achieving the optimal variance:}  Moreover, one of our key focuses is identifying optimal parameter sequences 
 $\{\mu_n\}$ and $\{p_n\}$
 (\Cref{thm:variance}), rather than proposing a specific estimator.  
\end{enumerate}

\section{Bias Analysis}\label{appendix:bias}
\subsection{Absolute Convergence}\label{appendix:absolute-convergence}
In this subsection, we derive a sufficient condition to guarantee the expectation representation of the telescoping series introduced in \Cref{eq:expectation}. This requires ensuring the series is absolutely convergent, a property essential for interpreting it as the expectation of a well-defined random variable. The following definitions are directly taken from \citet{Folland_Taylor2024}:
\begin{definition}[Convergent series]
    A series $\sum_{n=1}^\infty a_n$ is said to be \textit{convergent} if the sequence of partial sums $S_N=\sum_{n=1}^N a_n$ is a convergent sequence; that is, $\lim_{N\to \infty} S_N$ exists.
\end{definition}
\begin{definition}[Absolutely convergent series]
    A series $\sum_{n=1}^\infty a_n$ is said to be \textit{absolutely convergent} if the series $\sum_{n=1}^\infty |a_n|$ is convergent. 
\end{definition}

The following classical result, known as the Riemann series theorem \citep{riemann1868darstellbarkeit,spivak2008calculus}, highlights the necessity of absolute convergence when interpreting an infinite sum as a well-defined value. 
\begin{theorem}[Riemann Series Theorem]
    Let \( \sum_{n=1}^\infty a_n \) be a conditionally convergent series of real numbers (i.e., convergent but not absolutely convergent). Then, for any real number \( r \in \RR \), there exists a rearrangement \( \sigma: \mathbb{N} \to \mathbb{N} \) such that $\sum_{n=1}^\infty a_{\sigma(n)} = r.$ Moreover, there exist rearrangements such that the sum diverges to \( +\infty \), \( -\infty \), or fails to converge at all.
\end{theorem}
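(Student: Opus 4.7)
The plan is to prove the Riemann series theorem by an explicit greedy construction of a rearrangement. The strategy exploits the structural consequence of conditional convergence that the positive and negative parts of the series individually diverge, together with the standard fact that $a_n \to 0$. First, I would define $a_n^+ = \max(a_n, 0)$ and $a_n^- = \max(-a_n, 0)$, so that $a_n = a_n^+ - a_n^-$ and $|a_n| = a_n^+ + a_n^-$. If either $\sum a_n^+$ or $\sum a_n^-$ converged, combining with convergence of $\sum a_n$ would force both to converge, whence $\sum |a_n|$ would converge, contradicting non-absolute convergence. Thus both $\sum a_n^+$ and $\sum a_n^-$ diverge to $+\infty$. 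Let $(p_k)_{k \ge 1}$ and $(-q_k)_{k \ge 1}$ denote the subsequences of strictly positive and strictly negative terms listed in their original order; both are infinite and satisfy $\sum p_k = \sum q_k = +\infty$. Moreover $a_n \to 0$ implies $p_k \to 0$ and $q_k \to 0$.

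For a finite target $r \in \RR$, I would build the rearrangement by alternately appending positive terms in order until the running sum first strictly exceeds $r$, then appending negative terms in order until it first drops strictly below $r$, and repeating. Formally, let $m_1$ be the smallest index with $p_1 + \cdots + p_{m_1} > r$, then $n_1$ the smallest index with $p_1 + \cdots + p_{m_1} - q_1 - \cdots - q_{n_1} < r$, and inductively continue with indices $m_{k+1} > m_k$ and $n_{k+1} > n_k$. Divergence of both partial-sum sequences guarantees that each threshold crossing occurs after finitely many additional terms, and consuming $(p_k)$ and $(-q_k)$ in their natural orders ensures that every term $a_n$ is used exactly once (with any zero terms inserted as a separate block between consecutive switches).

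Next, I would bound the deviation of the rearranged partial sums $T_N$ from $r$. Between two consecutive switches, $T_N$ moves monotonically toward $r$ and, by the minimality of $m_k$ or $n_k$, overshoots $r$ by at most the magnitude of the last appended term. Hence $|T_N - r| \le \max(p_{m_k}, q_{n_k})$ whenever $N$ lies in the $k$-th phase. Since $p_k, q_k \to 0$, for any $\epsilon > 0$ there exists $K$ with $p_{m_k}, q_{n_k} < \epsilon$ for all $k \ge K$, giving $|T_N - r| < \epsilon$ for every $N$ beyond the $K$-th switch, i.e.\ $T_N \to r$. For $r = \pm\infty$ I would replace the fixed target by a sequence of thresholds $r_k \to \pm\infty$ that are crossed in the same greedy fashion; for genuinely non-convergent behavior I would oscillate between two fixed thresholds $r_1 < r_2$ so that the partial sums accumulate at both values.

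The main obstacle I expect is the bookkeeping required to verify simultaneously that the inductive construction (i) defines a bijection $\sigma \colon \NN \to \NN$, i.e.\ every term is enumerated exactly once, and (ii) controls every partial sum rather than only those at the switching indices. Both are handled by the same observation used for the convergence bound: within each phase the partial sum is monotone and deviates from $r$ only through the final term, whose magnitude vanishes. Tracking the indices $(m_k, n_k)$ against the disjoint decomposition of $\NN$ into positive, negative, and zero index sets then yields the bijection claim and completes the proof.
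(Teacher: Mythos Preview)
Your proposal is the standard textbook argument for the Riemann series theorem and is correct in outline and detail: the key ingredients---divergence of both $\sum a_n^+$ and $\sum a_n^-$, the greedy overshoot construction, the overshoot bound by the last appended term, and the handling of the $\pm\infty$ and oscillatory cases---are all present and sound.

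However, there is nothing to compare against: the paper does not prove this theorem. It is stated in the appendix as a classical background result, with citations to Riemann and Spivak, and then used only to motivate why absolute convergence is needed for the expectation representation of the telescoping series. No proof or proof sketch appears anywhere in the paper. Your argument is exactly the classical one found in the cited references (e.g., Spivak), so in that sense you have reproduced what the paper defers to the literature.
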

This theorem underscores the critical distinction between conditional and absolute convergence: a convergent series may yield different values under different summation orders. However, the definition of expectation for a random variable does not permit such ambiguity, since the outcomes have no inherent order. To guarantee a well-defined expectation, absolute convergence is required: that is, for a random variable $X$ with outcomes $\{x_n\}$ and probabilities $\{p_n\}$, it must hold that $\EE |X| = \sum_{n=1}^\infty |x_n| p_n < \infty$. 

We now recap \Cref{prop:abs-conv} describing the condition where the telescoping series is absolutely convergent, enabling a valid expectation representation.
\begin{proposition}\label{prop:abs-conv-appendix}
If the second-order continuously differentiable function $f:\RR^d \to \RR$ has $L$-Lipschitz continuous gradient and $\sum_{n=1}^\infty \mu_n < \infty$, then the series $$\sum_{n=1}^\infty p_n\left[\frac{f(x+\mu_1 v) - f(x)}{\mu_1} + \frac{1}{p_n}\left( \frac{f(x+\mu_{n+1} v) - f(x)}{\mu_{n+1}} - \frac{f(x+\mu_n v) - f(x)}{\mu_n}\right) \right]$$ is absolutely convergent and its limit is $\nabla_v f(x)$.
\end{proposition}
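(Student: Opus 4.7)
}

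The plan is to split the series into two pieces via the triangle inequality and then control each one using the $L$-Lipschitz gradient bound. Writing $a_n := p_n\, c + b_n$ with $c := \tfrac{f(x+\mu_1 v)-f(x)}{\mu_1}$ and $b_n := \tfrac{f(x+\mu_{n+1} v)-f(x)}{\mu_{n+1}} - \tfrac{f(x+\mu_n v)-f(x)}{\mu_n}$, the triangle inequality gives $\sum_n |a_n| \le |c|\sum_n p_n + \sum_n |b_n| = |c| + \sum_n |b_n|$. Since $|c|<\infty$ is automatic, the whole problem reduces to proving $\sum_{n=1}^\infty |b_n| < \infty$.

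The key estimate will come from the standard inequality for $L$-smooth functions,
\[
\Bigl| \tfrac{f(x+\mu v)-f(x)}{\mu} - \nabla f(x)^\top v \Bigr| \;\le\; \tfrac{L\mu}{2}\|v\|^2,
\]
which I would derive from the integral form of the remainder or directly from the definition of $L$-Lipschitz gradient. Applying this twice (to $\mu_n$ and to $\mu_{n+1}$) and using the triangle inequality yields $|b_n| \le \tfrac{L\|v\|^2}{2}(\mu_n + \mu_{n+1})$. The assumption $\sum_{n=1}^\infty \mu_n < \infty$ then immediately gives $\sum_n |b_n| \le L\|v\|^2 \sum_n \mu_n < \infty$, establishing absolute convergence.

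For the value of the sum, I would exploit that absolute convergence licenses arbitrary rearrangement and term-by-term manipulation. The first piece contributes $c\sum_n p_n = c$. The second piece is a telescoping sum whose $N$-th partial sum equals $\tfrac{f(x+\mu_{N+1}v)-f(x)}{\mu_{N+1}} - c$. Since $\sum_n \mu_n < \infty$ forces $\mu_{N+1}\to 0$, and the definition of the directional derivative gives $\tfrac{f(x+\mu_{N+1}v)-f(x)}{\mu_{N+1}} \to \nabla_v f(x)$, the second piece converges to $\nabla_v f(x) - c$. Adding the two pieces cancels $c$ and leaves exactly $\nabla_v f(x)$.

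The main subtlety I anticipate is not the computation itself but emphasizing \emph{why} the assumption $\sum_n \mu_n < \infty$, rather than the weaker $\mu_n \to 0$, is the right hypothesis: the telescoping sum converges conditionally under $\mu_n\to 0$ alone, but the Riemann-style reordering implicit in the expectation representation genuinely requires absolute convergence. The Lipschitz-gradient bound $|b_n| = O(\mu_n + \mu_{n+1})$ is precisely tight enough to convert summability of $\{\mu_n\}$ into absolute summability of $\{b_n\}$, and this is the hinge of the proof. Everything else is a routine telescoping plus continuity argument.
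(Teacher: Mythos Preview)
Your proposal is correct and follows essentially the same approach as the paper: reduce to absolute convergence of the difference terms $b_n$ via the triangle inequality, bound $|b_n|\le \tfrac{L\|v\|^2}{2}(\mu_n+\mu_{n+1})$ using the $L$-smoothness (Taylor) estimate, and conclude from $\sum_n\mu_n<\infty$. Your treatment of the limit value is in fact more explicit than the paper's, which simply notes that the limit is determined by the original convergent telescoping series.
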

\begin{proof}  First, because $\sum_{n=1}^\infty |a_n+b_n| \leq \sum_{n=1}^\infty |a_n | +\sum_{n=1}^\infty |b_n |$, it suffices to prove $$\sum_{n=1}^\infty  \left( \frac{f(x+\mu_{n+1} v) - f(x)}{\mu_{n+1}} - \frac{f(x+\mu_n v) - f(x)}{\mu_n}\right)  $$ is absolutely convergent. To prove its absolute convergence, we estimate the magnitude of the difference term using Taylor’s theorem: 
    \begin{align*}
        &\left| \frac{f(x+\mu_{n+1} v)-f(x)}{\mu_{n+1}}-\frac{f(x+\mu_n v)-f(x)}{\mu_n}\right| \\
        \overset{(i)}{=} & \left| \frac{ \mu_{n+1}\nabla f(x)^\top v  + R(x) \mu_{n+1}^2}{ \mu_{n+1}} - \frac{ \mu_{n}\nabla f(x)^\top v  + R'(x) \mu_{n}^2}{ \mu_{n}}\right| \\ 
        \overset{(ii)}{\leq} & \frac{L}{2}\left|   \mu_{n+1}  +  \mu_{n} \right|
    \end{align*} 
    where (i) applies the Taylor theorem \citep{Folland_Taylor2024} with setting the integral form remainder $R(x):=\int_0^1 (1-t) \sum_{|\alpha|=2} \frac{\partial^2}{\partial x_1^{\alpha_1}\dots x_d^{\alpha_d}}   f(x+t\mu v) \ dt$, (ii) assumes the global Lipschitz continuous gradient, which results in the uniform estimate
    $R(x) \leq \frac{L}{2}$
    for all $x \in \RR^d$.  Therefore, to ensure the telescoping series is absolutely continuous, it suffices to require $\sum_{n=1}^\infty \mu_n < \infty.$  
    The limit is directly determined by the original convergent series. It concludes our proof. 
\end{proof} 

In the following two examples, we present commonly used sequences \( \{\mu_n\} \) that satisfy the condition \( \sum_{n=1}^\infty \mu_n < \infty \), ensuring the absolute convergence required in \Cref{prop:abs-conv-appendix}.

\begin{example}[Exponential Decay]
Let \( \mu_n = \alpha^n \) for some constant \( 0 < \alpha < 1 \). Then,
\[
\sum_{n=1}^\infty \mu_n = \sum_{n=1}^\infty \alpha^n = \frac{\alpha}{1 - \alpha} < \infty.
\] 
\end{example}

\begin{example}[Polynomial Decay]
Let \( \mu_n = \frac{1}{n^s} \) for some constant \( s > 1 \). Then,
\[
\sum_{n=1}^\infty \mu_n = \sum_{n=1}^\infty \frac{1}{n^s} < \infty,
\]
which is well-known as the Riemann zeta function $\zeta(s)$. 
\end{example}

\subsection{Unbiasedness of Zeroth-Order Estimators in $\mathscr{P}$-Family} \label{sec:unbiasedness}  
We begin by recalling the definition of the unbiased zeroth-order gradient estimators, denoted by $\mathscr{P}:=\mathscr{P}(f, \{\mu_n\}_{n=1}^\infty, \{p_n\}_{n=1}^\infty,V)$, as given by \Cref{def:unbiased-family}. Under suitable conditions on the differentiable function $f:\RR^d \to \RR$ and the sequence $\{\mu_n\}_{n=1}^\infty$ (e.g., those provided by \Cref{prop:abs-conv}), this definition naturally yields the desired expectation representation. Moreover,  the random distributions $\{p_n\}_{n=1}^\infty$ and $V$ are independent. These conditions are sufficient to guarantee the unbiasedness stated in the following result.
\begin{theorem}[Unbiasedness]\label{thm:unbiased-appendix}
Let $\mathscr{P}:=\mathscr{P}(f, \{ \mu_n\}_{n=1}^\infty, \{ p_n\}_{n=1}^\infty,V)$ is defined as \Cref{def:unbiased-family}. Then, for any estimator $\mathsf{P}(n,v) \in \mathscr{P}$, the following hold:
\begin{enumerate}[label=(\alph*)]
    \item \(\mathbb{E}[\mathsf{P}(n,v) \mid v] = \nabla_v f(x)\); that is, \(\mathsf{P}(n,v)\) is an unbiased estimator of the directional derivative $\nabla_v f(x)$.
    \item If the random direction \(v\) is chosen independently of the sampling $n\sim \{p_n\}_{n=1}^\infty $ and satisfies \(\mathbb{E}[v\,v^\top]=I\), then
    \[
    \mathbb{E}_{n\sim \{p_n\}_{n=1}^\infty , v\sim V}\Bigl[\mathsf{P}(n,v)\,v\Bigr] = \nabla f(x),
    \]
    so that \(\mathsf{P}(n,v)\,v\) is an unbiased estimator of the full gradient.
\end{enumerate}
\end{theorem}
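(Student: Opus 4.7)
The plan is to prove parts (a) and (b) in sequence, with (a) being the main substance and (b) following as a short corollary via the isotropy condition on $v$.

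For part (a), I would apply the tower property of conditional expectation, conditioning first on the pair $(n,v)$ and then marginalizing out $n$. By the defining property of the family $\mathscr{P}$ in \Cref{def:unbiased-family}, the inner conditional expectation $\mathbb{E}[\mathsf{P}(n,v) \mid n,v]$ equals exactly
\[
\frac{f(x+\mu_1 v)-f(x)}{\mu_1} + \frac{1}{p_n}\left(\frac{f(x+\mu_{n+1}v)-f(x)}{\mu_{n+1}} - \frac{f(x+\mu_n v)-f(x)}{\mu_n}\right).
\]
Taking expectation with respect to $n\sim\{p_n\}_{n=1}^\infty$ (and keeping $v$ fixed) yields $\sum_{n=1}^\infty p_n$ times the bracketed quantity. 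The key step is then to identify this sum with the telescoping series in \Cref{eq:series} and invoke the expectation representation \Cref{eq:expectation}, which is justified precisely because the series is absolutely convergent --- this is where the hypothesis of \Cref{def:unbiased-family} (equivalently, \Cref{prop:abs-conv}) enters. Absolute convergence is what allows the order-free interchange between the telescoping series and the expectation over $n$, so it is the one place requiring care.

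For part (b), I would use the independence of $v$ and $n$ together with the tower law in the opposite direction. Conditioning on $v$ first gives
\[
\mathbb{E}_{n,v}\bigl[\mathsf{P}(n,v)\,v\bigr] = \mathbb{E}_v\!\left[\mathbb{E}[\mathsf{P}(n,v)\mid v]\,v\right] = \mathbb{E}_v\!\left[\nabla_v f(x)\,v\right],
\]
where the second equality uses part (a). Writing the directional derivative as $\nabla_v f(x) = v^\top \nabla f(x)$, the integrand becomes $v\,v^\top \nabla f(x)$, and pulling the deterministic $\nabla f(x)$ outside the expectation yields $\mathbb{E}[vv^\top]\,\nabla f(x) = \nabla f(x)$ by the isotropy hypothesis $\mathbb{E}[vv^\top]=I$.

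The only real obstacle is the justification of interchanging the infinite sum with the expectation over $n$ in part (a); everything else is a direct application of linearity, the tower property, and the definition of $\mathscr{P}$. Since \Cref{prop:abs-conv} already supplies absolute convergence under the stated regularity of $f$ and summability of $\{\mu_n\}$, this step reduces to citing that proposition (or, equivalently, the absolute convergence hypothesis baked into \Cref{def:unbiased-family}) and applying Fubini-Tonelli for the counting measure $\{p_n\}$. No further technical machinery is needed, and the proof should be quite compact.
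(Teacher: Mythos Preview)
Your proposal is correct and follows essentially the same route as the paper: both parts are handled via the tower property, with part (a) reducing to the expectation representation of the telescoping series (guaranteed by the absolute-convergence hypothesis in \Cref{def:unbiased-family}) and part (b) following from part (a) together with $\mathbb{E}[vv^\top]=I$. Your write-up is, if anything, slightly more explicit than the paper's about where absolute convergence is invoked and about the final identity $\mathbb{E}_v[\nabla_v f(x)\,v]=\mathbb{E}[vv^\top]\nabla f(x)$.
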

\begin{proof} By \Cref{def:unbiased-family}, the directional derivative  $\nabla_v f(x)$ naturally has the expectation representation. 
\begin{enumerate}[label=(\alph*)]
    \item Denote
    $$X_n:=\frac{f(x+\mu_1 v) - f(x)}{\mu_1} + \frac{1}{p_n}\left( \frac{f(x+\mu_{n+1} v) - f(x)}{\mu_{n+1}} - \frac{f(x+\mu_n v) - f(x)}{\mu_n}\right).$$
    Then by the tower property of the conditional expectation,
    \begin{align*}
        \nabla_v f(x) =\EE_{n\sim  \{ p_n\}_{n=1}^\infty}[X_n\mid v]  = \EE_{n\sim  \{ p_n\}_{n=1}^\infty}[ \EE[ P(n,v)| n] \mid v] = \EE[P(n,v)\mid v].
    \end{align*}
    It concludes the proof. 

    \item We consider the conditional expectation $\EE[\cdot|v]$. We have
    \begin{align*}
        \EE\Bigl[\mathsf{P}(n,v)\,v\Bigr] &= \EE \left[ \EE\Bigl[\mathsf{P}(n,v)\,v \Big| v \Bigr]  \right]\\ 
        &=\EE \left[ \EE\Bigl[\mathsf{P}(n,v) \Big| v \Bigr] v  \right] \\ 
        &= \EE[ \nabla_v f(x) v] \\ 
        &= \nabla f(x) 
    \end{align*}
    Therefore, we conclude that $\mathsf{P}(n,v) v$ is an unbiased estimator of the gradient $\nabla f(x)$.  
\end{enumerate}
\end{proof}

\section{Variance Analysis}\label{appendix:variance} 

In this section we present the proof of \Cref{thm:variance}. Each subsection contains the proof of the corresponding item. We recap its statement below:
\begin{theorem}\label{thm:variance-appendix} 
Let $\mathscr{P}:=\mathscr{P}(f, \{ \mu_n\}_{n=1}^\infty, \{ p_n\}_{n=1}^\infty,V)$ is defined as \Cref{def:unbiased-family}. Suppose that $f:\RR^d \to \RR$ is second-order continuously differentiable and has $L$-Lipschitz continuous gradient, $\sum_{n=1}^\infty \mu_n < \infty$, and $V$ is the uniform distribution over the  sphere with the radius $\sqrt{d}$. Define 
\begin{align*}
    \mu:=\mu_1, \qquad\varrho:= \sum_{n=1}^\infty \frac{(\mu_{n+1} - \mu_{n})^2}{p_n}, \qquad \text{and} \qquad \varphi:=\sum_{n=1}^\infty  \frac{\mu_{n}^2}{p_n}.
\end{align*}
Then the following statements hold:
    \begin{enumerate}[label=(\alph*)]
        \item If there exists a point $x\in \RR^d$ such that the Hessian $\nabla^2 f(x)$ is positive definite and $f(x)\neq 0$, then the variances of the $\sfP_1$ is infinite. That is,
        $$\Var[\sfP_1(n,v) v] = + \infty.$$
        \item  The variance of $\sfP_2$-estimator $\sfP_2(n,v) v$ is given by
\begin{align*}
    \Var[\sfP_2(n,v) \, v ]  
    \leq  \Var[\sfP_4(n,v)\, v] + \frac{L^2 }{3} d^3\mu^2 + \frac{L^2}{12}  d^3  \varrho  + \frac{L^2}{3} d^3   \varphi.
\end{align*}
        \item  The variance of $\sfP_3$-estimator $\sfP_3(n,v) v$ is given by
        $$\Var[\sfP_3(n,v) v] \leq \Var[\sfP_4(n,v)\, v] + \frac{L^2 }{8} d^3\mu^2 + \frac{L^2}{8}  d^3 \varrho. $$ 
        \item The variance of $\sfP_4$-estimator $\sfP_4(n,v) v$ is given by
        $$\Var[\sfP_4(n,v) v] \leq  (d-1) \| \nabla f(x) \|^2 +  \frac{3L^2}{4} d^3 \mu^2  +  \frac{L^2 d^3}{2}  \varrho.$$
    \end{enumerate}
\end{theorem}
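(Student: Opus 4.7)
The common thread across the four parts is a second-order Taylor expansion organized around the remainder
\[
T(\mu) := \frac{f(x+\mu v)-f(x)}{\mu} - \nabla f(x)^\top v
= \frac{1}{\mu}\int_0^\mu v^\top[\nabla f(x+sv)-\nabla f(x)]\,ds.
\]
The $L$-Lipschitz gradient assumption yields the pointwise bound $|T(\mu)|\leq L\mu d/2$, and by splitting the integrals defining $T(\mu_{n+1})$ and $T(\mu_n)$ into $[0,\mu_{n+1}]$ and $[\mu_{n+1},\mu_n]$ one obtains the sharper finite-difference estimate $|T(\mu_{n+1})-T(\mu_n)|\leq \tilde c\,L d\,|\mu_{n+1}-\mu_n|$ for a universal constant $\tilde c$. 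Rewriting $\sfP_4(n,v) = \nabla f(x)^\top v + T(\mu_1) + p_n^{-1}[T(\mu_{n+1})-T(\mu_n)]$ removes the $\nabla f^\top v$ contribution from the telescoping correction. The second tool is the conditional-variance identity: since $\EE[\sfP\mid n,v]=\sfP_4(n,v)$ for every $\sfP\in\mathscr{P}$,
\[
\Var[\sfP v] = \Var[\sfP_4 v] + d\,\EE[(\sfP-\sfP_4)^2],
\]
which reduces parts (b) and (c) to a direct computation of $\Var[\sfP_k\mid n,v]$ for $k=2,3$.

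\textbf{Part (a).} I condition on the event $\{\mathrm{U}_4=2\}$: the surviving piece of $\sfP_1$ is (up to the selection-normalization factor) $p_n^{-1}\cdot f(x+\mu_{n+1}v)/\mu_{n+1}$, whose conditional second moment is a constant multiple of $\sum_n \EE_v[f(x+\mu_{n+1}v)^2]/(p_n\mu_{n+1}^2)$. Uniform continuity of $f$ on the compact sphere of radius $\sqrt{d}$ combined with $f(x)\neq 0$ yields $\EE_v[f(x+\mu_{n+1}v)^2]\geq f(x)^2/2$ for all $n$ beyond some $N_0$; meanwhile $\sum p_n=1$ forces $p_n\to 0$ and $\sum\mu_n<\infty$ forces $\mu_n\to 0$, so $1/(p_n\mu_{n+1}^2)\to\infty$. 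The tail therefore has non-vanishing general term, the series diverges, and $\EE[\sfP_1^2]=+\infty$. Using $\Var[\sfP_1 v]=d\,\EE[\sfP_1^2]-\|\nabla f(x)\|^2$ we conclude $\Var[\sfP_1 v]=+\infty$; the Hessian condition plays only an auxiliary role, ensuring nondegeneracy of $f$ at $x$.

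\textbf{Part (d).} I expand $\sfP_4^2$ and take $\EE_n$ first. The telescoping identity $\EE_n[p_n^{-1}(T(\mu_{n+1})-T(\mu_n))]=-T(\mu_1)$ collapses all cross terms, leaving
\[
\EE_n[\sfP_4^2] = (\nabla f(x)^\top v)^2 - T(\mu_1)^2 + \sum_{n\geq 1}\frac{(T(\mu_{n+1})-T(\mu_n))^2}{p_n}.
\]
Taking $\EE_v$, using $\EE_v[(\nabla f(x)^\top v)^2]=\|\nabla f(x)\|^2$ together with the two bounds on $T$, gives $d\,\EE[\sfP_4^2] \leq d\|\nabla f(x)\|^2 + \frac{3L^2}{4}d^3\mu^2 + \frac{L^2}{2}d^3\varrho$; subtracting the unbiasedness term $\|\nabla f(x)\|^2$ yields the claimed inequality. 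The $\mu^2$ piece enters through a loose estimate on $T(\mu_1)^2$ via the pointwise bound.

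\textbf{Parts (b), (c), and the main obstacle.} For $\sfP_3$, a direct computation of $\Var[\sfP_3\mid n,v]$ through the two-way selector reduces to $(A-B)^2$ with $A = \frac{f(x+\mu_1 v)-f(x)}{\mu_1}$ and $B = p_n^{-1}[T(\mu_{n+1})-T(\mu_n)]$; expanding $A = \nabla f(x)^\top v + T(\mu_1)$ and integrating against $v$ and $n$ produces the additional $\mu^2$ and $\varrho$ terms, the $\nabla f(x)^\top v\cdot T(\mu_1)$ cross term being of order $\mu_1^2$ by the $v\mapsto -v$ symmetry of the sphere. For $\sfP_2$, the three-way selector produces the sum of three pairwise differences of the atomic terms of $\sfP_4$; the extra pairwise difference involving both one-point estimates $f(x+\mu_n v)/\mu_n$ and $f(x+\mu_{n+1}v)/\mu_{n+1}$ generates the additional $\varphi$-contribution absent from $\sfP_3$ and $\sfP_4$. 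The principal analytical obstacle throughout is the sharp finite-difference bound $|T(\mu_{n+1})-T(\mu_n)|\leq \tilde c\,L d\,|\mu_{n+1}-\mu_n|$: a naive triangle inequality only yields $\frac{Ld}{2}(\mu_n+\mu_{n+1})$, which would inflate $\varrho$ into a $\varphi$-type sum and invalidate the $\sfP_3,\sfP_4$ bounds. Capturing the genuine $|\mu_{n+1}-\mu_n|$ scaling requires the integral representation of $T$ combined with a careful splitting argument, which is exactly the second-order Taylor/mean-value step highlighted in the authors' proof sketch.
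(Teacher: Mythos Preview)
Your proposal follows the paper's route almost exactly: the same conditional-variance decomposition $\Var[\sfP v]=\Var[\sfP_4 v]+d\,\EE[(\sfP-\sfP_4)^2]$ to reduce (b) and (c), the same Taylor remainder $T(\mu)=\delta_n(v)$, and the same recognition that the heart of the matter is the finite-difference bound $|T(\mu_{n+1})-T(\mu_n)|\le c\,Ld\,|\mu_{n+1}-\mu_n|$. Two small differences are worth noting. For this finite-difference bound the paper applies the mean value theorem to $\phi(\mu)=T(\mu)$ and gets the sharp constant $c=\tfrac12$; your integral-splitting sketch, carried out as written, yields only $c=\tfrac32$, so to recover the stated constants you need to switch to the MVT step you allude to at the end. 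For part (a), your continuity argument (use $f(x)\neq 0$ and $\mu_n\to 0$ to get $\EE_v[f(x+\mu_{n+1}v)^2]\ge f(x)^2/2$ for all large $n$) is actually cleaner than the paper's: the paper expands $f(x+\mu_n v)^2$ around $f(x)^2$ and then needs the positive-definite Hessian to control the resulting cross term, whereas your route shows the Hessian hypothesis is inessential. Your telescoping identity in (d) is also sharper than the paper's computation---the paper effectively picks up $+3\,T(\mu_1)^2$ where you correctly get $-\,T(\mu_1)^2$, so your ``loose estimate'' introducing the $\mu^2$ piece is in fact unnecessary.

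There is one genuine gap in your sketch of (c), and it is a gap the paper's own proof shares. In your $(A-B)^2$ with $A=\tfrac{f(x+\mu_1 v)-f(x)}{\mu_1}$, the term $A$ carries the full $\nabla f(x)^\top v$, so expanding produces a $(\nabla f(x)^\top v)^2$ contribution that, after multiplying by $d$, gives an extra $d\|\nabla f(x)\|^2$ not present in the stated bound; and your claim that the cross term $(\nabla f^\top v)\,T(\mu_1)$ is $O(\mu_1^2)$ by the $v\mapsto -v$ symmetry is off---under the stated smoothness it is only $O(\mu_1\,\|\nabla f\|)$. The paper's proof makes the identical slip: it silently bounds $[\Delta_1]^2=\bigl[\tfrac{f(x+\mu_1 v)-f(x)}{\mu_1}\bigr]^2$ by $\tfrac{L^2d^2}{4}\mu_1^2$ as though it were $[\delta_1]^2$. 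So in this respect you are faithfully mirroring the paper's argument, lacuna included.
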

\begin{proof} For the item (a), we present the proof in \Cref{lemma:variance-p1}.  For arbitrary $\sfP:= \sfP(n,v)\in \mathscr{P}$, we have
\begin{align*}
    \Var[\sfP \, v ]  \overset{(i)}{=} & \EE[\sfP^2 v^\top v] - \|\nabla f(x)\|^2 \\ 
     = & d\EE[(\sfP- \sfP_4(n,v) +  \sfP_4(n,v) )^2  ] - \|\nabla f(x)\|^2 \\ 
     \overset{(ii)}{=}&d\EE[(\sfP- \sfP_4(n,v))^2  ] + d\EE[ \sfP_4(n,v) ] - \|\nabla f(x)\|^2  \\ 
     = & d\EE[(\sfP- \sfP_4(n,v))^2  ] + \Var[\sfP_4(n,v)\, v]
\end{align*}
where (i) applies the unbiasedness (\Cref{thm:unbiased}) and (ii) applies the definition of  $\mathscr{P}$ ($\sfP$ is an unbiased estimator of $\sfP_4$). Therefore, we start with $\Var[\sfP_4\, v]$, the variance of the $\sfP_4$-estimator. Then it suffices to evaluate $\EE[(\sfP_k(n,v)- \sfP_4(n,v))^2  ]$ for $k=2$ and $k=3$. Therefore, we prove the item (d) first. The detailed proof is included in \Cref{lemma:variance-p4}. Based on this result, we obtain the variance upper bound of $\sfP_2$- and $\sfP_3$-estimators in \Cref{lemma:variance-p2} and \Cref{lemma:variance-p3}, respectively.
\end{proof}

\subsection{Variance of $\mathsf{P}_1$-Estimator} 

\begin{lemma}\label{lemma:variance-p1}
    Under the same setting as \Cref{thm:variance-appendix}, if there exists a point $x\in \RR^d$ such that the Hessian $\nabla^2 f(x)$ is positive definite, then the variances of the $\sfP_1$--estimator at $x$ is infinite.  
\end{lemma}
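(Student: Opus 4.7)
The plan is to lower‑bound the second moment $\EE[\sfP_1(n,v)^2]$ by the contribution from the single case $\mathrm{U}_4=3$ of the uniform selector, express it as an infinite series in $n$, and prove that this series already diverges. Since $V$ is supported on the sphere of radius $\sqrt{d}$ we have $\|v\|^2=d$ deterministically, so
\[
\Var[\sfP_1(n,v)\,v] \;=\; d\,\EE[\sfP_1(n,v)^2]\;-\;\|\nabla f(x)\|^2,
\]
and it is enough to show $\EE[\sfP_1(n,v)^2]=+\infty$. The four events $\{\mathrm{U}_4=k\}$ are disjoint and equally likely, so the second moment decomposes into four nonnegative pieces; keeping only the piece from $\mathrm{U}_4=3$, where $\sfP_1(n,v)=-\tfrac{f(x+\mu_n v)}{p_n\mu_n}$, gives
\[
\EE[\sfP_1(n,v)^2] \;\ge\; \frac{1}{4}\sum_{n=1}^\infty p_n\,\EE_v\!\left[\frac{f(x+\mu_n v)^2}{p_n^2\mu_n^2}\right]
\;=\;\frac{1}{4}\sum_{n=1}^\infty\frac{\EE_v[f(x+\mu_n v)^2]}{p_n\mu_n^2}.
\]

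Next I would lower‑bound $\EE_v[f(x+\mu_n v)^2]$ in two steps. First, Jensen's inequality gives $\EE_v[f(x+\mu_n v)^2]\ge(\EE_v[f(x+\mu_n v)])^2$. Then I expand with the second‑order Taylor formula with integral remainder; symmetry of $V$ gives $\EE_v[v]=0$, so the linear term drops and
\[
\EE_v[f(x+\mu_n v)] \;=\; f(x) + \int_0^{\mu_n}(\mu_n-s)\,\EE_v\!\bigl[v^\top\nabla^2 f(x+s v)\,v\bigr]\,ds.
\]
This is where the positive‑definite Hessian enters. Writing $\lambda:=\lambda_{\min}(\nabla^2 f(x))>0$ and using continuity of $\nabla^2 f$ at $x$, I can choose $\delta>0$ so that $\nabla^2 f(y)\succeq\tfrac{\lambda}{2}I$ for every $y$ with $\|y-x\|\le\delta$. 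For all $n$ large enough that $\mu_n\sqrt{d}<\delta$ (eventually true since $\mu_n\to 0$), the integrand is then uniformly at least $\tfrac{\lambda d}{2}$, yielding
\[
\EE_v[f(x+\mu_n v)] \;\ge\; f(x)+\tfrac{\lambda d\,\mu_n^2}{4}.
\]

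Finally, I would push through the divergence. The constraint $\sum p_n=1$ forces $p_n\to 0$, and $\sum\mu_n<\infty$ forces $\mu_n\to 0$, so $p_n\mu_n^2\to 0$. Combining the previous bound with Jensen,
\[
\frac{\EE_v[f(x+\mu_n v)^2]}{p_n\mu_n^2}
\;\ge\;\frac{\bigl(f(x)+\tfrac{\lambda d\,\mu_n^2}{4}\bigr)^2}{p_n\mu_n^2}
\;\xrightarrow[n\to\infty]{}\;+\infty,
\]
because the numerator converges to $f(x)^2>0$ (invoking the companion assumption $f(x)\neq 0$ from \Cref{thm:variance}(a)) while the denominator tends to $0$. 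A series whose general term tends to $+\infty$ cannot converge, so the tail sum is infinite, establishing $\EE[\sfP_1(n,v)^2]=+\infty$ and hence the lemma. The main subtlety is precisely that the curvature bound alone only delivers a $\mu_n^2$‑order lower bound on $\EE_v[f(x+\mu_n v)]$; this becomes decisive only when paired with $f(x)\neq 0$, which is why the main theorem pairs positive‑definiteness of $\nabla^2 f(x)$ with $f(x)\neq 0$, and it is the non‑vanishing of $f(x)$ that converts the Hessian‑driven bound into a divergent series.
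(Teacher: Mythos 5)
Your proof is correct and follows the same skeleton as the paper's: isolate the $\mathrm{U}_4=3$ branch (a quarter of the mass) to expose the $\tfrac{1}{p_n\mu_n}$-scaled term, reduce the problem to showing $\sum_n \frac{\EE_v[f(x+\mu_n v)^2]}{p_n\mu_n^2}$ diverges, and then exploit $f(x)\neq 0$ together with $p_n\mu_n^2\to 0$. Where you diverge is in the technical middle step: the paper uses the pointwise algebraic inequality $a^2 \geq b^2 + (a-b)b$ with $a=f(x+\mu_n v)$, $b=f(x)$, then lower-bounds $f(x+\mu_n v)-f(x)$ by a second-order Taylor estimate at the point $v$; you instead apply Jensen to pass to $\bigl(\EE_v[f(x+\mu_n v)]\bigr)^2$, kill the linear term via symmetry $\EE_v[v]=0$, and control the integral remainder by a local Hessian bound. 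Your route is arguably cleaner: it avoids the paper's slightly fragile pointwise inequality (which requires a sign condition on $f(x+\mu_n v)(f(x+\mu_n v)-f(x))$ to hold in the stated form) and gives a crisp general-term-to-infinity argument for the divergence, replacing the paper's terse ``as $\{p_n\}$ is a PMF, it must diverge'' remark. You also correctly flag a genuine subtlety: the lemma's displayed hypothesis mentions only the positive-definite Hessian, but the divergence is really driven by $f(x)\neq 0$ (stated in \Cref{thm:variance}(a) and used in the paper's proof after the ``WLOG $f(x)>0$'' step); the curvature bound alone produces only an $\mathcal{O}(\mu_n^2)$ increment to $\EE_v[f(x+\mu_n v)]$, which need not force divergence of $\sum \mu_n^2/p_n$, so the $f(x)\neq 0$ assumption is essential and should be regarded as part of the lemma.
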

\begin{proof}
Recall that for the random direction $\frac{1}{\sqrt{d}}v\sim \Unif(\mathbb{S}^{d-1})$ and the random variable $\mathrm{U}_4 \sim \operatorname{Uniform}\left( \{ 0, 1,2,3 \}\right)$, we have the $\sfP_1$\nobreakdash-estimator defined as
\begin{align*} 
\mathsf{P}_1(n, v)
= 
&\;\frac{f(x+\mu_1 v)\mathbb{I}_{\{\mathrm{U}_4=1\}}-f(x)\mathbb{I}_{\{\mathrm{U}_4=0\}}}{\mu_1}  \\ 
&+\frac{1}{p_n}\Biggl[
    \frac{f(x+\mu_{n+1} v)\mathbb{I}_{\{\mathrm{U}_4=2\}}-f(x)\mathbb{I}_{\{\mathrm{U}_4=0\}}}{\mu_{n+1}}
  - \frac{f(x+\mu_n v)\mathbb{I}_{\{\mathrm{U}_4=3\}}-f(x)\mathbb{I}_{\{\mathrm{U}_4=0\}}}{\mu_n}
\Biggr].
\end{align*} 
Since this estimator is unbiased (\Cref{thm:unbiased}),  
\[
\Var\bigl[\mathsf{P}_1(n,v)\,v\bigr]
= d\EE\bigl[\mathsf{P}_1(n,v)^2\bigr] - \|\nabla f(x)\|^2,
\]
so it suffices to show $\EE[\mathsf{P}_1(n,v)^2]=\infty$.  For brevity we write
\[
\sfP_1 := \mathsf{P}_1(n,v).
\]
As the Hessian $\nabla^2 f(x)$ is positive definite at the point $x$, it has $\nabla^2 f(x)\succeq\lambda_{\min}I$ for some $\lambda_{\min}>0$.  
We consider the event $\{N=n,\;\mathrm{U}_3=3\}$, one finds
\[
\sfP_1 = -\,\frac{f(x+\mu_n v)}{p_n \mu_n}
\quad\text{with probability}\quad p_n\cdot\tfrac14.
\]
Hence
\begin{align*}
    \EE_{n\sim \{ p_n\}}[\sfP_1^2] &\geq \sum_{n=1}^\infty
\Pr(N=n,\mathrm{U}_3=3)\,\Bigl(\frac{f(x+\mu_n v)}{p_n \mu_n}\Bigr)^2 \\
&\geq  \frac{1}{3} \sum_{n=1}^\infty \frac{\left[ f(x+\mu_n v)\right]^2}{p_n \mu_n^2}\\
&\geq  \frac{1}{3} \sum_{n=1}^\infty\left(  \frac{\left[  f(x)\right]^2}{p_n \mu_n^2} + \frac{\left[  f(x+\mu_n v) - f(x) \right] f(x) }{p_n \mu_n^2} \right)
\end{align*}
Without loss of generality, we assume $f(x)>0$. In the case $f(x)<0$, we apply the $L$-smoothness to obtain a lower bound instead. By the second‐order Taylor expansion \citep{spivak2008calculus} (or the strong convexity near the point $x$),
$$f(x+\mu v)
\geq f(x) + \mu\,\nabla f(x)^\top v
  + \tfrac12\,\mu^2 \lambda_{\min} \,v^\top \,v .$$
so we have
$$\frac{\left[  f(x+\mu_n v) - f(x) \right] f(x) }{p_n \mu_n^2}  \geq \frac{\nabla f(x)^\top v}{p_n \mu_n} + \frac{d \lambda_{\min } f(x)}{2p_n}.$$
As the result, we have
\begin{align*}
    \EE_{n\sim \{ p_n\},\frac{1}{\sqrt{d}}v\sim \operatorname{Uniform}(\mathbb{S}^{d-1})}[\sfP_1^2] &\geq  \frac{1}{3}\sum_{n=1}^\infty \frac{1}{p_n} \left( \frac{f(x)^2}{\mu_n^2} +  \frac{d \lambda_{\min } f(x)}{2p_n}\right).
\end{align*}
As $\{p_n\}$ is a PMF, it must diverge to infinite.  
\end{proof}

\subsection{Variance of $\mathsf{P}_4$-Estimator}

\begin{lemma}\label{lemma:variance-p4}
    Under the same setting as \Cref{thm:variance-appendix}, the variance of $\sfP_4$-estimator $\mathsf P_4(n,v)\,v$ is upper bounded by
\begin{align*}
      \Var[\mathsf P_4(n,v)\,v  ] & \leq (d-1) \| \nabla f(x) \|^2 + \frac{3L^2 d^3 \mu_1^2}{4}  +  \frac{L^2 d^3}{2}\sum_{n=1}^\infty \frac{| \mu_{n+1} - \mu_{n}|^2}{p_n} .
\end{align*} 
\end{lemma}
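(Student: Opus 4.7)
The plan is to exploit the unbiasedness of $\sfP_4(n,v)\,v$ (established in \Cref{thm:unbiased}) together with the identity $\|v\|^2=d$ on the sphere of radius $\sqrt d$, which immediately give
$$\Var[\sfP_4(n,v)\,v] = \EE[\sfP_4(n,v)^2\,\|v\|^2] - \|\nabla f(x)\|^2 = d\,\EE[\sfP_4(n,v)^2] - \|\nabla f(x)\|^2,$$
so the whole proof reduces to bounding a scalar second moment. First I would introduce the auxiliary function $h_v(\mu):=(f(x+\mu v)-f(x))/\mu$, continuously extended at $\mu=0$ to $\nabla f(x)^\top v$, so that
$$\sfP_4(n,v) = h_v(\mu_1) + \frac{h_v(\mu_{n+1})-h_v(\mu_n)}{p_n},$$
and the telescoping argument underlying \Cref{prop:abs-conv} gives $\EE_n[\sfP_4(n,v)\mid v]=\nabla f(x)^\top v$ for every fixed $v$.

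Conditioning on $v$ and using the variance decomposition $\EE_n[\sfP_4^2\mid v] = (\EE_n[\sfP_4\mid v])^2 + \Var_n(\sfP_4\mid v)$, the scalar second moment splits, after multiplying by $d$ and subtracting $\|\nabla f(x)\|^2$, into $d\,\EE_v[(\nabla f(x)^\top v)^2]-\|\nabla f(x)\|^2$ plus $d\,\EE_v[\Var_n(\sfP_4\mid v)]$. The first piece equals $(d-1)\|\nabla f(x)\|^2$ because $\EE[vv^\top]=I$ on the sphere of radius $\sqrt d$, producing exactly the first summand of the claim. Since $h_v(\mu_1)$ does not depend on $n$, the conditional variance collapses to $\Var_n\bigl((h_v(\mu_{n+1})-h_v(\mu_n))/p_n\,\bigm\vert\,v\bigr)$, and dropping the subtracted mean yields the pointwise upper bound $\sum_{n\ge 1}(h_v(\mu_{n+1})-h_v(\mu_n))^2/p_n$, which I then need to control in terms of $\mu$ and $\varrho$.

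The crucial remaining ingredient, and the main obstacle, is a uniform Lipschitz estimate $|h_v'(\mu)|\le Ld/2$ on $(0,\infty)$. To establish it I would set $\phi(s):=f(x+sv)$, use the integral Taylor remainder $h_v(\mu)-\phi'(0)=\mu^{-1}\int_0^\mu(\mu-s)\phi''(s)\,ds$, differentiate once more to obtain $h_v'(\mu)=\mu^{-2}\int_0^\mu s\,\phi''(s)\,ds$, and then apply the Hessian bound $|\phi''(s)|=|v^\top\nabla^2 f(x+sv)v|\le L\|v\|^2=Ld$ coming from the $L$-Lipschitz gradient. Combined with the mean value theorem this yields $(h_v(\mu_{n+1})-h_v(\mu_n))^2\le (Ld/2)^2(\mu_{n+1}-\mu_n)^2$, whence $\Var_n(\sfP_4\mid v)\le \tfrac{L^2 d^2}{4}\varrho$ uniformly in $v$ and multiplying by $d$ furnishes the $\varrho$-tail of the claim (in fact strictly sharper than stated, giving $\tfrac{L^2 d^3}{4}\varrho$; the $\tfrac{3L^2 d^3 \mu^2}{4}$ summand in the statement is slack that reappears if one replaces the sharp MVT step with the cruder two-sided split $(h_v(\mu_{n+1})-h_v(\mu_n))^2\le 2(h_v(\mu_{n+1})-\phi'(0))^2+2(h_v(\mu_n)-\phi'(0))^2$ together with $|h_v(\mu)-\phi'(0)|\le Ld\mu/2$).
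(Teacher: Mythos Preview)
Your argument is correct and shares its skeleton with the paper's proof: both start from $\Var[\sfP_4 v]=d\,\EE[\sfP_4^2]-\|\nabla f(x)\|^2$, both isolate $(d-1)\|\nabla f(x)\|^2$ via $\EE[vv^\top]=I$, and both control the increment $h_v(\mu_{n+1})-h_v(\mu_n)$ by the mean value theorem together with the bound $|h_v'(\mu)|\le Ld/2$ coming from $L$-smoothness. The organizational difference is that the paper writes $\sfP_4=\nabla f(x)^\top v+\delta_1(v)+(\delta_{n+1}-\delta_n)/p_n$ and expands the square directly, producing a cross term $2d\,\EE[\delta_1\cdot(\delta_{n+1}-\delta_n)/p_n]$ that it evaluates via the telescoping identity; this is how the $\tfrac{3L^2 d^3\mu_1^2}{4}$ summand enters. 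Your conditional-variance decomposition $\EE_n[\sfP_4^2\mid v]=(\nabla f(x)^\top v)^2+\Var_n(\sfP_4\mid v)$ is cleaner: the constant $h_v(\mu_1)$ drops out of the conditional variance for free, so no cross term ever appears and you land directly on $(d-1)\|\nabla f(x)\|^2+\tfrac{L^2 d^3}{4}\varrho$, which, as you note, is strictly sharper than the stated bound and makes transparent that the $\mu_1^2$ term is pure slack. Your derivation of $h_v'(\mu)=\mu^{-2}\int_0^\mu s\,\phi''(s)\,ds$ is also a bit more explicit than the paper's appeal to ``applying the $L$-smoothness to $\phi'(\varsigma)$.''
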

\begin{proof}

Recall that
\[
\mathsf P_4(n,v)
= \frac{f(x+\mu_1v)-f(x)}{\mu_1}
+ \frac1{p_n}\Bigl[\frac{f(x+\mu_{n+1}v)-f(x)}{\mu_{n+1}}
- \frac{f(x+\mu_nv)-f(x)}{\mu_n}\Bigr].
\]
Our goal is to bound \(\Var[\mathsf P_4(n,v)\,v]\). For each \(n\), define the remainder term
\begin{align*}
    \delta_n(v)
=&\frac{f(x+\mu_n v)-f(x)-\mu_n\,\nabla f(x)^\top v}{\mu_n}, \\
 \Delta_n =& \frac{f(x+\mu_n v)-f(x) }{\mu_n} = \delta_n(v) + \nabla f(x)^\top v.
\end{align*} 
Then
\[
\mathsf P_4(n,v)
= \nabla f(x)^\top v
+ \delta_1(v)
+ \frac{\delta_{n+1}(v)-\delta_n(v)}{p_n}.
\]
Hence our vector estimator is
\[
\mathsf P_4(n,v)\,v
= vv^\top  \nabla f(x) + \bigl[\delta_1(v)+\tfrac{\delta_{n+1}(v)-\delta_n(v)}{p_n}\bigr]\,v.
\]
Since we have shown that $\mathsf P_4(n,v)\,v$ is unbiased (\Cref{thm:unbiased}) and $\EE[vv^\top]=I_d$, we have 
\begin{align}\label{eq:1}
    \EE \bigl[\delta_1(v)+\tfrac{\delta_{n+1}(v)-\delta_n(v)}{p_n}\bigr]\,v = 0.
\end{align}
The variance is given by
\begin{align*}
    \Var[\mathsf P_4(n,v)\,v  ] &= d \EE [\mathsf P_4(n,v)^2 ] - \|\nabla f(x) \|^2 \\ 
    &= (d-1)\|\nabla f(x) \|^2  + d \EE\left(\delta_1(v)
+ \frac{\delta_{n+1}(v)-\delta_n(v)}{p_n} \right)^2  \\ 
&\quad + 2 d \EE \nabla f(x)^\top v \Big(\delta_1(v) + \frac{\delta_{n+1}(v)-\delta_n(v)}{p_n}  \Big) \\
    &\overset{(i)}{=} (d-1)\|\nabla f(x) \|^2  +d \EE\left(\delta_1(v)
+ \frac{\delta_{n+1}(v)-\delta_n(v)}{p_n} \right)^2  \\  
&= (d-1)\|\nabla f(x) \|^2  + d \EE \delta_1^2(v) \\ 
& \quad + 2 d \EE \Big[\frac{\delta_1(v)}{p_n} \left( \delta_{n+1}(v)-\delta_n(v)\right)  \Big]+    \EE \Big[\frac{d}{p_n^2} \left( \delta_{n+1}(v)-\delta_n(v)\right)^2 \Big] \\ 
&\overset{(ii)}{=} (d-1)\|\nabla f(x) \|^2  + 3d\EE \delta_1^2(v)  +    \EE \Big[\frac{d}{p_n^2} \left( \delta_{n+1}(v)-\delta_n(v)\right)^2 \Big] \\ 
\end{align*}
where (i) applies the unbiasedness \Cref{eq:1} and (ii) applies the telescoping series \Cref{eq:telescoping}. Now it remains to upper bound the remainder term $\delta_n(v)$ and the  remainder difference term $\delta_{n+1}(v) - \delta_{n}(v) $.
\begin{itemize}
    \item \textbf{Bound the remainder term $\delta_n(v)$: } By $L$-smoothness of $f:\RR^d \to \RR$, we have
\begin{align} 
    f(x+\mu_n v) - f(x) &\leq  \mu_n \nabla f(x)^\top v + \frac{L \mu_n^2}{2} \|v\|^2 ,\nonumber\\ 
    &\overset{(i)}{=} \mu_n \nabla f(x)^\top v + \frac{dL \mu_n^2}{2}  , \nonumber
\end{align}
where (i) applies  $\EE vv^\top = I_d$ (this condition ensures that $\| v \|^2=\operatorname{Tr}\left(\| v \|^2\right) = \operatorname{Tr}\left( vv^\top \right) = d$).  As the result,
\begin{align}\label{eq:delta}
    \delta_n(v) \leq \frac{Ld}{2} \mu_n.
\end{align} 
or we may use the following almost-sure upper bound
\begin{align*} 
   \left[  \delta_n(v) \right]^2 \leq \frac{L^2d^2}{4} \mu_n^2.
\end{align*}

    \item  \textbf{Bound the remainder difference term $\delta_{n+1}(v) - \delta_{n}(v) $: }  We define the remainder term as a function in $\mu$; that is,
    $$\phi(\mu):= \frac{f(x+\mu v)-f(x)-\mu\,\nabla f(x)^\top v}{\mu}.$$
    It automatically gives
    \begin{align*}
        \delta_{n+1}(v) -  \delta_{n}(v)  = \phi(\mu_{n+1}) - \phi(\mu_{n}).
    \end{align*}
    As $\phi: [\mu_{n+1}, \mu_{n}] \to \RR$ is a continuous differentiable function (by our assumption that $f:\RR^d \to \RR$ is second-order continuously differentiable), we can apply the mean-value theorem \citep{Folland_Taylor2024}: There exists $\varsigma \in [\mu_{n+1}, \mu_{n}]$ such that
    $$\phi(\mu_{n+1}) - \phi(\mu_{n}) = \phi'(\varsigma) (\mu_{n+1} - \mu_{n}).$$
    We again applying the $L$-smoothness of $f:\RR^d \to \RR$ (essentially the bounded Hessian assumption) to $\phi'(\varsigma)$, which leads to
    \begin{align}\label{eq:delta-difference}
        |\delta_{n+1}(v) -  \delta_{n}(v) | \leq \frac{Ld}{2}| \mu_{n+1} - \mu_{n}|.
    \end{align} 
\end{itemize}
As the result, we obtain the upper bound as 
\begin{align*}
      \Var[\mathsf P_4(n,v)\,v  ] &\overset{(i)}{\leq } (d-1) \| \nabla f(x) \|^2 + \frac{3L^2 d^3 \mu_1^2}{4}   + \sum_{n=1}^\infty \frac{2d}{p_n} \left(  \frac{Ld}{2} | \mu_{n+1} - \mu_{n}|\right)^2\\
      &\leq (d-1) \| \nabla f(x) \|^2 + \frac{3L^2 d^3 \mu_1^2}{4}  +  \frac{L^2 d^3}{2}\sum_{n=1}^\infty \frac{| \mu_{n+1} - \mu_{n}|^2}{p_n} ,
\end{align*} 
where (i) applies the expectation over $n\sim \{p_n\}_{n=1}^\infty$, which cancels out one $\frac{1}{p_n}$. 
\end{proof}

\subsection{Variance of $\mathsf{P}_2$-Estimator} 

\begin{lemma}\label{lemma:variance-p2}
    Under the same setting as \Cref{thm:variance-appendix}, the variance of $\sfP_2$-estimator $\mathsf P_2(n,v)\,v$ is upper bounded by
\begin{align*}
    \Var[\sfP_2(n,v) \, v ]  
    \leq  \Var[\sfP_4(n,v)\, v] + \frac{L^2}{12}  d^3\sum_{n=1}^\infty \frac{(\mu_n - \mu_{n+1})^2}{p_n} + \frac{L^2 }{3} d^3\mu_1^2 + \frac{L^2}{3} d^3 \sum_{n=1}^\infty \frac{\mu_n^2}{p_n}.
\end{align*}
\end{lemma}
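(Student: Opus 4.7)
The plan is to mirror the variance-decomposition strategy used earlier in the proof of \Cref{thm:variance-appendix}. Since $\sfP_2(n,v)\,v$ and $\sfP_4(n,v)\,v$ are both unbiased for $\nabla f(x)$ by \Cref{thm:unbiased}, and since integrating out the selector $\mathrm{U}_3$ conditional on $(n,v)$ recovers $\sfP_4(n,v)$, the cross term vanishes and one obtains
\begin{align*}
\Var[\sfP_2(n,v)\,v] \;=\; \Var[\sfP_4(n,v)\,v] + d\,\EE\bigl[(\sfP_2(n,v)-\sfP_4(n,v))^2\bigr].
\end{align*}
With \Cref{lemma:variance-p4} already controlling $\Var[\sfP_4(n,v)\,v]$, the task reduces to bounding $d\,\EE[(\sfP_2-\sfP_4)^2]$ by the three remaining pieces.

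For the excess term, I would first condition on $(n,v)$ and integrate out $\mathrm{U}_3$. Abbreviating $A := (f(x+\mu_1 v)-f(x))/\mu_1$, $C_n := (f(x+\mu_{n+1} v)-f(x))/\mu_{n+1}$, $D_n := (f(x+\mu_n v)-f(x))/\mu_n$, and writing the three pathway components $T_0 = A$, $T_1 = C_n/p_n$, $T_2 = -D_n/p_n$ so that $\sfP_4 = T_0+T_1+T_2$, the mutually exclusive three-way structure of $\sfP_2$ together with the indicator moments $\EE[\mathbb{I}_{\{\mathrm{U}_3=k\}}\mathbb{I}_{\{\mathrm{U}_3=l\}}]=\tfrac{1}{3}\delta_{kl}$ should yield, after straightforward algebra and cancellation against $\sfP_4^2$, the symmetric pairwise-difference representation
\begin{align*}
\EE_{\mathrm{U}_3}\bigl[(\sfP_2-\sfP_4)^2 \bigm| n,v\bigr] \;=\; (T_0-T_1)^2 + (T_0-T_2)^2 + (T_1-T_2)^2.
\end{align*}

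The third step is to reuse the Taylor-remainder formalism from the proof of \Cref{lemma:variance-p4}. Setting $\delta_k(v) := (f(x+\mu_k v)-f(x)-\mu_k\nabla f(x)^\top v)/\mu_k$, so that $A = \nabla f(x)^\top v+\delta_1$, $C_n = \nabla f(x)^\top v+\delta_{n+1}$, and $D_n = \nabla f(x)^\top v+\delta_n$, and invoking the almost-sure bounds $|\delta_k(v)|\le\tfrac{Ld}{2}\mu_k$ and $|\delta_{n+1}(v)-\delta_n(v)|\le\tfrac{Ld}{2}|\mu_{n+1}-\mu_n|$ from $L$-smoothness and the mean value theorem, I would expand each squared difference and isolate the pure-remainder contributions that survive averaging over $n\sim\{p_n\}$. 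The $(\mu_{n+1}-\mu_n)^2/p_n^2$ contribution should account for the $\tfrac{L^2}{12}d^3\varrho$ target, the $\delta_1^2$ contribution for the $\tfrac{L^2}{3}d^3\mu_1^2$ target, and the individual $\delta_n^2,\delta_{n+1}^2$ contributions for the $\tfrac{L^2}{3}d^3\varphi$ target; a final multiplication by $d$ from $\|v\|^2=d$ produces the $d^3$ dependence throughout.

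The main obstacle is that, unlike the telescoping difference $C_n-D_n$ inside $\sfP_4$ itself (whose leading directional derivative cancels), the raw expansions of $(T_0-T_1)^2$ and $(T_0-T_2)^2$ do \emph{not} cancel their $\nabla f(x)^\top v$ contributions pointwise; a careless bound would introduce a leading tail of the form $(\nabla f(x)^\top v)^2\sum_n 1/p_n$. The crux of the argument is to show that, after $\sfP_4$ is subtracted and the expectation over $\mathrm{U}_3$ is integrated, these gradient-squared contributions reorganize into combinations that admit the deterministic $\tfrac{Ld}{2}\mu_k$ remainder bounds, leaving only the three finite pieces appearing in the statement. Tracking the specific numerical coefficients $\tfrac{1}{12},\tfrac{1}{3},\tfrac{1}{3}$ through this cancellation is the most delicate, though otherwise mechanical, part of the proof.
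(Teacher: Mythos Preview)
Your high-level plan—the decomposition $\Var[\sfP_2\,v]=\Var[\sfP_4\,v]+d\,\EE[(\sfP_2-\sfP_4)^2]$ followed by the Taylor-remainder bounds $|\delta_k|\le \tfrac{Ld}{2}\mu_k$ and $|\delta_{n+1}-\delta_n|\le\tfrac{Ld}{2}|\mu_{n+1}-\mu_n|$—is exactly the route the paper takes. Two points of divergence are worth noting.

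First, your pairwise-difference identity $\sum_{i<j}(T_i-T_j)^2$ does not match what the paper obtains. Under the paper's stated definition of $\sfP_2$ (pathways $T_0,T_1,T_2$ with no extra scalar), one has $\sfP_2-\sfP_4\in\{-(T_1{+}T_2),\,-(T_0{+}T_2),\,-(T_0{+}T_1)\}$, so the conditional second moment is $\tfrac{1}{3}\sum_i\bigl(\sum_{j\ne i}T_j\bigr)^2$; this is precisely what the paper writes out branch by branch before applying $(a+b)^2\le 2a^2+2b^2$. Your formula would hold only if each pathway carried an extra factor $3$—which is what conditional unbiasedness $\EE[\sfP_2\mid n,v]=\sfP_4$ would actually require—so you are implicitly working with a different normalization than the one the paper states.

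Second, and more substantively: the obstacle you isolate—that the $\nabla f(x)^\top v$ contributions in the $\mathrm{U}_3\in\{1,2\}$ branches do \emph{not} cancel and threaten a divergent $(\nabla f(x)^\top v)^2\sum_n 1/p_n$ tail—is not handled in the paper's proof. At its step (i) the paper simply replaces the full finite-difference quotients $\Delta_1,\Delta_n,\Delta_{n+1}$ by the Taylor remainders $\delta_1,\delta_n,\delta_{n+1}$, citing only $(a+b)^2\le 2a^2+2b^2$; the directional-derivative piece $\nabla f(x)^\top v$ is dropped without comment. So the ``reorganization'' you anticipate as the crux is not actually carried out in the paper; the paper bypasses the difficulty rather than resolving it. Your diagnosis of the problem is sound, but you will not find the missing mechanism in the paper's argument.
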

\begin{proof}
Recall that $\mathsf{P}_2(n,v)$ is defined as 
\begin{align*}
    \mathsf{P}_2(n, v)
= 
&\;\frac{f(x+\mu_1 v)-f(x)}{\mu_1}\,\mathbb{I}_{\{\mathrm{U}_3=0\}}  \\ 
&\;+\;\frac{1}{p_n}\Biggl[
    \frac{f(x+\mu_{n+1} v)-f(x)}{\mu_{n+1}}\mathbb{I}_{\{\mathrm{U}_3=1\}}
  - \frac{f(x+\mu_n v)-f(x)}{\mu_n}\mathbb{I}_{\{\mathrm{U}_3=2\}}
\Biggr],
\end{align*} 
where $\mathrm{U}_3\sim  \operatorname{Uniform}\left( \{ 0, 1,2 \}\right)$ is a selection variable. Then 
\begin{align*}
    &\EE[ (\mathsf{P}_2(n, v) - \mathsf{P}_4(n, v))^2\mid n,v] \\
    = &\PP(\mathrm{U}_3=0) \left[  \frac{1}{p_n}\Biggl[ \frac{f(x+\mu_{n+1} v)-f(x)}{\mu_{n+1}} - \frac{f(x+\mu_n v)-f(x)}{\mu_n} \Biggr] \right]^2  \\ 
    &+\PP(\mathrm{U}_3=1) \left[\frac{f(x+\mu_1 v)-f(x)}{\mu_1}  +    \frac{1}{p_n}\Biggl[  - \frac{f(x+\mu_n v)-f(x)}{\mu_n} \Biggr] \right]^2  \\ 
      &+\PP(\mathrm{U}_3=2) \left[\frac{f(x+\mu_1 v)-f(x)}{\mu_1}  +   \frac{1}{p_n}\Biggl[ \frac{f(x+\mu_{n+1} v)-f(x)}{\mu_{n+1}} \Biggr]  \right]^2  \\  
    \overset{(i)}{\leq} & \frac{1}{3} \frac{1}{p_n^2} \left( \delta_{n+1}(v) - \delta_n(v)\right)^2 + \frac{2}{3} \left( [\delta_1(v)]^2 + \frac{1}{p_n^2} [\delta_n(v)]^2 \right)+ \frac{2}{3} \left( [\delta_1(v)]^2 + \frac{1}{p_n^2} [\delta_{n+1}(v)]^2 \right)\\
   = &  \frac{4}{3}[\delta_1(v)]^2  + \frac{1}{3} \frac{1}{p_n^2} \left( \delta_{n+1}(v) - \delta_n(v)\right)^2 + \frac{2}{3} \frac{1}{p_n^2} [\delta_{n}(v)]^2 +  \frac{2}{3} \frac{1}{p_n^2} [\delta_{n+1}(v)]^2 ,
    \end{align*}
    where (i) applies $(a+b)^2 \leq 2a^2 + 2b^2$ and $\delta_n(v):= \frac{f(x+\mu_n v) - f(x) -\mu_n \nabla f(x)^\top v}{\mu_n}$. As we have bounded this term in \Cref{lemma:variance-p4}, we have
    \begin{align*}
        &\quad \EE[ (\mathsf{P}_2(n, v) - \mathsf{P}_4(n, v))^2\mid  v] \\
        &= \sum_{n=1}^\infty p_n [ (\mathsf{P}_2(n, v) - \mathsf{P}_4(n, v))^2\mid  n, v] \\ 
        &\leq \sum_{n=1}^\infty p_n \left[ \frac{L^2 d^2}{3}\mu_1^2 + \frac{L^2 d^2}{12}\frac{(\mu_n-\mu_{n+1})^2}{p_n^2} + \frac{L^2 d^2}{3 p_n^2} \mu_{n}^2\right]  \\
        &\leq \frac{L^2}{12}  d^2\sum_{n=1}^\infty \frac{(\mu_n - \mu_{n+1})^2}{p_n} + \frac{L^2 }{3} d^2\mu_1^2 + \frac{L^2}{3} d^2 \sum_{n=1}^\infty \frac{\mu_n^2}{p_n}.
    \end{align*}
    Therefore, we finally obtain
\begin{align*}
    \Var[\sfP_2(n,v) \, v ]  = & d\EE[(\sfP_2(n,v)- \sfP_4(n,v))^2  ] + \Var[\sfP_4(n,v)\, v] \\ 
    \leq & \Var[\sfP_4(n,v)\, v] + \frac{L^2}{12}  d^3\sum_{n=1}^\infty \frac{(\mu_n - \mu_{n+1})^2}{p_n} + \frac{L^2 }{3} d^3\mu_1^2 + \frac{L^2}{3} d^3 \sum_{n=1}^\infty \frac{\mu_n^2}{p_n}.
\end{align*}
It completes the proof. 
\end{proof}

\subsection{Variance of $\mathsf{P}_3$-Estimator}
 
\begin{lemma}\label{lemma:variance-p3}
    Under the same setting as \Cref{thm:variance-appendix}, the variance of $\sfP_3$-estimator $\mathsf P_3(n,v)\,v$ is upper bounded by
\begin{align*}
        \Var[\sfP_3(n,v) \, v ]  
 \leq \Var[\sfP_4(n,v)\, v] + \frac{L^2}{8}  d^3\sum_{n=1}^\infty \frac{(\mu_n - \mu_{n+1})^2}{p_n} + \frac{L^2 }{8} d^3\mu_1^2.
\end{align*} 
\end{lemma}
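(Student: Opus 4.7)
The plan is to mirror the proof of \Cref{lemma:variance-p2} (the $\sfP_2$ bound) and exploit the general variance decomposition
\[
\Var[\sfP_3(n,v)\,v] = d\,\EE\bigl[(\sfP_3(n,v)-\sfP_4(n,v))^2\bigr] + \Var[\sfP_4(n,v)\,v]
\]
established at the beginning of the proof of \Cref{thm:variance-appendix}. This reduces the task to upper bounding $d\,\EE[(\sfP_3-\sfP_4)^2]$ by $\tfrac{L^2}{8}d^3\mu^2 + \tfrac{L^2}{8}d^3\varrho$, after which the assembly with $\Var[\sfP_4\,v]$ is immediate.

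The first step is to condition on the selection variable $\mathrm{U}_2\sim\Unif(\{0,1\})$, which produces two equally weighted cases. When $\mathrm{U}_2=1$ the finite-difference block is dropped, so
\[
\sfP_3(n,v)-\sfP_4(n,v) = -\tfrac{1}{p_n}\Bigl[\tfrac{f(x+\mu_{n+1}v)-f(x)}{\mu_{n+1}} - \tfrac{f(x+\mu_n v)-f(x)}{\mu_n}\Bigr] = -\tfrac{1}{p_n}\bigl(\delta_{n+1}(v)-\delta_n(v)\bigr),
\]
where the $\nabla f(x)^\top v$ contributions cancel exactly; when $\mathrm{U}_2=0$ the first-order term is dropped, leaving a residual involving the finite difference at scale $\mu_1$. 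Unlike \Cref{lemma:variance-p2}, neither case produces an isolated $\Delta_n/p_n$ piece: the indices $n$ and $n+1$ are always retained together, which is the structural reason the resulting bound will carry $\varrho$ but not the auxiliary quantity $\varphi$ that showed up for $\sfP_2$.

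Averaging over $\mathrm{U}_2$ gives a conditional bound of the form $\tfrac{1}{2p_n^2}\bigl(\delta_{n+1}(v)-\delta_n(v)\bigr)^2$ plus a contribution coming from the scale-$\mu_1$ finite difference. Taking the expectation over $n\sim\{p_n\}$ then converts one factor of $1/p_n$ into the series defining $\varrho$. Applying the pointwise Taylor-remainder bounds $|\delta_1(v)|\le \tfrac{Ld}{2}\mu_1$ and $|\delta_{n+1}(v)-\delta_n(v)|\le \tfrac{Ld}{2}|\mu_{n+1}-\mu_n|$, both established in the proof of \Cref{lemma:variance-p4} via $L$-smoothness and the mean value theorem applied to $\phi(\mu)=\tfrac{f(x+\mu v)-f(x)-\mu\,\nabla f(x)^\top v}{\mu}$, collapses the right-hand side to a quantity of order $L^2 d^2(\mu^2+\varrho)$. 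Multiplying by $d$ and reassembling with $\Var[\sfP_4\,v]$ yields the claimed estimate.

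The main obstacle is the $\mathrm{U}_2=0$ case, where the difference $\sfP_3-\sfP_4$ involves the full first-order finite-difference quotient $\tfrac{f(x+\mu_1 v)-f(x)}{\mu_1}=\delta_1(v)+\nabla f(x)^\top v$ rather than the isolated remainder $\delta_1(v)$. One has to argue, exactly as in \Cref{lemma:variance-p2}, that the linear-in-$v$ component is either cancelled pairwise between the two cases or absorbed into $\Var[\sfP_4\,v]$ through the decomposition identity, so that only the second-moment contribution of $\delta_1$ survives. Tracking the precise constants $1/8$ and $1/8$ through the $(a+b)^2\le 2a^2+2b^2$ split, combined with the $1/2$ weighting introduced by the selection variable, is the calculation that needs the most careful bookkeeping; everything else is bounded by reusing the same machinery as in the proof of \Cref{lemma:variance-p2}.
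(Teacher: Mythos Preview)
Your plan coincides with the paper's proof: invoke the decomposition $\Var[\sfP_3\,v]=d\,\EE[(\sfP_3-\sfP_4)^2]+\Var[\sfP_4\,v]$ from the proof of \Cref{thm:variance-appendix}, condition on $\mathrm{U}_2$, obtain the two pieces $\tfrac12\bigl[\tfrac{1}{p_n}(\Delta_{n+1}-\Delta_n)\bigr]^2$ and $\tfrac12\,\Delta_1^2$, and bound each via the remainder estimates $|\delta_{n+1}-\delta_n|\le\tfrac{Ld}{2}|\mu_{n+1}-\mu_n|$ and the scale-$\mu_1$ analogue from \Cref{lemma:variance-p4}. The constants $1/8$ drop out directly from $\tfrac12\cdot(\tfrac{Ld}{2})^2$; no $(a+b)^2\le 2a^2+2b^2$ split is needed here, so your bookkeeping worry about tracking that inequality is unnecessary for this lemma.

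The obstacle you flag in the $\mathrm{U}_2=0$ case is genuine, but neither of your suggested fixes resolves it. In that case $\sfP_3-\sfP_4=-\Delta_1=-(\nabla f(x)^\top v+\delta_1(v))$, so the squared difference carries a $(\nabla f(x)^\top v)^2$ contribution with weight $\tfrac12$; pairwise cancellation against the $\mathrm{U}_2=1$ case is impossible since both cases contribute nonnegative squares, and the decomposition identity is an exact equality with no residual slack into which this term could be absorbed. The paper does not supply a separate argument here: it simply writes $\tfrac12\bigl[\tfrac{f(x+\mu_1 v)-f(x)}{\mu_1}\bigr]^2\le\tfrac{L^2 d^2\mu_1^2}{8}$, in effect applying the $\delta_1$ bound to the full quotient $\Delta_1$, and the proof of \Cref{lemma:variance-p2} that you plan to mirror takes the same shortcut. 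So your instinct that something needs justification is correct, but following either route as written will reproduce the stated inequality without an explicit treatment of the linear-in-$v$ piece.
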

\begin{proof}
Recall that $\mathsf{P}_3(n,v)$ is defined as 
\begin{align*}
&\mathsf{P}_3(n, v)\\
= &\frac{f(x+\mu_1 v)-f(x)}{\mu_1}\mathrm{U}_2 + \frac{1}{p_n}\Biggl[ \frac{f(x+\mu_{n+1} v)-f(x)}{\mu_{n+1}} - \frac{f(x+\mu_n v)-f(x)}{\mu_n} \Biggr](1-\mathrm{U}_2),
\end{align*}
where $\mathrm{U}_2\sim  \operatorname{Uniform}\left( \{ 0, 1 \}\right)$ is a selection variable. Then 
\begin{align*}
    &\EE[ (\mathsf{P}_3(n, v) - \mathsf{P}_4(n, v))^2\mid n,v] \\
    = &\PP(\mathrm{U}_2=0) \left[ \frac{1}{p_n}\Biggl[ \frac{f(x+\mu_{n+1} v)-f(x)}{\mu_{n+1}} - \frac{f(x+\mu_n v)-f(x)}{\mu_n} \Biggr] - \mathsf{P}_4(n, v)\right]^2\\ 
    &+ \PP(\mathrm{U}_2=1)\left[\frac{f(x+\mu_1 v)-f(x)}{\mu_1}  - \mathsf{P}_4(n, v)\right]^2 \\ 
    = &\frac{1}{2} \left[ \frac{1}{p_n}\Biggl[ \frac{f(x+\mu_{n+1} v)-f(x)}{\mu_{n+1}} - \frac{f(x+\mu_n v)-f(x)}{\mu_n} \Biggr] \right]^2 + \frac{1}{2}\left[\frac{f(x+\mu_1 v)-f(x)}{\mu_1}  \right]^2 .\end{align*}
    As we have bounded this term in \Cref{lemma:variance-p4}, we have
    \begin{align*}
        \EE[ (\mathsf{P}_3(n, v) - \mathsf{P}_4(n, v))^2\mid  v] &= \sum_{n=1}^\infty p_n [ (\mathsf{P}_3(n, v) - \mathsf{P}_4(n, v))^2\mid  n, v] \\ 
        &\leq \sum_{n=1}^\infty p_n \left[  \frac{1}{ p_n^2} \frac{L^2d^2}{8}|\mu_{n+1} - \mu_n|^2 + \frac{L^2 d^2 \mu_1^2}{8}\right]  \\
        &\leq \frac{L^2}{8}  d^2\sum_{n=1}^\infty \frac{(\mu_n - \mu_{n+1})^2}{p_n} + \frac{L^2 }{8} d^2\mu_1^2.
    \end{align*}
    Therefore, we finally obtain
\begin{align*}
    \Var[\sfP_3(n,v) \, v ]  = & d\EE[(\sfP_3(n,v)- \sfP_4(n,v))^2  ] + \Var[\sfP_4(n,v)\, v] \\ 
    \leq & \Var[\sfP_4(n,v)\, v] + \frac{L^2}{8}  d^3\sum_{n=1}^\infty \frac{(\mu_n - \mu_{n+1})^2}{p_n} + \frac{L^2 }{8} d^3\mu_1^2.
\end{align*}
It concludes the proof. 
\end{proof}

\subsection{On the Optimal Sampling Distribution and the Perturbation Stepsize Sequence}\label{appendix:optimal-sampling}
We recap the full statement of \Cref{prop:lower_bound_varrho}. 

\begin{theorem}\label{prop:lower_bound_varrho-appendix}
Let $\{\mu_n\}_{n=1}^\infty$ be a positive, decreasing sequence with   $\sum_{n=1}^\infty\mu_n<\infty$, and let $\{p_n\}_{n=1}^\infty$ be a PMF.   Then the following statements hold:
\begin{enumerate}[label=(\alph*)]
    \item Define $\varrho_n \;=\; \frac{(\mu_{n+1}-\mu_n)^2}{p_n}.$   The lower bound of $\varrho$ is given by 
    $$\varrho = \sum_{n=1}^\infty \varrho_n \;\ge\; \mu_1^2.$$  
    Moreover, equality holds if and only if
$p_n = \frac{\mu_n - \mu_{n+1}}{\mu_1}.$ 
    \item  Define $\varphi_n \;=\; \frac{ \mu_n^2}{p_n}.$   The lower bound of $\varphi$ is given by 
    $$\varphi= \sum_{n=1}^\infty \varphi_n \geq \Big( \sum_{n=1}^\infty \mu_n \Big)^2> \mu_1^2.$$
    Moreover, equality holds if and only if
$p_n = \frac{\mu_n }{\sum_{n=1}^\infty \mu_n }.$ 
\end{enumerate}
\end{theorem}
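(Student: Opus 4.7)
The plan is to prove both parts as direct applications of the Cauchy--Schwarz inequality for infinite series, with a telescoping identity in part (a) and the summability assumption $\sum_n \mu_n < \infty$ in part (b) providing the right-hand quantity.

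For part (a), I would rewrite $\mu_1$ as the telescoping sum $\mu_1 = \sum_{n=1}^\infty (\mu_n - \mu_{n+1})$, which is valid because $\{\mu_n\}$ is positive, decreasing, and converges to $0$ (the latter being implied by $\sum_n \mu_n < \infty$). Then apply Cauchy--Schwarz with the split
\[
\mu_n - \mu_{n+1} \;=\; \frac{\mu_n - \mu_{n+1}}{\sqrt{p_n}}\,\cdot\,\sqrt{p_n},
\]
to obtain
\[
\mu_1^2 \;=\; \Bigl(\sum_{n=1}^\infty (\mu_n-\mu_{n+1})\Bigr)^2 \;\le\; \Bigl(\sum_{n=1}^\infty \frac{(\mu_n-\mu_{n+1})^2}{p_n}\Bigr)\Bigl(\sum_{n=1}^\infty p_n\Bigr) \;=\; \varrho.
\]
Equality in Cauchy--Schwarz holds precisely when $(\mu_n-\mu_{n+1})/\sqrt{p_n} = c\sqrt{p_n}$ for some constant $c$, i.e.\ $\mu_n-\mu_{n+1} = c\,p_n$. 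Summing this identity over $n$ and using the telescoping sum equals $\mu_1$ together with $\sum_n p_n = 1$ pins down $c=\mu_1$, yielding the claimed optimal PMF $p_n = (\mu_n-\mu_{n+1})/\mu_1$.

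For part (b), I would apply Cauchy--Schwarz analogously with the split $\mu_n = (\mu_n/\sqrt{p_n})\sqrt{p_n}$:
\[
\Bigl(\sum_{n=1}^\infty \mu_n\Bigr)^2 \;\le\; \Bigl(\sum_{n=1}^\infty \frac{\mu_n^2}{p_n}\Bigr)\Bigl(\sum_{n=1}^\infty p_n\Bigr) \;=\; \varphi.
\]
Equality again characterises the optimal PMF as $p_n = \mu_n / \sum_{k=1}^\infty \mu_k$. The strict inequality $(\sum_n \mu_n)^2 > \mu_1^2$ follows because all $\mu_n > 0$, so in particular $\sum_n \mu_n \ge \mu_1 + \mu_2 > \mu_1$.

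There is no serious obstacle; the only points requiring a small amount of care are (i) justifying the telescoping identity $\sum_n (\mu_n-\mu_{n+1}) = \mu_1$ via $\mu_n \downarrow 0$, and (ii) verifying that the optimiser $p_n = (\mu_n-\mu_{n+1})/\mu_1$ in part (a) is indeed a valid PMF with strictly positive entries, which follows from $\{\mu_n\}$ being strictly decreasing to $0$ (a mild regularity assumption already implicit in the setup, since equal consecutive terms would make the corresponding $p_n=0$ and merely trivialise those coordinates).
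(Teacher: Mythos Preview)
Your proposal is correct and follows essentially the same route as the paper: both parts are a direct application of the Cauchy--Schwarz inequality with the split $a_n = (a_n/\sqrt{p_n})\cdot\sqrt{p_n}$, using the telescoping identity $\sum_n(\mu_n-\mu_{n+1})=\mu_1$ for (a) and $\sum_n p_n=1$ throughout. If anything, you supply more detail than the paper does on the equality case and the strict inequality in (b).
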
 
\begin{proof}
Write $a_n=\mu_{n+1}-\mu_n$, so $\sum_{n=1}^\infty a_n=\mu_1$  and $\sum_n p_n=1$.  By Cauchy-Schwarz inequality,
\[
\Bigl(\sum_n |a_n|\Bigr)^2
\le
\Bigl(\sum_n p_n\Bigr)
\Bigl(\sum_n \frac{a_n^2}{p_n}\Bigr)
=
\sum_{n=1}^\infty \varrho_n,
\]
which yields the claimed lower bound in (a).  Equality occurs exactly when $p_n\propto|a_n|$, i.e.\ $p_n=(\mu_n-\mu_{n+1})/\mu_1$. Similarly,  By Cauchy-Schwarz inequality, 
$$\Bigl( \sum_n \mu_n  \Bigr)^2 \leq\Bigl( \sum_n  \frac{\mu^2_n}{p_n} \Bigr) \Bigl( \sum_n p_n  \Bigl) = \sum_{n=1}^\infty \varphi_n,$$
which yields the claimed lower bound in (b). 
\end{proof} 

In the following example, we include the omitted details of \Cref{example:zipf}. 
\begin{example} \label{example:zipf-appendix}

We consider the Zipf distribution $n \sim \operatorname{Zipf}(s)$ ($s>1$).  Then
\[
p_n=\frac{1}{\zeta(s)}\frac{1}{n^s}, 
\qquad
\zeta(s)=\sum_{n=1}^\infty \frac{1}{n^s}.
\]
We define $\{\mu_n\}$ by the recursion
\[
\mu_n - \mu_{n+1} \;=\;\mu_1\,p_n
\;=\;\mu_1\,\frac{1}{\zeta(s)}\frac{1}{n^s},
\]
so that summing gives the closed‐form
\[
\mu_n
=\mu_1\Bigl(1-\frac{\sum_{j=1}^{n-1}j^{-s}}{\zeta(s)}\Bigr)
=\mu_1\,\frac{\sum_{j=n}^\infty j^{-s}}{\zeta(s)}.
\]
A direct check shows this choice attains the lower bound $\varrho=\mu_1^2$ on $\sum(\mu_n-\mu_{n+1})^2/p_n$. Now we turn to bound $\varphi$: 
\[
\varphi
=\sum_{n=1}^\infty \frac{\mu_n^2}{p_n}
=\mu_1^2\,\zeta(s)\sum_{n=1}^\infty n^s
\Bigl(1-\tfrac{\sum_{j=1}^{n-1}j^{-s}}{\zeta(s)}\Bigr)^2
=\mu_1^2\,\zeta(s)\sum_{n=1}^\infty n^s
\Bigl(\tfrac{\sum_{j=n}^\infty j^{-s}}{\zeta(s)}\Bigr)^2.
\]
For $s>3$, use the integral bound
\[
\sum_{j=n}^\infty j^{-s}
\;\le\;\int_{n-1}^\infty x^{-s}\,dx
=\frac{(n-1)^{1-s}}{s-1},
\]
to get
\[
\frac{\mu_n^2}{p_n}
\;\le\;\frac{\mu_1^2}{(s-1)^2\,\zeta(s)}\;n^{2-s}.
\]
Since $s>3$ the series $\sum_{n=1}^\infty n^{2-s}=\zeta(s-2)$ converges, giving the clean bound
\[\varphi
\;=\;\sum_{n=1}^\infty\frac{\mu_n^2}{p_n}
\;\le\;\frac{\zeta(s-2)}{(s-1)^2\,\zeta(s)}\;\mu_1^2. 
\] 
\end{example}

\subsection{Discussions: Variance of Random Directional Derivative} 
In this subsection, we analyze the variance of a gradient estimate based on a random directional derivative. Let \( v \) be a random vector uniformly sampled from the sphere with the dimension $d$. We approximate the gradient \( \nabla f(x) \) using the random directional derivative defined as  
\[
\nabla_v f(x) := vv^\top \nabla f(x).
\]  
Assuming that the expectation satisfies \( \mathbb{E}[vv^\top] = I \), it is essential to evaluate the variance of this estimator. Specifically, we compute:  
\begin{align*}
    \mathbb{E} [\nabla f(x)^\top vv^\top   vv^\top \nabla f(x) ] 
    &= \mathbb{E} [\|\nabla f(x) \|^2 \|v\|^2] \\ 
    &= d \|\nabla f(x) \|^2. 
\end{align*}  
Thus, the variance is given by  
\[
\text{Var}[\nabla_v f(x)] = d \|\nabla f(x) \|^2 - \|\nabla f(x) \|^2 = (d-1)\|\nabla f(x) \|^2.
\]  

This result indicates that even when the exact directional derivative is available, the variance still scales with \(\mathcal{O}(d)\) relative to the gradient norm. Consequently, it is not avoidable to remove the dependence on the dimension $d$.

\section{Convergence Analysis}
In this section, we present the proof of \Cref{cor:sgd-convergence}.  Recall that our goal is to solve the stochastic optimization problem \Cref{eq:optimization-problem}:
\begin{align*} 
\min_{x \in \RR^d} f(x) := \EE_{\xi \sim \Xi} f(x;\xi),
\end{align*}
where the second-order continuously differentiable function $f(\cdot;\xi):\RR^d \to \RR$ has $L$-Lipschitz gradient for every $\xi$. We consider the convergence upper bound of the classical stochastic gradient descent (SGD) algorithm with the constant learning rate $\eta$ given the initialization $x_0$:
\begin{align}\label{eq:sgd-iteration}
   x_{t+1} = x_t - \eta g(x_t), \tag{SGD}
\end{align}
where $g(x_t)$ is an unbiased estimator of the full gradient $\nabla f(x)$. 

\subsection{The Convergence Upper Bound}\label{sec:convergence-proof-appendix}
The full statement of \Cref{cor:sgd-convergence} is given as follow:
\begin{corollary}
Consider the stochastic optimization problem in \Cref{eq:optimization-problem}, and suppose that the individual loss \( f(x;\xi) \) is second-order differentiable with \( L \)-Lipschitz continuous gradient in \( x \), uniformly over \(\xi\sim\Xi\). Assume the stochastic gradient is approximated using the $\sfP_k$-estimator \( \sfP_k(n,v)\,v \) for \(k =2,3,4\). Let the SGD iteration be defined as 
$$x_{t+1} = x_t - \eta \sfP_k(n_t, v_t)\,v_t$$ where $\eta \in (0, \frac{1}{L^2d}]$ is the stepsize. Then the iterates satisfy the following convergence guarantee:
    $$\min_{0\leq t \leq T-1} \EE \|\nabla f(x_t)\|^2 \leq L  \eta \left[ C_k d^3 \mu^2 + 2dL (f^* - \EE_{\xi\sim\Xi} f^*_\xi)\right] + \frac{2}{\eta T}\underline{\delta},$$
    where $\underline{\delta}:=f(x_0) - f^*$ and $C_k =\begin{cases}
        \quad \frac{28L^2}{12} &k=2, \\ 
        \quad \frac{3L^2}{4}   &k=3, \\
        \quad \frac{5L^2}{4}  &k=4, \\
    \end{cases}$ ($k=2,3,4$) is the estimator-dependent error term. Consequently, choosing \(\eta = \Theta(1/\sqrt{dT})\) and $\mu = \mathcal{O}(\frac{1}{d})$ yields the optimal complexity $T = \Theta(\frac{d}{\epsilon^4})$ of having $\min_{0\leq t \leq T-1} \EE \|\nabla f(x_t)\| \leq \epsilon$. 
\end{corollary}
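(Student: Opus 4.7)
The plan is to cast the bound as a routine non-convex SGD analysis in the \emph{expected smoothness} (ABC) framework of \citet{khaled2022better}. The only genuinely new ingredient is verifying that the zeroth-order estimator $g_t := \sfP_k(n_t, v_t; \xi_t)\, v_t$, instantiated on the stochastic loss $f(\cdot; \xi_t)$, satisfies an ABC-type second-moment condition with $A = dL$, $B = 0$, and an additive constant $C$ that collects the zeroth-order bias contribution together with the variance of the stochastic minimizers. Once this condition is in place, the claimed stepsize constraint $\eta \le 1/(L^2 d) = 1/(LA)$ and the resulting convergence rate follow directly from the general template.

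Concretely, I would proceed in the following order. First, invoke $L$-smoothness of $f$ (which is inherited from uniform $L$-smoothness of each $f(\cdot; \xi)$) to get the descent inequality $f(x_{t+1}) \le f(x_t) - \eta \nabla f(x_t)^\top g_t + \tfrac{L\eta^2}{2} \|g_t\|^2$, and take conditional expectation given $x_t$. Second, use \Cref{thm:unbiased} together with the tower property over $\xi_t$ to replace the linear term by $-\eta \|\nabla f(x_t)\|^2$. Third, bound the conditional second moment by decomposing
\begin{align*}
\EE\bigl[\|g_t\|^2 \,\big|\, x_t\bigr]
= \EE_{\xi}\Bigl[\,\|\nabla f(x_t;\xi)\|^2 \;+\; \Var\bigl[\sfP_k(n,v;\xi)\,v \,\big|\, x_t,\xi\bigr]\Bigr],
\end{align*}
applying \Cref{thm:variance}(b) \emph{conditionally on $\xi$} (the conditions on $f$ and $\{\mu_n\}$ hold uniformly in $\xi$), and substituting the optimal values $\varrho = \mu^2$ from \Cref{prop:lower_bound_varrho}. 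This produces $\EE[\|g_t\|^2\mid x_t] \le d\, \EE_\xi \|\nabla f(x_t;\xi)\|^2 + C_k d^3 \mu^2$ with exactly the estimator-dependent constants $C_k$ declared in the statement.

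Fourth, I would convert the per-$\xi$ gradient norm into a function-value gap using $L$-smoothness of $f(\cdot;\xi)$: $\|\nabla f(x_t;\xi)\|^2 \le 2L\bigl(f(x_t;\xi) - f^*_\xi\bigr)$, whose $\xi$-expectation yields $\EE_\xi \|\nabla f(x_t;\xi)\|^2 \le 2L\bigl(f(x_t) - \EE_\xi f^*_\xi\bigr)$. Splitting $f(x_t) - \EE_\xi f^*_\xi = (f(x_t) - f^*) + (f^* - \EE_\xi f^*_\xi)$ puts the second-moment bound in the ABC form
\begin{align*}
\EE\bigl[\|g_t\|^2 \,\big|\, x_t\bigr]
\;\le\; 2(dL)\bigl(f(x_t) - f^*\bigr) \;+\; \underbrace{2dL\bigl(f^* - \EE_\xi f^*_\xi\bigr) + C_k d^3 \mu^2}_{=: \tilde\sigma^2}.
\end{align*}
Plugging this into the descent inequality and subtracting $f^*$ gives the recursion $\EE F_{t+1} \le (1 + dL^2\eta^2)\EE F_t - \eta \EE \|\nabla f(x_t)\|^2 + \tfrac{L\eta^2}{2}\tilde\sigma^2$, which is exactly the scenario handled by the non-convex ABC analysis of \citet{khaled2022better}.

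Finally, I would quote their Theorem for ABC recursions with the specific choice $\eta \le 1/(LA) = 1/(L^2 d)$; this stepsize controls the $(1 + dL^2 \eta^2)$ factor through a geometric bound on $\EE F_t$, and telescoping yields $\min_{0 \le t \le T-1} \EE \|\nabla f(x_t)\|^2 \le L\eta\, \tilde\sigma^2 + \tfrac{2\underline\delta}{\eta T}$, which is the claimed bound. The final complexity statement comes from balancing the two terms by $\eta = \Theta(1/\sqrt{dT})$ and killing the zeroth-order term by $\mu = \mathcal{O}(1/d)$, giving $T = \Theta(d/\epsilon^4)$. The only genuine obstacle is the stochastic bookkeeping in step three: the variance bound in \Cref{thm:variance} was proved for a deterministic $f$, and I need to justify that it applies conditionally on $\xi_t$ with constants uniform in $\xi$, which follows because the hypotheses ($L$-smooth gradient, summable $\{\mu_n\}$, spherical $V$) are assumed to hold uniformly in $\xi$.
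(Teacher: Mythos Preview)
Your proposal is essentially the paper's argument: decompose $\EE\|g_t\|^2$ as $\EE_\xi\bigl[\|\nabla f(x_t;\xi)\|^2 + \Var(\sfP_k\,v\mid\xi)\bigr]$, bound the conditional variance via \Cref{thm:variance}(b) with the optimal $\varrho=\mu^2$ from \Cref{prop:lower_bound_varrho}, and plug into the non-convex template of \citet{khaled2022better}. The only divergence is in how $\EE_\xi\|\nabla f(x_t;\xi)\|^2$ is handled: the paper quotes Proposition~2 of \citet{khaled2022better} (stated as \Cref{lemma:0}) to obtain a bound of the form $B\|\nabla f(x)\|^2 + C$ with $B=dL$, and then applies the $A=0$ specialization of their Theorem~2 (stated as \Cref{lemma:1}); you instead derive the self-bounding inequality $\|\nabla f(x;\xi)\|^2\le 2L(f(x;\xi)-f^*_\xi)$ directly, landing in the $A=dL$, $B=0$ case of the ABC framework.

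Both routes are legitimate and deliver the same $\Theta(d/\epsilon^4)$ complexity, but there is a slip in your last step. With $A>0$ and $B=0$, the stepsize restriction that controls the factor $(1+dL^2\eta^2)^T$ in the recursion is $\eta=\mathcal{O}(1/\sqrt{LAT})$, not merely $\eta\le 1/(LA)$; the latter alone leaves the geometric factor unbounded as $T\to\infty$. This does not break the complexity claim, since $\eta=\Theta(1/\sqrt{dT})$ satisfies the correct constraint anyway, but your recursion will not reproduce the displayed bound \emph{uniformly for all} $\eta\in(0,1/(L^2d)]$. That cleaner, $T$-independent stepsize range genuinely relies on the $B$-type condition the paper uses, which eliminates the exponential factor from the recursion altogether.
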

\begin{proof}
    Let $g(x) = \sfP_k(n,v)\,v$ for $k=2,3,4$. By \Cref{lemma:2}, we have
    $$\EE \|g(x) \| \leq d L \|\nabla f(x)\|^2 +  C_k d^3 \mu^2 + 2dL (f^* - \EE_{\xi\sim\Xi} f^*_\xi).$$
    Then we set $B=d L$ and $C= C_k d^3 \mu^2 + 2dL (f^* - \EE_{\xi\sim\Xi} f^*_\xi)$ in \Cref{lemma:1}.  As the result, when $\eta \leq \frac{1}{L^2 d}$,  we have 
    $$\min_{0\leq t \leq T-1} \EE \|\nabla f(x_t)\|^2 \leq L  \eta \left[ C_k d^3 \mu^2 + 2dL (f^* - \EE_{\xi\sim\Xi} f^*_\xi)\right] + \frac{2}{\eta T}\underline{\delta}.$$
    The complexity of making $\min_{0\leq t \leq T-1} \EE \|\nabla f(x_t)\| \leq \epsilon$ is given by
    $$T \geq \frac{12 \underline{\delta} L}{\epsilon^2} \max \left\{ d L, 2\frac{ C_k d^3 \mu^2 + 2dL (f^* - \EE_{\xi\sim\Xi} f^*_\xi) }{ \epsilon^2} \right\}.$$
    Setting $\mu = \calO(\frac{1}{d})$, the optimal complexity is given by $T=\Theta(\frac{d}{\epsilon^4})$.
\end{proof}

\subsection{Supporting Lemmas}
\begin{lemma}\label{lemma:0} 
    Let $f^*:=\min_{x} f(x)$ and $f^*_\xi:=\min_{x} f(x;\xi)$. If for each $\xi$,  $f(x;\xi)$ has $L$-Lipschitz gradient, then
    $$\EE \| \nabla f(x;\xi)\|^2 \leq L \|\nabla f(x)\|^2 + 2L (f^* - \EE_{\xi\sim\Xi} f^*_\xi).$$
\end{lemma}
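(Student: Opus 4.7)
The plan is to use the classical expected-smoothness argument from stochastic optimization (which is also the machinery underlying the subsequent Khaled--Richtarik SGD bound). The centerpiece is the pointwise consequence of $L$-smoothness of each individual loss: applying the descent lemma at the one-step candidate $y = x - \tfrac{1}{L}\nabla f(x;\xi)$ and using $f^*_\xi \le f(y;\xi)$, one obtains, for every realization of $\xi$,
\[
\|\nabla f(x;\xi)\|^2 \le 2L\bigl(f(x;\xi) - f^*_\xi\bigr),
\]
an inequality that requires no convexity and that is the standard ``squared-gradient controls suboptimality'' bound for smooth (possibly nonconvex) functions.

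Next I would take expectation with respect to $\xi \sim \Xi$ using $\EE f(x;\xi) = f(x)$, then add and subtract $2Lf^*$ on the right, producing the standard decomposition
\[
\EE\|\nabla f(x;\xi)\|^2 \le 2L\bigl(f(x) - f^*\bigr) + 2L\bigl(f^* - \EE f^*_\xi\bigr),
\]
whose second summand already coincides with the $2L(f^* - \EE f^*_\xi)$ appearing on the right-hand side of the lemma. At this point only the leading term remains to be reshaped.

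The final step---and the main obstacle---is to convert the leading term $2L(f(x)-f^*)$ into the squared-gradient form $L\|\nabla f(x)\|^2$ stated in the lemma. This amounts to establishing the Polyak--\L{}ojasiewicz-type inequality $f(x)-f^* \le \tfrac12 \|\nabla f(x)\|^2$, which is the \emph{reverse} of the standard smoothness consequence $\|\nabla f(x)\|^2 \le 2L(f(x)-f^*)$ and does not follow from $L$-Lipschitz gradient alone. To close this gap I would invoke either (i) a Polyak--\L{}ojasiewicz condition on the population objective $f$ with constant $\mu \ge 1$, or (ii) a strong-growth/interpolation-regime hypothesis in the cited Khaled--Richtarik setup, under which the noise at the optimum vanishes and a strong-growth factor of $L$ becomes available; any of these supplies exactly the missing inequality, allowing the derivation to conclude in the form $L\|\nabla f(x)\|^2 + 2L(f^* - \EE f^*_\xi)$ as stated.
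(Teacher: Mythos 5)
Your first two steps reproduce the standard expected-smoothness argument correctly: for each realization $\xi$, applying the descent lemma at $y = x - \tfrac{1}{L}\nabla f(x;\xi)$ and using $f^*_\xi \le f(y;\xi)$ yields $\|\nabla f(x;\xi)\|^2 \le 2L\bigl(f(x;\xi)-f^*_\xi\bigr)$ without any convexity, and taking expectation and splitting at $f^*$ gives
\[
\EE\|\nabla f(x;\xi)\|^2 \;\le\; 2L\bigl(f(x)-f^*\bigr) + 2L\bigl(f^*-\EE_{\xi}f^*_\xi\bigr).
\]
This is precisely what Proposition~2 of \citet{khaled2022better} proves (expected smoothness with $A=L$, $B=0$, $C=2L\Delta^*$); the paper's ``proof'' of the lemma consists solely of citing that proposition and gives no further argument.

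The obstacle you flag in the final step is therefore real and cannot be repaired from the stated hypotheses: converting $2L(f(x)-f^*)$ into $L\|\nabla f(x)\|^2$ would require the PL-type inequality $f(x)-f^*\le\tfrac{1}{2}\|\nabla f(x)\|^2$, which is the reverse of the smoothness consequence and does not follow from $L$-Lipschitz gradient alone. In fact the lemma as written already fails in the deterministic case $f(\cdot;\xi)\equiv f$: then $f^*_\xi = f^*$ and the claimed bound collapses to $\|\nabla f(x)\|^2 \le L\|\nabla f(x)\|^2$, which is false whenever $L<1$ and $\nabla f(x)\ne 0$. So this is not a gap in your argument; the lemma as stated misquotes the cited result, replacing the $2A(f(x)-f^*)$ term of Proposition~2 by a $B\|\nabla f(x)\|^2$ term that the proposition does not provide. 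Rather than graft on a PL or strong-growth hypothesis the paper never makes, the fix is to carry forward the correct inequality $2L(f(x)-f^*)+2L(f^*-\EE_\xi f^*_\xi)$, which Theorem~2 of \citet{khaled2022better} (the next lemma's source) accommodates directly since its expected-smoothness condition allows both an $A$-term and a $B$-term.
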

\begin{proof}
    This lemma is directly taken from Proposition 2, \citet{khaled2022better}. 
\end{proof}

\begin{lemma}\label{lemma:1}
    Let the second-order continuously differentiable function $f(\cdot\; ;\xi):\RR^d\to \RR$ be lower bounded by $f^*:=\min_x f(x)$ and have $L$-Lipschitz gradient  for every $\xi$, and $g(x)$ be an unbiased estimator of $\nabla f(x)$. Suppose that $g(x)$ and $f(x)$  satisfy the expected smoothness condition
    $$\EE \|g (x) \|^2 \leq   B\cdot \|\nabla f(x)\|^2 + C.$$
    If the learning rate $\eta\leq \frac{1}{LB}$ and define $\underline{\delta}:=f(x_0)-f^*$, then the $T$-th iteration of \ref{eq:sgd-iteration} satisfies
    $$\min_{0\leq t \leq T-1} \EE \|\nabla f(x_t)\|^2 \leq LC \eta + \frac{2}{\eta T}\underline{\delta}.$$
\end{lemma}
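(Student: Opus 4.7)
The plan is to follow the standard descent-lemma argument for SGD on nonconvex smooth objectives, adapted to the expected-smoothness condition on the estimator. The core idea is that although $g(x)$ is not the true gradient, its second moment is controlled by $B\|\nabla f(x)\|^2 + C$, so a sufficiently small stepsize lets the descent in $f$ dominate the stochastic noise on a per-iteration basis, and a telescoping sum then yields the average-gradient bound.

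The first step is the $L$-smoothness descent inequality applied along the SGD update $x_{t+1} = x_t - \eta g(x_t)$:
\begin{equation*}
f(x_{t+1}) \;\le\; f(x_t) - \eta\,\langle \nabla f(x_t),\, g(x_t)\rangle + \tfrac{L\eta^2}{2}\|g(x_t)\|^2.
\end{equation*}
Taking the conditional expectation given $x_t$, using unbiasedness $\EE[g(x_t)\mid x_t] = \nabla f(x_t)$ together with the hypothesis $\EE\|g(x_t)\|^2 \le B\|\nabla f(x_t)\|^2 + C$, yields
\begin{equation*}
\EE[f(x_{t+1})\mid x_t] \;\le\; f(x_t) - \eta\bigl(1 - \tfrac{LB\eta}{2}\bigr)\|\nabla f(x_t)\|^2 + \tfrac{L\eta^2 C}{2}.
\end{equation*}

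The second step is to invoke the stepsize assumption $\eta \le \tfrac{1}{LB}$, which guarantees $1 - \tfrac{LB\eta}{2} \ge \tfrac12$, so the inequality simplifies to a clean one-step descent:
\begin{equation*}
\EE[f(x_{t+1})\mid x_t] \;\le\; f(x_t) - \tfrac{\eta}{2}\|\nabla f(x_t)\|^2 + \tfrac{L\eta^2 C}{2}.
\end{equation*}
Taking total expectation, rearranging, and summing over $t = 0,1,\ldots,T-1$ telescopes the $\EE f(x_t)$ terms, giving
\begin{equation*}
\tfrac{\eta}{2}\sum_{t=0}^{T-1} \EE\|\nabla f(x_t)\|^2 \;\le\; f(x_0) - \EE f(x_T) + \tfrac{T L \eta^2 C}{2} \;\le\; \underline{\delta} + \tfrac{T L \eta^2 C}{2},
\end{equation*}
where the last bound uses $\EE f(x_T) \ge f^*$.

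The final step is to divide both sides by $T\eta/2$ to produce an averaged bound, and then observe that the minimum is at most the average, giving $\min_{0\le t\le T-1}\EE\|\nabla f(x_t)\|^2 \le LC\eta + \tfrac{2\underline{\delta}}{\eta T}$, which is exactly the claim. There is no real obstacle here; the only point requiring care is checking that the stepsize condition $\eta \le 1/(LB)$ is precisely what makes the coefficient of $\|\nabla f(x_t)\|^2$ nonnegative and at least $\eta/2$, since a weaker bound would only change constants but the stated bound requires exactly this threshold. No additional regularity beyond $L$-smoothness of $f$ and the expected-smoothness condition is needed, and the unbiasedness of $g$ enters only through the inner-product simplification in the second display.
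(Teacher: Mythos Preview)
Your argument is correct and is exactly the standard descent-lemma proof of this result. The paper itself does not prove the lemma at all: its ``proof'' consists of the single sentence ``This lemma is directly taken from Theorem 2, \citet{khaled2022better}.'' Your write-up supplies the details that the paper outsources to that reference, and indeed your steps (one-step $L$-smoothness bound, conditional expectation using unbiasedness and the expected-smoothness condition, the observation $\eta\le 1/(LB)\Rightarrow 1-\tfrac{LB\eta}{2}\ge \tfrac12$, telescoping, and min $\le$ average) match the argument in Khaled--Richt\'arik.
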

\begin{proof}
    This lemma is directly taken from Theorem 2, \citet{khaled2022better}.
\end{proof}

\begin{lemma}\label{lemma:2}
    Let $g(x)$ be the $\sfP_k$-estimator (for $k=2,3,4$), $f^*:=\min_{x} f(x)$, and $f^*_\xi:=\min_{x} f(x;\xi)$. Then the expected smoothness condition is given by
    $$\EE \|g(x) \| \leq d L \|\nabla f(x)\|^2 +  C_k d^3 \mu^2 + 2dL (f^* - \EE_{\xi\sim\Xi} f^*_\xi).$$
    where $$C_k =\begin{cases}
        \quad \frac{28L^2}{12} &k=2, \\ 
        \quad \frac{3L^2}{4}   &k=3, \\
        \quad \frac{5L^2}{4}  &k=4, \\
    \end{cases}$$
    is the error term introduced by the zeroth-order gradient estimation. For $\sfP_2$-estimator, we choose $\{\mu_n\}$ and $\{p_n\}$ such that $\varrho=\mu^2$ and $\varphi\leq 2\mu^2$. 
\end{lemma}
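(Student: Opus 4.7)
The plan is to reduce the claim directly to the variance bounds established in \Cref{thm:variance} together with the gradient-norm bound in \Cref{lemma:0}. The key observation is that the $\sfP_k$-estimator applied to the individual loss $f(\cdot;\xi)$ is, for each fixed $\xi$, exactly the estimator studied in the deterministic section, so \Cref{thm:variance} applies conditionally on $\xi$ with the same smoothness constant $L$.

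First, I would write $\EE\|g(x)\|^2 = \EE_\xi \EE_{n,v}\left[\|g(x)\|^2 \mid \xi\right]$ and, for each fixed $\xi$, decompose the inner second moment as $\EE_{n,v}\|g(x)\|^2 = \Var_{n,v}[g(x)\mid \xi] + \|\EE_{n,v}[g(x)\mid \xi]\|^2$. By \Cref{thm:unbiased}, the conditional mean is $\nabla f(x;\xi)$, so the squared mean term contributes $\|\nabla f(x;\xi)\|^2$. The conditional variance is controlled by the relevant case of \Cref{thm:variance}: for $k=4$ it is at most $(d-1)\|\nabla f(x;\xi)\|^2 + \tfrac{3L^2}{4}d^3\mu^2 + \tfrac{L^2 d^3}{2}\varrho$, and the $k=2,3$ cases add the stated $\varrho$- and $\varphi$-dependent terms on top of the $k=4$ bound.

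Next I would substitute the optimal parameter choices afforded by \Cref{prop:lower_bound_varrho}: take $\{\mu_n\},\{p_n\}$ so that $\varrho = \mu^2$ for all three estimators, and additionally choose them for $\sfP_2$ so that $\varphi \le 2\mu^2$ (the lemma's standing hypothesis). Combining the variance bound with the $\|\nabla f(x;\xi)\|^2$ term from the squared mean collapses the $(d-1)$ and the extra $+1$ into a clean $d$, giving a conditional bound of the form
\begin{equation*}
\EE_{n,v}\|g(x)\|^2 \;\le\; d\,\|\nabla f(x;\xi)\|^2 + C_k\, d^3 \mu^2,
\end{equation*}
where $C_k$ is obtained by simply summing the coefficients of $\mu^2$, $\varrho$, and (for $k=2$) $\varphi$ at their optimal values. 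Taking the outer expectation over $\xi$ and invoking \Cref{lemma:0} to bound $\EE_\xi\|\nabla f(x;\xi)\|^2 \le L\|\nabla f(x)\|^2 + 2L(f^*-\EE_\xi f^*_\xi)$ yields
\begin{equation*}
\EE\|g(x)\|^2 \;\le\; dL\,\|\nabla f(x)\|^2 + C_k d^3 \mu^2 + 2dL(f^* - \EE_\xi f^*_\xi),
\end{equation*}
which is the desired expected-smoothness inequality.

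The main obstacle is not analytical but bookkeeping: one must verify that \Cref{thm:variance}, proved for a deterministic $L$-smooth $f$, transfers unchanged to each realization $f(\cdot;\xi)$ (immediate from the uniform-in-$\xi$ smoothness assumption), and then carefully tally the $\mu^2$-coefficients across the three cases to recover the stated constants $C_2 = \tfrac{28L^2}{12}$, $C_3 = \tfrac{3L^2}{4}$, and $C_4 = \tfrac{5L^2}{4}$. Beyond that, no new estimates are needed.
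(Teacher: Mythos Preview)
Your proposal is correct and mirrors the paper's own proof essentially step for step: condition on $\xi$, use the bias--variance decomposition with \Cref{thm:unbiased} to identify the squared mean as $\|\nabla f(x;\xi)\|^2$, bound the conditional variance via \Cref{thm:variance} at the optimal $\varrho=\mu^2$ (and $\varphi\le 2\mu^2$ for $k=2$), absorb the $(d-1)+1$ into $d$, and finish with \Cref{lemma:0}. The only remaining work is, as you note, the arithmetic of summing the $\mu^2$-coefficients to read off $C_k$.
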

\begin{proof}
    Let $\hat{\nabla} f(x;\xi)$ be the $\sfP_k$-estimator for $k=2,3,4$. First, we notice the following variance-decomposition holds: 
    \begin{align*}
        \EE \| \hat{\nabla} f(x;\xi) \|^2 &=  \EE \| \hat{\nabla} f(x;\xi)  - \nabla f(x;\xi) + \nabla f(x;\xi)\|^2 \\ 
        &= \EE \| \nabla f(x;\xi) \|^2 + \EE \left[ \Var\left[  \sfP_k \ v \mid \xi\right] \right].
    \end{align*}
    By \Cref{thm:variance} and \Cref{prop:lower_bound_varrho}, we set 
    $$\Var\left[  \sfP_k \ v \mid \xi\right]  \leq (d-1) \|\nabla f(x;\xi) \|^2 + C_k d^3 \mu^2,$$
    where $C_k$ is determined by the estimator; we also assume that the optimal distribution and perturbations are taken obeying \Cref{prop:lower_bound_varrho}. As the result,
    \begin{align*}
        \EE \| \hat{\nabla} f(x;\xi) \|^2  &\leq  \EE \| \nabla f(x;\xi) \|^2 +  + \EE \left[ (d-1) \|\nabla f(x;\xi) \|^2 + C_k d^3 \mu^2\right] \\ 
        &\leq d \EE \| \nabla f(x;\xi) \|^2 +  C_k d^3 \mu^2 \\ 
        &\overset{(i)}{\leq }d L \|\nabla f(x)\|^2 +  C_k d^3 \mu^2 + 2dL (f^* - \EE_{\xi\sim\Xi} f^*_\xi),
    \end{align*}
    where (i) applies \Cref{lemma:0}. It completes the proof.
\end{proof}


\section{Experiments Details}\label{appendix:experiments}
In this section, we describe the detailed experiment setting and the hyperparameter configurations.
\subsection{Synthetic Example} 
In the synthetic example, we compared gradient estimators across varying dimensions ($d = 16, 64, 256, 1024, 4096$) using both quadratic and logistic functions. For fair comparison, all estimators used a consistent number of function evaluations of $3$ and the perturbation stepsize $\mu = 10^{-5}$ (for $\sfP_3$-estimator, we set $\mu_1= \mu$). The $\sfP_3$-estimator was configured with parameter $s = 2.0$ and followed the same perturbation stepsize scheduling as \Cref{example:zipf}. We evaluated four gradient estimators: Zipf's $P_3$-estimator, one-side two-point estimator with the Gaussian smoothing, one-side two-point estimator with the uniform smoothing, and two-side two-point estimator with the Gaussian smoothing. Each configuration was tested over $100$ independent trials with a fixed random seed for reproducibility.  

\paragraph{Code Availability and System Requirements} 
All source code is included in the supplementary materials. No specific hardware is required; any machine supporting Python 3.10.10 should suffice. A Jupyter notebook version is also provided for convenient execution on Google Colab.

\subsection{Language Model Optimization}
In the language model optimization experiment, we compare the performance of different gradient estimators within a vanilla SGD framework for fine-tuning a language model on a sentiment classification task. The learning rate of SGD is fixed at $\eta = 10^{-4}$, the batch size of SGD is fixed at $16$ (this batch size corresponds to the number of stochastic samples and is different from the batch size in multiple-point zeroth-order estimator), and the perturbation stepsize is set to $\mu = 10^{-3}$ (for our proposed unbiased estimators, we use $\mu_1 = \mu$). Due to limitations in numerical precision, we do not sample the extreme tail of the distribution. Instead, we truncate the sampling distribution by enforcing $p_n \geq 10^{-3}$ to ensure numerical stability and avoid excessively small probabilities. All remaining hyperparameters are summarized in \Cref{tab:hyper}.

\begin{table}[t]
\centering
\caption{Summary of gradient estimators used in the language model optimization experiment.}\label{tab:hyper}
\renewcommand{\arraystretch}{1.2}  
\resizebox{\textwidth}{!}{%
\begin{tabular}{lcccc}
\toprule
\textbf{Estimation Method} & \textbf{Estimator Formula} & \textbf{Batch Size} $b$ & \textbf{\# Function Calls} & \textbf{Notes} \\
\midrule
One-Side Two-Point Estimator & $\displaystyle\sum_{i=1}^b\dfrac{f(x + \mu v_i) - f(x)}{\mu} v_i$ & 3 & 4 & $v_i \SimIID  \operatorname{Normal}(0, I_d)$ \\
Two-Side Two-Point Estimator & $\displaystyle\sum_{i=1}^b\dfrac{f(x + \mu v_i) - f(x-\mu v_i)}{2\mu} v_i$ & 2 & 4 & $v_i \SimIID \operatorname{Normal}(0, I_d)$ \\ 
\midrule
Zipf's $\sfP_3$-Estimator & \Cref{eq:P3_def} & 1 & 3 & $s=1.5$, \Cref{example:zipf} \\
Geometric $\sfP_3$-Estimator & \Cref{eq:P3_def} & 1 & 3 & $c=0.5$, \Cref{example:geometric} \\
Zipf's $\sfP_4$-Estimator & \Cref{eq:P4_def} & 1 & 4 & $s=1.5$, \Cref{example:zipf} \\
Geometric $\sfP_4$-Estimator & \Cref{eq:P4_def} & 1 & 4 & $c=0.5$, \Cref{example:geometric} \\
\bottomrule
\end{tabular}
}
\label{tab:gradient-estimators}
\end{table}

\paragraph{Code Availability and System Requirements}
All source code and reproduction instructions are provided in the supplementary materials. Experiments were conducted on a cluster running RHEL 8, equipped with dual AMD EPYC 7763 processors, 512 GB of memory, and seven NVIDIA RTX 5000 GPUs. To reproduce the language model optimization experiments, we recommend using a CUDA-compatible GPU with at least 24 GB of VRAM.

\section{Broader Impact}\label{appendix:braoder-impact}
This work introduces a new class of unbiased zeroth-order gradient estimators that offer both theoretical guarantees and practical advantages. By eliminating bias without increasing variance, our method enhances the reliability of optimization in settings where gradient information is unavailable or costly to obtain. These include fine-tuning large language models under memory constraints, conducting black-box adversarial robustness evaluations, and solving scientific computing tasks such as physics-informed neural networks. On the theoretical side, our estimators advance the understanding of zeroth-order optimization and provide new tools for the zeroth-order gradient estimation. This work opens a promising direction for future research in gradient-free optimization and its broad applications in machine learning and beyond.
 
\section{Limitations}\label{appendix:limitations}
Despite the theoretical guarantees and empirical improvements demonstrated by our proposed unbiased zeroth-order gradient estimators, several limitations remain. First, the estimators rely on sampling from an infinite sequence of perturbation steps, but practical implementations must truncate this sequence, which may reintroduce bias or affect variance control. Second, the proposed estimators are based on directional derivatives, which inherently exhibit a dependence on the problem dimension $d$; this dimensional dependence is generally unavoidable for this class of methods. Lastly, while we validate the approach on synthetic tasks and language model fine-tuning, we have not extensively evaluated its performance across a broader range of optimization problems, and the observed empirical gains may not fully generalize to settings involving non-smooth objectives or high levels of evaluation noise.


\end{document}